\documentclass[11pt]{article}

\usepackage[utf8]{inputenc} 
\usepackage[T1]{fontenc}    
\usepackage{hyperref}       
\usepackage{url}            
\usepackage{booktabs}       
\usepackage{amsfonts}       
\usepackage{nicefrac}       
\usepackage{tablefootnote}  

\usepackage[x11names]{xcolor}
\usepackage{multirow}
\usepackage{adjustbox}

\usepackage{makecell}
\usepackage{colortbl}

\usepackage{fullpage}
\usepackage{wrapfig}

\usepackage[protrusion=true,expansion=true]{microtype}

\usepackage{algorithm}
\usepackage{algorithmic}
\usepackage{amsthm} 
\usepackage{amsmath}
\usepackage{amssymb}
\usepackage{float}
\usepackage{mathtools}
\usepackage{url}
\usepackage{graphicx,times}
\usepackage{fancyheadings}
\usepackage{subfigure}
\usepackage{bm}

\usepackage{threeparttable}
\DeclareMathOperator*{\argmin}{arg\,min}

\newcommand{\norm}[1]{\left\lVert#1\right\rVert}

\usepackage{pifont}%
\newcommand{\cmark}{\ding{51}}%
\newcommand{\xmark}{\ding{55}}%

\allowdisplaybreaks

\newtheorem{theorem}{Theorem}
\newtheorem{proposition}{Proposition} 
\newtheorem{lemma}{Lemma} 
\newtheorem{assumption}{Assumption} 
 
\newtheorem{definition}{Definition}

\usepackage{cleveref}


\title{SimFBO: Towards  
Simple, Flexible and Communication-efficient Federated Bilevel Learning}

%

\author{%
  Yifan Yang, Peiyao Xiao and Kaiyi Ji
  \\
  Department of Computer Science and Engineering\\
  University at Buffalo\\
  \texttt{\{yyang99, peiyaoxi, kaiyiji\}@buffalo.edu} \\
}
\date{May 30, 2023}

\begin{document}

\maketitle

\vspace{0.3cm}

\begin{abstract}
\noindent
Federated bilevel optimization (FBO) has shown great potential recently in machine learning and edge computing due to the emerging nested optimization structure in meta-learning, fine-tuning, hyperparameter tuning, etc. However, existing FBO algorithms often involve complicated computations and require multiple sub-loops per iteration, each of which contains a number of communication rounds. In this paper, we propose a simple and flexible FBO framework named SimFBO, which is easy to implement without sub-loops, and includes a generalized server-side aggregation and update for improving communication efficiency. We further propose System-level heterogeneity robust FBO (ShroFBO) as a variant of SimFBO with stronger resilience to heterogeneous local computation. We show that SimFBO and ShroFBO provably achieve a linear convergence speedup with partial client participation and client sampling without replacement, as well as improved sample and communication complexities. Experiments demonstrate the effectiveness of the proposed methods over existing FBO algorithms. 
\end{abstract}



\section{Introduction}
Recent years have witnessed significant progress in a variety of emerging areas 
including meta-learning and fine-tuning~\cite{finn2017model, rajeswaran2019meta}, automated hyperparameter optimization~\cite{franceschi2018bilevel, feurer2019hyperparameter}, reinforcement learning~\cite{konda1999actor, hong2020two}, fair batch selection in machine learning~\cite{roh2021fairbatch}, adversarial learning~\cite{zhang2022revisiting, liu2021investigating}, AI-aware communication networks~\cite{ji2023network}, fairness-aware federated learning~\cite{zeng2021improving}, etc. These problems share a common nested optimization structure, and have inspired intensive study on the theory and algorithmic development of bilevel optimization. Prior efforts have been taken mainly on the single-machine scenario. However, in modern machine learning applications, data privacy has emerged as a critical concern in centralized training, and the data often exhibit an inherently distributed nature~\cite{xing2016strategies}. This highlights the importance of recent research and attention on federated bilevel optimization, and has inspired many emerging applications including but not limited to federated meta-learning~\cite{fallah2020personalized}, hyperparameter tuning for federated learning~\cite{huang2022federated}, resource allocation over communication networks~\cite{ji2023network} and graph-aided federated learning~\cite{xing2022big}, adversarial robustness on edge computing~\cite{manoharan2022svm}, etc. In general, the federated bilevel optimization problem takes the following mathematical formulation.    
\begin{align}\label{eq:intro}
&\min_{x\in\mathbb{R}^{p}} \Phi(x)=F\big(x, y^*(x)\big) : =\sum_{i=1}^{n} p_if_i(x,y^*(x)) = \sum_{i=1}^{n} p_i\mathbb{E}_{\xi}\Big[f_i\big(x,y^*(x);\xi_i\big)\Big] \nonumber \\ 
&\;\;\mbox{s.t.} \; y^*(x)= \argmin_{y\in\mathbb{R}^q} G(x, y) : =\sum_{i=1}^{n} p_ig_i(x,y) = \sum_{i=1}^{n} p_i\mathbb{E}_{\zeta}\big[g_i(x,y;\zeta_i)\big]
\end{align}
where $n$ is the total number of clients, the outer- and inner-functions $f_i(x,y)$ and $g_i(x,y)$ for each client $i$ take the expectation forms w.r.t.~the random variables $\xi_i$ and $\zeta_i$, and are jointly continuously differentiable. However, efficiently solving the federated problem in \cref{eq:intro} suffers from several main challenges posed by the federated hypergradient (i.e., $\nabla\Phi(x)$) computation that contains the second-order global Hessian-inverse matrix, the lower- and upper-level data and system-level heterogeneity, and the nested optimization structure. To address these issues, \cite{huang2022federated, tarzanagh2022fednest,huang2023achieving,huang2022fast} proposed approximate implicit differentiation (AID)-based federated bilevel algorithms, which applied the idea of non-federated AID-based estimate in \cite{ghadimi2018approximation} to the federated setting, and involve two sub-loops for estimating the global lower-level solution $y^*(x)$ and the Hessian-inverse-vector product, respectively. \cite{xiao2023communication} then 
proposed AggITD by leveraging the idea of iterative differentiation, which improved the communication efficiency of AID-based approaches by synthesizing the lower-level optimization and the hypergradient computation into the same communication sub-loop. However, some limitations still remain in these approaches. 
\begin{list}{$\bullet$}{\topsep=0.2ex \leftmargin=0.2in \rightmargin=0.in \itemsep =0.01in}
\item First, the sub-loops, each with a large number of communication rounds, often compute products of a series of matrix-vector products, and hence can complicate the implementation and increase the communication cost. 
\item Second, the practical client sampling {\bf without} replacement has not been studied in these methods due to challenges posed by the nested structure of AID- and ITD-based federated hypergradient estimators. 
\item Third, as observed in the single-level federated learning~\cite{wang2020tackling}, in the presence of heterogeneous system capabilities such as diverse computing power and storage, clients can take a variable number of local updates or use different local optimizers, which may make these FBO algorithms converge to the stationary point of a different objective. 
\end{list}


\begin{figure}[t]
        \vspace{-2mm}
	\centering    
	{\label{fig1:threecompare}\includegraphics[width=159mm]{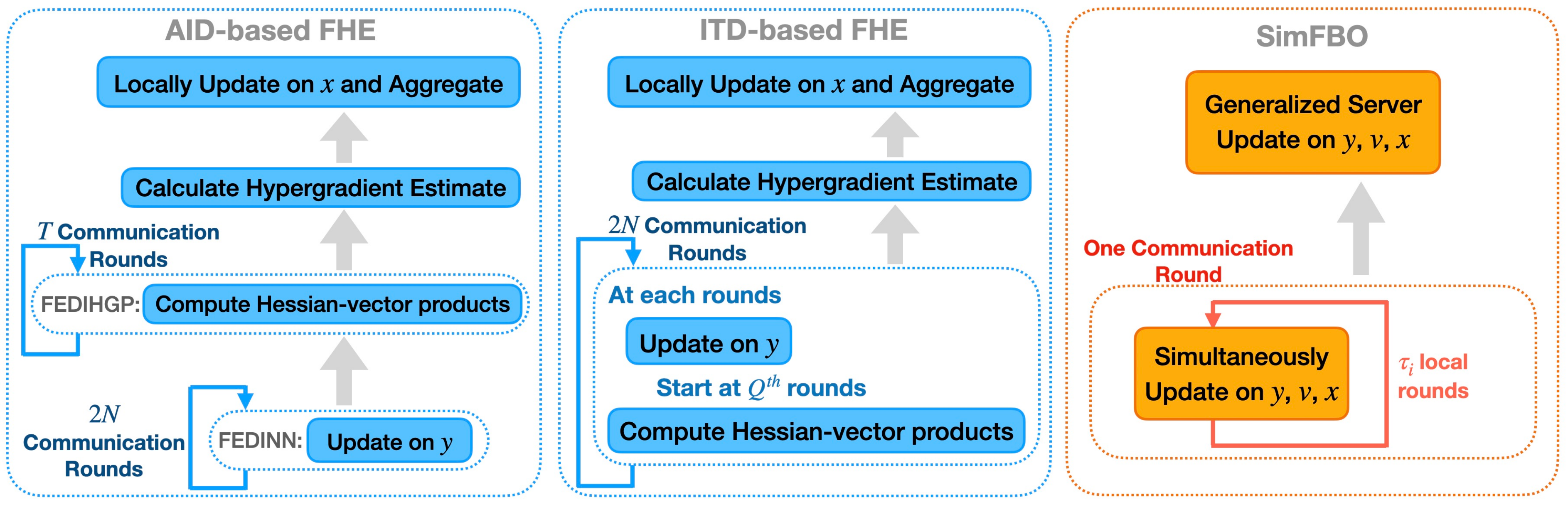}}
         \vspace{-0.2cm}
	\caption{Comparison of AID-based federated hypergradient estimation (FHE) in FedNest~\cite{tarzanagh2022fednest} (left), ITD-based FHE in AggITD~\cite{xiao2023communication} (middle) and our proposed SimFBO (right) at each iteration.}\label{fig:threecompare}
	  \vspace{-0.3cm}
\end{figure}

\subsection{Our Contributions}
In this paper, we propose a communication-efficient federated bilevel method named SimFBO, which is simple to implement without sub-loops, flexible with a generalized server-side update, and resilient to system-level heterogeneity. Our specific contributions are summarized below.   

\noindent
{\bf A simple and flexible implementation.} As illustrated in \Cref{fig:threecompare}, differently from AID- and ITD-based approaches that contain multiple sub-loops of communication rounds at each iteration, our proposed SimFBO is simpler with a single communication round per iteration, in which three variables $y,x$ and $v$ are updated simultaneously for optimizing the lower- and upper-level problems, and approximating the Hessian-inverse-vector product. SimFBO also includes a generalized server-side update on $x,y,v$, which accommodates the client sampling without replacement, and allows for a flexible aggregation to improve the communication efficiency.

\noindent
{\bf Resilient server-side updates to system-level heterogeneity.}
In the presence of heterogeneous local computation, we show that the naive server-side aggregation can lead to the convergence to a stationary point of a different objective. To this end, we propose System-level heterogeneity robust FBO (ShroFBO) building on a normalized version of the generalized server-side update with correction, which provably converges to a stationary point of the original objective.


\noindent
{\bf Convergence analysis and improved complexity.} As shown in \Cref{tb:FBOs}, our SimFBO and ShroFBO both achieve a sample complexity (i.e., the number of samples needed to reach an $\epsilon$-accurate stationary point) of $\mathcal{O}(\epsilon^{-2}P^{-1})$, which matches the best result obtained by FedMBO~\cite{huang2023achieving} but under a more practical client sampling without replacement. Moreover, SimFBO and ShroFBO both achieve the best communication complexity (i.e., the number of communication rounds to reach an $\epsilon$-accurate stationary point) of $\mathcal{O}(\epsilon^{-1})$, which improves  those of other methods by an order of $\epsilon^{-1/2}$. Technically, we develop novel analysis in characterizing the client drifts by the three variables, bounding the per-iteration progress in the global $y$ and $v$ updates, and proving the smoothness and bounded variance in local $v$ updates via induction, which may be of independent interest.


\noindent
{\bf Superior performance in practice.} In the experiments, the proposed SimFBO method significantly improves over existing strong federated bilevel baselines such as AggITD, FedNest and LFedNest in both the i.i.d.~and non-i.i.d.~settings. We also validate the better performance of ShroFBO in the presence of heterogeneous local computation due to the resilient server-side updates.





\begin{table*}[t]
\centering
 \renewcommand{\arraystretch}{1.25}
 \scalebox{0.99}{
    \begin{tabular}{l|c|c|c|c|c|c}
    \hline  \hline
{\bf Algorithm} & \thead{System-level \\heterogeneity }& \thead{Partial \\participation} & \thead{Without \\ replacement} &  \thead{Linear \\speedup} & \thead{Samples \\complexity} & \thead{Communication \\ complexity}  
\\ \hline \hline  
    FedNest~\cite{tarzanagh2022fednest} & \xmark & \xmark & \xmark & \xmark & $\mathcal{O}(\epsilon^{-2})$ & $\mathcal{O}(\epsilon^{-2})$ 
    \\ \hline 
    FBO-AggITD~\cite{xiao2023communication} & \xmark & \xmark & \xmark & \xmark & $\mathcal{O}(\epsilon^{-2})$ & $\mathcal{O}(\epsilon^{-2})$ 
     \\ \hline 
    FedBiO~\cite{li2023communication} & \xmark & \xmark & \xmark & \cmark & $\mathcal{O}(\epsilon^{-2.5}n^{-1})$ & $\mathcal{O}(\epsilon^{-1.5})$ 
    \\ \hline 
    FedMBO~\cite{huang2023achieving} & \xmark & \cmark & \xmark & \cmark & $\mathcal{O}(\epsilon^{-2}P^{-1})$ &  $\mathcal{O}(\epsilon^{-2})$
    \\ \hline 
    \cellcolor{blue!6}{SimFBO (this paper)} & \cellcolor{blue!6}{\xmark} & \cellcolor{blue!6}{\cmark} & \cellcolor{blue!6}{\cmark}  & \cellcolor{blue!6}{\cmark} & \cellcolor{blue!6}{$\mathcal{O}(\epsilon^{-2}P^{-1})$} & \cellcolor{blue!6}{$\mathcal{O}(\epsilon^{-1})$} 
    \\ \hline 
    \cellcolor{blue!6}{ShroFBO (this paper)} & \cellcolor{blue!6}{\cmark} & \cellcolor{blue!6}{\cmark} & \cellcolor{blue!6}{\cmark} & \cellcolor{blue!6}{\cmark} & \cellcolor{blue!6}{$\mathcal{O}(\epsilon^{-2}P^{-1})$} & \cellcolor{blue!6}{$\mathcal{O}(\epsilon^{-1})$} 
    \\ \hline 
    \end{tabular} }
    \caption{Comparison of different federated bilevel algorithms in the setting with heterogeneous data. We do not include the methods with momentum-based acceleration for a fair comparison. $P\leq n$ is the number of sampled clients in each communication round. }
     \label{tb:FBOs}
     \vspace{-0.2cm}
\end{table*}

\section{SimFBO: A Simple and Flexible Framework}
\subsection{Preliminary: Federated Hypergradient Computation}
The biggest challenge in FBO is to compute the federated hypergradient $\nabla \Phi(x)$ due to the implicit and complex dependence of $y^*(x)$ on $x$. Under suitable assumptions and using the implicit function theorem in~\cite{griewank2008evaluating}, it has been shown that the $\nabla\Phi(x)$ takes the form of 
\begin{align}
    \nabla\Phi(x) &=  \sum_{i=1}^n p_i\nabla_xf_i(x,y^*) - \nabla_{xy}^2G(x,y^*)\big[\nabla_{yy}^2G(x,y^*)\big]^{-1}\sum_{i=1}^n p_i\nabla_yf_i(x,y^*) \nonumber
\end{align}
which poses several computational challenges in the  federated setting. First, 
the second term at the right-hand side contains three global components in a nonlinear manner, and hence the direct aggregation of local hypergradients given by  
$$\sum_{i=1}^n p_i \big(\nabla_xf_i(x,y^*) - \nabla_{xy}^2g_i(x,y^*)\big[\nabla_{yy}^2g_i(x,y^*)\big]^{-1}\nabla_yf_i(x,y^*)\big )$$
is a biased estimation of $\nabla\Phi(x)$ due to the client drift. 
Second, it is infeasible to compute, store and communicate the second-order Hessian-inverse and Jacobian matrices due to the limited computing and communication resource. Although various AID- and ITD-based approaches have been proposed to address these challenges, they still suffer from several limitations (as we point out in the introduction) such as complicated implementation, high communication cost, lack of client sampling without replacement, and vulnerability to the system-level heterogeneity. To this end, we propose a simple, flexible and communication-efficient FBO framework named SimFBO in this section.  

\subsection{Federated Hypergradient Surrogate}
To estimate the federated hypergradient efficiently, we use the surrogate 
   $\bar{\nabla}F(x,y,v) = \nabla_x F(x,y) - \nabla_{xy}^2G(x,y)v$, 
where $v \in R^{d_y}$ is an auxiliary vector. Then, it suffices to find $y$ and $v$ as efficient estimates of the solutions to the global lower-level problem and the global linear system (LS) {\small$\nabla_{yy}^2G(x,y)v=\nabla_yF(x,y)$} that is equivalent to solving  
following quadratic programming.
\begin{align}\label{def:R}
  \min_v \;   R(x,y,v) &= \frac{1}{2}v^T\nabla_{yy}^2G(x,y)v - v^T\nabla_yF(x,y) \nonumber\\
    &= \sum_{i=1}^np_i\big (\underbrace{\frac{1}{2}v^T  \nabla_{yy}^2 g_i(x,y)v - v^T\nabla_y  f_i(x,y)}_{R_i(x,y,v)}\big),
\end{align}
where $R_i(x,y,v)$ can be regarded as the loss function of client $i$ for solving this global LS problem. Based on this surrogate, we next describe the proposed SimFBO framework.

\subsection{Simple Local and Server-side Aggregations and Updates}


\textbf{Simple local update.} Differently from FedNest~\cite{tarzanagh2022fednest} and AggITD~\cite{xiao2023communication} that perform the lower-level optimization, the federated hypergradient estimation and the upper-level update alternatively in different communication sub-loops, our SimFBO conducts the simple updates on all these three procedures simultaneously in each communication round. In specific, each communication round $t$ first selects a subset $C^{(t)}$ of participating clients without replacement. Then, each active client $i\in C^{(t)}$ updates three variables $y,v,x$ at $k^{th}$ local iteration {\bf simultaneously} as 
\begin{align}\label{eq:localupdate}
\begin{pmatrix}
    y_i^{(t,k+1)} \vspace{0.2cm}\\ 
    v_i^{(t,k+1)}\vspace{0.2cm}\\
    x_i^{(t,k+1)}
    \end{pmatrix} 
    \leftarrow \begin{pmatrix}
    y_i^{(t,k)}\vspace{0.2cm}\\ 
    v_i^{(t,k)}\vspace{0.2cm}\\
    x_i^{(t,k)}
    \end{pmatrix}- a_i^{(t,k)}\begin{pmatrix}
    \eta_y \nabla_y g_i\big(x_i^{(t,k)},y_i^{(t,k)}; \zeta_i^{(t,k)}\big)\vspace{0.2cm}\\ 
    \eta_v \nabla_v R_i\big(x_i^{(t,k)},y_i^{(t,k)},v_i^{(t,k)};\psi_i^{(t,k)}\big) \vspace{0.2cm}\\
    \eta_x \bar{\nabla}f_i\big(x_i^{(t,k)},y_i^{(t,k)},v_i^{(t,k)};\xi_i^{(t,k)}\big)
\end{pmatrix}
\end{align}
where $\eta_y$, $\eta_v$, $\eta_x$ correspond to the local stepsizes, $a_i^{(t,k)}$ is a client-specific coefficient to increase the flexibility of the framework, 
$\zeta_i^{(t,k)}$, $\psi_i^{(t,k)}$, $\xi_i^{(t,k)}$ are independent samples, 
and the local hypergradient estimate takes the form of 
{\small$\bar{\nabla}f_i\big(x,y,v;\xi\big)= \nabla_x f_i\big(x,y;\xi\big) - \nabla_{xy}^2g_i\big(x,y;\xi\big)v_i.$} 
The variables $y,v$ and $x$ in \cref{eq:localupdate}, which optimize the lower-level problem, the LS problem and the upper-level problem, are updated with totally $\tau_i^{(t)}$ local steps. Note that the updates in \cref{eq:localupdate} also allow for parallel computation on $x,v$ and $y$ locally.

\noindent
\textbf{Local and server-side aggregation. } 
After completing all local updates, the next step is to aggregate such local information on both the client and server sides. As shown in \cref{eq:localandserveraggregation1}, 
each participating client $i\in C^{(t)}$ aggregates all the local gradients, and then communicate the aggregations $q_{y,i}^{(t)},q_{v,i}^{(t)}$ and $q_{x,i}^{(t)}$ to the server. Then, on the server side, such local information is further aggregated to be $q_{y}^{(t)},q_{v}^{(t)}$ and $q_{x}^{(t)}$, which will be used for a subsequent generalized server-side update.  
\begin{align}\label{eq:localandserveraggregation1}
    q_{y}^{(t)} = \sum_{i \in C^{(t)}}\widetilde{p}_iq_{y,i}^{(t)} =& \sum_{i \in C^{(t)}}\widetilde{p}_i\sum_{k=0}^{\tau_i-1}a_i^{(t,k)}\nabla_y g_i\big(x_i^{(t,k)},y_i^{(t,k)}; \zeta_i^{(t,k)}\big),\nonumber \\
    q_{v}^{(t)} = \sum_{i \in C^{(t)}}\widetilde{p}_iq_{v,i}^{(t)} =& \sum_{i \in C^{(t)}}\widetilde{p}_i\sum_{k=0}^{\tau_i-1}a_i^{(t,k)}\nabla_v R_i\big(x_i^{(t,k)},y_i^{(t,k)}v_i^{(t,k)}; \psi_i^{(t,k)}\big),\nonumber \\
    \underbrace{q_{x}^{(t)} = \sum_{i \in C^{(t)}}\widetilde{p}_iq_{x,i}^{(t)}}_{\text{Server aggregation}} =& \sum_{i \in C^{(t)}}\widetilde{p}_i \underbrace{\sum_{k=0}^{\tau_i-1}a_i^{(t,k)}\bar{\nabla} f_i\big(x_i^{(t,k)},y_i^{(t,k)},v_i^{(t,k)}; \xi_i^{(t,k)}\big)}_{\text{Local aggregation}},
\end{align}

\noindent
where $\widetilde{p}_i := \frac{n}{|C^{(t)}|}p_i$ is the effective weight of client $i\in C^{(t)}$ among all participating clients such that $\mathbb{E}(\sum_{i\in C^{(t)}}\widetilde{p}_i)=1$. 
Note that in \cref{eq:localandserveraggregation1}, the local aggregation $q_{y,i}^{(t)}$ (similarly for $v$ and $x$) can be regarded as a linear combination of all local stochastic gradients, and hence 
covers a variety of local optimizers such as stochastic gradient descent, momentum-based gradient, variance reduction by choosing different coefficients $a^{(t,k)}_i$ for $i\in C^{(t)}$. This substantially enhances the flexibility of the proposed framework.  

\begin{algorithm}[t]
	\caption{ \colorbox{DarkSeaGreen2}{SimFBO} and  \colorbox{LemonChiffon1}{ShroFBO}}   
	\small
	\label{alg:main}
	\begin{algorithmic}[1]
		\STATE {\bfseries Input:} initialization $\bm{x}^{(0)}$,$\bm{y}^{(0)}$, number of communication rounds $T$, learning rates: client $\{\eta_y, \eta_v, \eta_x\}$, server: $\{\gamma_y, \gamma_v, \gamma_x\}$, local update rounds: $\{\tau_i^{(t)}\}$
        \FOR{$t = 0,1,2,...,T$} 
        \FOR{$i \in C^{(t)}$ \textbf{in parallel}}
        \STATE $y_i^{(t,0)} = y^{(t)}$, $v_i^{(t,0)} = v^{(t)}$, $x_i^{(t,0)} = x^{(t)}$
        \FOR{$k = 0,1,2,...,\tau_i^{(t)}-1$}
        \STATE Locally update $y_i^{(t,k)}$, $v_i^{(t,k)}$ and $x_i^{(t,k)}$ simultaneously via  \cref{eq:localupdate}
        \ENDFOR
        \STATE \colorbox{DarkSeaGreen2}{Client $i$ locally aggregates gradients to compute $q_{y,i}^{(t)}$, $q_{v,i}^{(t)}$, $q_{x,i}^{(t)}$ via~\cref{eq:localandserveraggregation1}}
        \STATE \colorbox{LemonChiffon1}{Client $i$ locally aggregates gradients to compute $h_{y,i}^{(t)}$, $h_{v,i}^{(t)}$, $h_{x,i}^{(t)}$ defined in \cref{eq:localAEofq}} 
        \ENDFOR
        \STATE Client $i \in C^{(t)}$ communicate $\{h_{y,i}^{(t)}, h_{v,i}^{(t)}, h_{x,i}^{(t)}\}$ to the server
        \STATE \colorbox{DarkSeaGreen2}{Server aggregates local estimators to compute $\{q_{y}^{(t)}, q_{v}^{(t)}, q_{x}^{(t)}\}$ using \cref{eq:localandserveraggregation1}}
        \STATE \colorbox{LemonChiffon1}{Server aggregates local estimators to compute $\{h_{y}^{(t)}, h_{v}^{(t)}, h_{x}^{(t)}\}$ using \cref{eq:AEofyvx2}}
        \STATE \colorbox{DarkSeaGreen2}{Server updates using \cref{eq:serverupdate}}
        \STATE \colorbox{LemonChiffon1}{Server updates using \cref{eq:serverupdatefinal}}
        \ENDFOR
	\end{algorithmic}
\end{algorithm}
\noindent
\textbf{Server-side updates.} Based on the aggregated gradients $q_{y}^{(t)},q_{v}^{(t)}$ and $q_{x}^{(t)}$, we then perform server-level gradient-based updates on variables $x,v$ and $y$ simultaneously as 
\begin{align}  \label{eq:serverupdate}
        y^{(t+1)} = y^{(t)} - \gamma_y q_{y}^{(t)}, \;\;
        v^{(t+1)} = \mathcal{P}_{r}\big(v^{(t)} - \gamma_v q_{v}^{(t)}\big), \;\;
        x^{(t+1)} = x^{(t)} - \gamma_x q_{x}^{(t)},
\end{align}
where $\gamma_y$, $\gamma_v $ and $\gamma_x $ are server-side updating stepsizes for $y$, $v$, $x$ and $\mathcal{P}_{r}(v) := \min\big\{1, \frac{r}{\|v\|}\big\}v$ is a simple projection on a bounded ball with a radius of $r$. There are a few remarks about the updates in \cref{eq:serverupdate}. First, in contrast to existing FBO algorithms such as~\cite{tarzanagh2022fednest,xiao2023communication}, our introduced server-side updates leverage not only the client-side stepsizes $\eta_y,\eta_v,\eta_x$, but also the server-side stepsizes $\gamma_y,\gamma_v$ and $\gamma_x$. This generalized two-learning-rate paradigm can provide more algorithmic and theoretical flexibility, and provides improved communication efficiency in practice and in theory. Second, the projection $\mathcal{P}_r(\cdot)$ serves as  
an important step to ensure the boundedness of variable $v^{(t)}$, and hence guarantee the smoothness of the global LS problem and the boundedness of the estimation variance in $v$ and $x$ updates, both of which are crucial and necessary in the final convergence analysis. Note that we do not impose such projection on the local $v_i^{(t,k)}$ variables because we can prove via induction that they are bounded given the boundedness of $v^{(t)}$ (see \Cref{pps:propositionboundv}). 

\subsection{Resilient Server-side Updates against System-level Heterogeneity}
{\bf Limitations under system-level heterogeneity.} When clients have heterogeneous computing and storing capabilities (e.g., computer server v.s.~phone in edge computing), an unequal number of local updates are often performed such that the global solution can be biased toward those of the clients with much more local steps or stronger optimizers. As observed in \cite{wang2020tackling}, this heterogeneity can deviate the iterates to minimize a different objective function. To explain this mismatch phenomenon, inspired by~\cite{sharma2023federated},  we rewrite the server-side update on $x$ (similarly for $v$ and $y$) in \cref{eq:localandserveraggregation1} as 
\begin{align}\label{eq:localAEofq}
   q_{x}^{(t)} &= \sum_{i = 1}^n p_i q_{x,i}^{(t,k)} = \underbrace{\Big(\sum_{j=1}^np_j\|a_j^{(t)}\|_1\Big)}_{\rho^{(t)}} \sum_{i = 1}^n \underbrace{\frac{p_i\|a_i^{(t)}\|_1}{\sum_{j=1}^np_j\|a_j^{(t)}\|_1}}_{w_i} \underbrace{\frac{q_{x,i}^{(t)}}{\|a_i^{(t)}\|_1}}_{h_{x,i}^{(t)}}. 
\end{align}
where {\small$a_i^{(t)}=\big[a_i^{(t,0)},...,a_i^{(t,\tau_i^{(t)}-1)}\big]^T$} collects all local coefficients of client $i$, and 
$h_{x,i}^{(t)}$ {\bf normalizes} the aggregated gradient $q_{x,i}^{(t)}$ by {\small $1/\|a_i^{(t)}\|_1$} such that $\|h_{x,i}^{(t)}\|$ does not grow with the increasing of $\tau_i^{(t)}$. Although such normalization can help to mitigate the system-level heterogeneity, 
the effective weight $w_i$ can deviate from the true weight $p_i$ of the original objective in \cref{eq:intro}, and the iterates converge to the stationary point of a different objective that replaces all $p_i$ by $w_i$ in \cref{eq:intro} (see \Cref{th:theorem1}).

\noindent
{\bf  System-level heterogeneity robust FBO (ShroFBO).} To address this convergence issue, we then propose a new method named ShroFBO with stronger resilience to such heterogeneity.  Motivated by the normalized reformulation in \cref{eq:localAEofq}, ShroFBO adopts a different server-side aggregation as 
\begin{align}\label{eq:AEofyvx2}
    h_y^{(t)} = \sum_{i \in C^{(t)}}  \widetilde{p}_ih_{y,i}^{(t)}, 
    \quad  h_v^{(t)} = \sum_{i \in C^{(t)}}  \widetilde{p}_ih_{v,i}^{(t)},
    \quad h_x^{(t)} = \sum_{i \in C^{(t)}}  \widetilde{p}_ih_{x,i}^{(t)},
\end{align}
where $\widetilde{p}_i := \frac{n}{|C^{(t)}|}p_i$ and $h_{y,i}^{(t)},h_{v,i}^{(t)},h_{x,i}^{(t)}$ are the normalized local aggregations defined in \cref{eq:localAEofq}. Accordingly, the server-side updates become 
\begin{align}\label{eq:serverupdatefinal}
    y^{(t+1)} = y^{(t)} - \rho^{(t)}\gamma_y h_{y}^{(t)}, \;\;
    v^{(t+1)} = \mathcal{P}_{r}\big(v^{(t)} - \rho^{(t)}\gamma_v h_{v}^{(t)}\big) , \;\; 
    x^{(t+1)} = x^{(t)} - \rho^{(t)}\gamma_x h_{x}^{(t)}. 
\end{align}
Differently from \cref{eq:localAEofq}, we select the client weights to be $\widetilde p_i$ to enforce the correct convergence to the stationary point of the original objective in \cref{eq:intro}, as shown in \Cref{th:theorem2} later.

\section{Main Result}
\subsection{Assumptions and Definitions}
We make the following standard definitions and assumptions for the outer- and inner-level objective functions, as also adopted in stochastic bilevel optimization~\cite{ji2021bilevel, hong2020two, khanduri2021near} and in federated bilevel optimization~\cite{tarzanagh2022fednest,xiao2023communication,huang2023achieving}.
\begin{definition}
A mapping $F$ is $L$-Lipschitz continuous if for  $\forall\,z,z^\prime$,  
$\|F(z)-F(z^\prime)\|\leq L\|z-z^\prime\|.$
\end{definition} 
\noindent
Since the overall objective $\Phi(x)$ is nonconvex, the goal is expected to find an $\epsilon$-accurate stationary point defined as follows.
\begin{definition}
We say $z$ is an $\epsilon$-accurate stationary point of the objective function $\Phi(x)$ if $\mathbb{E}\|\nabla \Phi(z)\|^2\leq \epsilon$, where $z$ is the output of an algorithm. 
\end{definition}
\begin{assumption}\label{as:diffandSC}
    For any $x \in \mathbb{R}^{d_{x}}$, $y \in \mathbb{R}^{d_{y}}$ and $i \in \{1,2,...,n\}$, $f_i(x,y)$ and $g_i(x,y)$ are twice continuously differentiable, and $g_i(x,y)$ is $\mu_g$-strongly convex w.r.t.~$y$. 
\end{assumption}
\noindent
The following assumption imposes the Lipschitz continuity conditions on the upper- and lower-level objective functions and their derivatives. 
\begin{assumption}\label{as:Lipschitz}
    Function $f_i(x,y)$ is $L_f$-Lipschitz continuous;  the gradients $\nabla f_i(x,y)$ and $\nabla g_i(x,y)$ are $L_1$-Lipschitz continuous;  the second-order derivatives  
    $\nabla^2 f_i(x,y)$ and $\nabla^2 g_i(x,y)$ are $L_2$-Lipschitz continuous; 
    and the third-order derivatives $\nabla^3 g_i(x,y)$ is $L_3$-Lipschitz continuous
    for some constants $L_f,L_1, L_2,L_3>0$.
\end{assumption}
\noindent

\noindent
The Lipschitz continuity of the third-order derivative is necessary here to ensure the smoothness of $v^*(x)$, which guarantees the descent in the iterations of LS function (see \Cref{lm:servergap}), under our more challenging simultaneous and single-loop updating structure. 
\noindent
Next, we assume the bounded variance conditions on the gradients and second-order derivatives.  
\begin{assumption}\label{as:varaince}
There exist constants $\sigma_{f}^2$, $\sigma_{g}^2,\sigma_{gg}^2$ such that $\mathbb{E}\big[\|\nabla f_i(x,y) - \nabla f_i(x,y;\xi)\|^2\big] \leq \sigma_f^2$,$\quad$ $\mathbb{E}\big[\|\nabla g_i(x,y) - \nabla g_i(x,y;\zeta)\|^2\big] \leq \sigma_g^2$ and $\mathbb{E}\big[\|\nabla^2 g_i(x,y) - \nabla^2 g_i(x,y;\zeta)\|^2\big] \leq \sigma_{gg}^2$.
\end{assumption}

\begin{assumption}\label{as:globalheter}
    For any $x \in \mathbb{R}^{d_{x}}$, $y \in \mathbb{R}^{d_{y}}$, there exist constants $\beta_{gh} \geq 1$ and $\sigma_{gh} \geq 0$ such that 
    \begin{align}
        \sum_{i=1}^n w_i \|\nabla_y g_i(x,y)\|^2 \leq \beta_{gh}^2 \|\sum_{i=1}^n w_i \nabla_y g_i(x,y)\|^2 + \sigma_{gh}^2. \nonumber
    \end{align}
We have $\beta_{gh}= 1$, and $\sigma_{gh}= 0$ when all $g_i$'s are identical.
\end{assumption}
\noindent
This assumption of global heterogeneity uses $\beta_{gh}$ and $\sigma_{gh}$ to measure
the dissimilarity of $\nabla_y g_i(x,y)$ for all $i$. 

\subsection{Convergence and Complexity Analysis}
It can be seen from \cref{def:R} that the boundedness of $v$ is necessary to guarantee the smoothness (w.r.t.~$x,y$) and bounded variance in solving the local and global LS problems. Projecting the global $v^{(t)}$ vector and the local $v_i^{(t,k)},k\geq 1$  vectors onto a bounded set can be a feasible solution, but in this case, the local aggregation $q_{v,i}^{(t)}$ is no longer a linear combination of local gradients. This can complicate the implementation and analysis, and degrade the flexibility of the framework. Fortunately, we show via induction that the projection of the server-side vector $v^{(t)}$ on a bounded set suffices to guarantee the boundedness of local vectors $v_i^{t,k}$.



\begin{proposition}[Boundedness of Local $v$]\label{pps:propositionboundv} 
    Under Assumptions~\ref{as:diffandSC} and~\ref{as:Lipschitz}, for each iteration $t$, client $i$, and local iteration $k=1,2,..., \tau_i^{(t)}$, we have 
     $    r_i := \|v_i^{(t,k)}\| \leq \Big( 1 + \frac{\alpha_{\max}}{\alpha_{\min}} \Big) r,$ 
where the radius $r = \frac{L_f}{\mu_g}$ and $\alpha_{\min},\alpha_{\max}$ are chosen such that  $\alpha_{\min} \leq a_i^{(t,k)} \leq \alpha_{\max}$. 
\end{proposition}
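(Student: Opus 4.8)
The plan is to prove the bound by induction on the local step index $k$, exploiting the fact that the $v$-update in \cref{eq:localupdate} is an \emph{affine} recursion in $v_i^{(t,k)}$ whose linear part is a strict contraction under a small local stepsize. The base case $k=0$ is immediate: since $v_i^{(t,0)}=v^{(t)}$ and $v^{(t)}$ is the output of the projection $\mathcal{P}_r$ in \cref{eq:serverupdate} (with the initialization $v^{(0)}$ chosen inside the ball of radius $r$), we have $\|v_i^{(t,0)}\|=\|v^{(t)}\|\le r=\tfrac{L_f}{\mu_g}\le \bigl(1+\tfrac{\alpha_{\max}}{\alpha_{\min}}\bigr)r$.

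For the inductive step I would use $\nabla_v R_i(x,y,v;\psi)=\nabla_{yy}^2 g_i(x,y;\psi)v-\nabla_y f_i(x,y;\psi)$ to rewrite one local $v$-step as
\[
v_i^{(t,k+1)} = \underbrace{\bigl(I - a_i^{(t,k)}\eta_v^{(t)}\nabla_{yy}^2 g_i(x_i^{(t,k)},y_i^{(t,k)};\psi_i^{(t,k)})\bigr)}_{A_i^{(t,k)}} v_i^{(t,k)} + a_i^{(t,k)}\eta_v^{(t)}\nabla_y f_i(x_i^{(t,k)},y_i^{(t,k)};\psi_i^{(t,k)}).
\]
Two ingredients are then needed. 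First, a spectral bound $\|A_i^{(t,k)}\|\le 1-\alpha_{\min}\eta_v^{(t)}\mu_g$: since $\mu_g I\preceq \nabla_{yy}^2 g_i\preceq L_1 I$ (strong convexity from Assumption~\ref{as:diffandSC} and $L_1$-Lipschitz gradient from Assumption~\ref{as:Lipschitz}), together with $\alpha_{\min}\le a_i^{(t,k)}\le\alpha_{\max}$ and an appropriately small local stepsize (e.g. $\alpha_{\max}\eta_v^{(t)}L_1\le 1$), the eigenvalues of $A_i^{(t,k)}$ lie in $[\,1-\alpha_{\max}\eta_v^{(t)}L_1,\ 1-\alpha_{\min}\eta_v^{(t)}\mu_g\,]\subseteq[0,\,1-\alpha_{\min}\eta_v^{(t)}\mu_g]$. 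Second, $\|\nabla_y f_i(x_i^{(t,k)},y_i^{(t,k)};\psi_i^{(t,k)})\|\le L_f$ from the $L_f$-Lipschitz continuity of $f_i$ in Assumption~\ref{as:Lipschitz}. Combining the two gives the scalar recursion
\[
\|v_i^{(t,k+1)}\| \le \bigl(1-\alpha_{\min}\eta_v^{(t)}\mu_g\bigr)\|v_i^{(t,k)}\| + \alpha_{\max}\eta_v^{(t)}L_f .
\]

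Unrolling this geometric recursion from $k=0$, using $\|v_i^{(t,0)}\|=\|v^{(t)}\|\le r$ and $\sum_{j\ge 0}\bigl(1-\alpha_{\min}\eta_v^{(t)}\mu_g\bigr)^j=\tfrac{1}{\alpha_{\min}\eta_v^{(t)}\mu_g}$, yields
\[
\|v_i^{(t,k)}\| \le \|v^{(t)}\| + \frac{\alpha_{\max}\eta_v^{(t)}L_f}{\alpha_{\min}\eta_v^{(t)}\mu_g} \le r + \frac{\alpha_{\max}}{\alpha_{\min}}\cdot\frac{L_f}{\mu_g} = \Bigl(1+\frac{\alpha_{\max}}{\alpha_{\min}}\Bigr)r ,
\]
which is exactly the claimed bound; the additive $r$ beyond the tighter $\tfrac{\alpha_{\max}}{\alpha_{\min}}r$ is precisely the price of carrying the initial term $\|v^{(t)}\|$ through the recursion. (Equivalently one can phrase this as: if $\|v_i^{(t,k)}\|\le(1+\tfrac{\alpha_{\max}}{\alpha_{\min}})r$ then the same bound holds at $k+1$, which closes the induction.)

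I expect the main obstacle to be the contraction estimate $\|A_i^{(t,k)}\|\le 1-\alpha_{\min}\eta_v^{(t)}\mu_g$: it requires the \emph{sampled} Hessian $\nabla_{yy}^2 g_i(x,y;\psi)$, not merely its expectation, to satisfy $\mu_g I\preceq \nabla_{yy}^2 g_i(\cdot;\psi)\preceq L_1 I$, so one must either read Assumptions~\ref{as:diffandSC}--\ref{as:Lipschitz} as holding sample-wise or recast the step in terms of a per-iteration averaged Hessian; and it is here that the precise smallness condition on $\eta_v^{(t)}$ relative to $\alpha_{\max}L_1$ has to be fixed so that $I-a_i^{(t,k)}\eta_v^{(t)}\nabla_{yy}^2 g_i$ remains positive semidefinite. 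The remaining steps — the norm split, the bound $\|\nabla_y f_i\|\le L_f$, and the geometric-series summation — are routine.
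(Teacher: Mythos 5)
Your proposal is correct and follows essentially the same route as the paper's own proof (Lemma~3 in the appendix): rewrite the local $v$-step as an affine recursion $v_i^{(t,k)} = (I - \eta_v a_i^{(t,k-1)}\nabla^2_{yy}g_i(\cdot;\psi))v_i^{(t,k-1)} + \eta_v a_i^{(t,k-1)}\nabla_y f_i(\cdot;\psi)$, contract with factor $1-\eta_v a_i^{(t,k-1)}\mu_g$, bound the forcing term by $\eta_v a_i^{(t,k-1)}L_f$, and sum the geometric series. The caveats you flag --- that the contraction needs the \emph{sampled} Hessian to satisfy $\mu_g I \preceq \nabla^2_{yy}g_i(\cdot;\psi) \preceq L_1 I$ together with a stepsize condition like $\alpha_{\max}\eta_v L_1 \le 1$ --- are real, but the paper's proof silently makes the same assumptions, so this is a shared (minor) gap rather than a defect of your argument.
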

\noindent
Next, we show an important proposition in characterizing the per-iteration progress of the global $v^{(t)}$ updates in approximating the solution of a reweighted global LS problem. Let $\Delta_v^{(t)}= \mathbb{E}\|v^{(t)} -\widetilde{v}^*(x^{(t)})\|^2$ denote the approximation error, where $\widetilde{v}^*$ be the  minimizer of $\sum_{i=1}^n w_i R_i(x,\widetilde{y}^*,\cdot)$.  
\begin{proposition}\label{pps:propositionofdescent} 
Under the Assumption \ref{as:diffandSC}, \ref{as:Lipschitz} and \ref{as:varaince}, the iterates $v^{(t)}$ in solving the global LS problem generated by  Algorithm \ref{alg:main} satisfy
\begingroup
\allowdisplaybreaks
\begin{align}
    \mathbb{E}\|&v^{(t+1)} - \widetilde{v}^*(x^{(t+1)})\|^2 -  \mathbb{E}\|v^{(t)} - \widetilde{v}^*(x^{(t)})\|^2 \nonumber\\
    \leq& (\delta_t' - \rho^{(t)}\gamma_v \mu_g - \delta_t'\rho^{(t)}\gamma_v \mu_g) \mathbb{E}\|v^{(t)} - \widetilde{v}^*(x^{(t)})\|^2 + (1+\delta_t')(\rho^{(t)}\gamma_v)^2\mathbb{E}\Big\|\sum_{i \in C^{(t)}} \widetilde{w}_i h^{(t)}_{v,i}\Big\|^2 \nonumber\\
    & + (1+\delta_t')\rho^{(t)}\gamma_v \frac{4L_R^2}{\mu_g}\sum_{i = 1}^n w_i \sum_{k=0}^{\tau_i-1}\frac{a^{(t,k)}_i}{\|a_i^{(t)}\|_1} \mathbb{E}\Big[\big\|x^{(t)} - x_i^{(t,k)}\big\|^2 + \big\|y^{(t)} - y_i^{(t,k)}\big\|^2 + \big\|v^{(t)} - v_i^{(t,k)}\big\|^2\Big] \nonumber\\
    & + (1+\delta_t')\rho^{(t)}\gamma_v \frac{4L_R^2}{\mu_g}\mathbb{E}\big\|y^{(t)} - \widetilde{y}^*{(x^{(t)})}\big\|^2
    + \big(\rho^{(t)}\gamma_x\big)^2\bigg(L_v^2+\frac{L_{vx}}{4}\bigg) \mathbb{E}\bigg\|\sum_{i \in C^{(t)}}\widetilde{w}_i h_{x,i}^{(t)}\bigg\|^2 \nonumber \\
    &  + (\rho^{(t)}\gamma_x)^2\frac{2L_v}{\delta_{t,1}'}\mathbb{E}\Big\| \sum_{i=1}^n w_i \widetilde{h}^{(t)}_{x,i}\Big\|^2 .\nonumber
\end{align}
\endgroup
 for all $t \in \{0,1,...,T-1\}$, $k \in \{0,1,...,\tau_i^{(t)}-1\}$ and $i \in \{1,2,...,n\}$, where $\widetilde{w}_i := \frac{n}{|C^{(t)}|}w_i$. 
\end{proposition}
\noindent
Similarly, we can provide a per-iteration process of $y^{(t)}$ in approximating the solution $\widetilde y^*$ of the reweighted  lower-level global function $\sum_{i=1}^n w_i g_i(x,\cdot)$. Note that such characterizations do not exist in previous studies in single-level or minimax federated optimization with a single objective (e.g., \cite{sharma2023federated}) because our analysis needs to handle three different lower-level,  LS  and upper-level objectives. As shown in \Cref{pps:propositionofdescent}, the bound involves the client drift term $\mathbb{E}\|v^{(t)}-v_i^{(t,k)}\|^2$ (similarly for $y,x$), so the next step is to characterize this important quantity.  




\begin{proposition}\label{pps:propositionofCD}
Under Assumption \ref{as:diffandSC} and \ref{as:Lipschitz}, the local iterates client drift of $v_i^{(t,k)}$ is bounded as \vspace{-2.0mm}
\begin{align}
    \sum_{i=1}^n w_i\frac{1}{\|a_i^{(t)}\|_1}\sum_{k=1}^{\tau_i-1}a^{(t,k)}_i \mathbb{E}\|v^{(t,k)}_i - v^{(t)}\|^2 &\leq \eta^2_v \bar{\tau}\sigma_{M1}^2, \nonumber
\end{align}
for all $t \in \{0,1,...,T-1\}$, $k \in \{0,1,...,\tau_i-1\}$ and $i \in \{1,2,...,n\}$. We define 
$\bar{\tau}:= \sum_{i=1}^n \tau_i/n$ 
and 
$\sigma_{M1}^2:= \alpha_{\max}^2(\sigma^2_{f}+r_{\max}^2\sigma^2_{gg}) + \alpha_{\max}(L_f^2 + r^2_{\max}L^2_1) $.
\end{proposition}
\noindent
It can be seen from \Cref{pps:propositionofCD} that the bound on the client drift of the local updates on $v$ is proportional to $\eta_v$ and $\|a_i^{(t)}\|_1$. Since $\alpha_{\min}\leq a_i^{(t,k)} \leq \alpha_{\max}$,  $\|a_i^{(t)}\|_1$ is proportional to the number  $\tau_i^{(t)}$ of local steps.  Thus, this client drift is controllable by choosing $\tau_i^{(t)}$ and the local stepsizes $\eta_v$ properly. 
Then, combining the results in the above \Cref{pps:propositionboundv}, \ref{pps:propositionofdescent}, \ref{pps:propositionofCD}, and under a proper Lyapunov function, we obtain the following theorem. Let $P= |C^{(t)}|$ be the number of sampled clients. 
\begin{theorem}\label{th:theorem1} 
Define $\widetilde{\Phi}(x) = \widetilde{F}(x,\widetilde{y}^*)$ as the objective function by replacing $p_i$ in \cref{eq:intro} with $w_i$. Suppose Assumptions \ref{as:diffandSC}, \ref{as:Lipschitz} and \ref{as:varaince} are satisfied. 
The iterates by SimFBO in \Cref{alg:main}  satisfy
\begin{align}\label{eq:maintheorem}
    \min_t \mathbb{E}\Big\|\nabla \widetilde{\Phi}(x^{(t)})\Big\|^2 = \underbrace{\mathcal{O}\Big(\frac{M_1(n-P)}{n}\sqrt{\frac{\bar{\tau}}{PT}}\Big)}_{\text{partial participation error}} + \underbrace{\mathcal{O}\Big(M_2\sqrt{\frac{1}{P\bar{\tau}T}}\Big)}_{\text{full synchronization error}} + \underbrace{\mathcal{O}\Big(\frac{M_3}{\bar{\tau}T}\Big)}_{\text{local updates error}} , 
\end{align}
where $\gamma_x$, $\gamma_y$, $\gamma_v$, $\eta_x$, $\eta_y$, $\eta_v$ are set in  \cref{eq:stepsize} and $M_1$, $M_2$, $M_3$ are defined by  \cref{eq:constants} in appendix.
For the full client participation (i.e., $P=n$), the sample complexity is $\bar{\tau}T = \mathcal{O}(n^{-1}\epsilon^{-2})$, and the number of communication rounds is $T = \mathcal{O}(\epsilon^{-1})$. For partial client participation, the sample complexity is $\bar{\tau}T = \mathcal{O}(P^{-1}\epsilon^{-2})$, and the number of communication rounds is $T = \mathcal{O}(P^{-1}\epsilon^{-2})$. 
\end{theorem}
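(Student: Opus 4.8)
The plan is to build a single Lyapunov function $\mathcal{V}^{(t)} = \widetilde{\Phi}(x^{(t)}) + c_y \Delta_y^{(t)} + c_v \Delta_v^{(t)}$ for suitable positive constants $c_y,c_v$, where $\Delta_y^{(t)}=\mathbb{E}\|y^{(t)}-\widetilde{y}^*(x^{(t)})\|^2$ and $\Delta_v^{(t)}$ is as in \Cref{pps:propositionofdescent}, and to show it contracts on average at a rate governed by $\mathbb{E}\|\nabla\widetilde{\Phi}(x^{(t)})\|^2$. First I would record the structural facts we need as inputs: by Assumptions~\ref{as:diffandSC}--\ref{as:Lipschitz} together with the uniform bound on the local $v_i^{(t,k)}$ from \Cref{pps:propositionboundv} and the projection $\mathcal{P}_r$, the reweighted hyperobjective $\widetilde{\Phi}$ is $L_\Phi$-smooth and the local/global LS losses $R_i$ (reweighted by the $w_i$) are smooth with bounded stochastic variance on the relevant bounded set; the constants depend only on $\mu_g,L_f,L_1,L_2,r$ and (through $w_i$, a probability vector) are the same as in the unweighted case. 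Then the standard descent inequality applied to the $x$-update in \cref{eq:serverupdate} gives, conditionally,
\[
\mathbb{E}\widetilde{\Phi}(x^{(t+1)}) \le \mathbb{E}\widetilde{\Phi}(x^{(t)}) - \tfrac{\rho^{(t)}\gamma_x}{2}\,\mathbb{E}\|\nabla\widetilde{\Phi}(x^{(t)})\|^2 + \tfrac{\rho^{(t)}\gamma_x}{2}\,\mathbb{E}\|\nabla\widetilde{\Phi}(x^{(t)}) - \bar{q}^{(t)}_x/\rho^{(t)}\|^2 + \tfrac{L_\Phi}{2}(\gamma_x)^2\,\mathbb{E}\|q^{(t)}_x\|^2,
\]
where $\bar q^{(t)}_x$ is the conditional mean of $q^{(t)}_x$. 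By the decomposition in \cref{eq:localAEofq}, $\bar q^{(t)}_x/\rho^{(t)} = \sum_i w_i \sum_k \frac{a^{(t,k)}_i}{\|a_i^{(t)}\|_1}\,\bar\nabla f_i(x^{(t,k)}_i,y^{(t,k)}_i,v^{(t,k)}_i)$, so the crux is to bound the bias term $\|\nabla\widetilde{\Phi}(x^{(t)}) - \bar q^{(t)}_x/\rho^{(t)}\|^2$.

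Second, I would split this bias into (i) the gap between $\sum_i w_i\,\bar\nabla f_i(x^{(t)},y^{(t)},v^{(t)})$ and $\nabla\widetilde{\Phi}(x^{(t)})$, and (ii) the client drift $\sum_i w_i\sum_k \frac{a^{(t,k)}_i}{\|a_i^{(t)}\|_1}\|\bar\nabla f_i(x^{(t,k)}_i,\cdot)-\bar\nabla f_i(x^{(t)},\cdot)\|$. For (i), since $\widetilde{v}^*$ is by construction the minimizer of $\sum_i w_i R_i(x^{(t)},\widetilde{y}^*,\cdot)$, we have $\sum_i w_i\,\bar\nabla f_i(x^{(t)},\widetilde{y}^*,\widetilde{v}^*) = \nabla\widetilde{\Phi}(x^{(t)})$ exactly, and the residual is controlled by $\sqrt{\Delta_y^{(t)}}$ and $\sqrt{\Delta_v^{(t)}}$ via Lipschitz continuity of $\nabla_x f_i$ and $\nabla^2_{xy} g_i$ and the boundedness of $v$. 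For (ii), the drift in $x$, $y$, and $v$ is bounded by \Cref{pps:propositionofCD} and its $y$- and $x$-analogues, each of order $\eta^2\|a_i^{(t)}\|_1^2 = \mathcal{O}(\eta^2\bar\tau^2)$ — and, crucially, the normalization by $\|a_i^{(t)}\|_1$ is what prevents the heterogeneous step counts $\tau_i^{(t)}$ from blowing these terms up. In parallel, I would substitute \Cref{pps:propositionofdescent} (and its $y$-counterpart) to obtain recursions $\Delta_v^{(t+1)} \le (1-\Theta(\rho^{(t)}\gamma_v\mu_g))\Delta_v^{(t)} + (\text{drift}) + (\rho^{(t)}\gamma_v)^2\mathbb{E}\|h^{(t)}_v\|^2 + (\rho^{(t)}\gamma_x)^2 L_v \mathbb{E}\|h^{(t)}_x\|^2$, and similarly for $\Delta_y^{(t+1)}$, so that the coupled system is expressed entirely in terms of $\Delta_y^{(t)},\Delta_v^{(t)}$, the (already bounded) drift terms, and the second moments $\mathbb{E}\|q^{(t)}_x\|^2,\mathbb{E}\|q^{(t)}_v\|^2,\mathbb{E}\|q^{(t)}_y\|^2$.

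Third, I would bound each second moment as $2\|\bar q^{(t)}\|^2$ (absorbed into the bias/descent terms above) plus a variance term. The variance decomposes into a stochastic-gradient part governed by $\sigma_f^2,\sigma_g^2,\sigma_{gg}^2$ from Assumption~\ref{as:varaince}, scaled by $1/(P\bar\tau)$ through averaging over the $P$ sampled clients and the $\tau_i^{(t)}$ local minibatches, and a partial-participation part. The latter is where sampling without replacement enters: I would invoke the standard lemma that for a uniformly random size-$P$ subset $C^{(t)}$ and $\widetilde p_i = \tfrac{n}{P}p_i$,
\[
\mathbb{E}\Big\|\sum_{i\in C^{(t)}}\widetilde p_i u_i - \sum_{i=1}^n p_i u_i\Big\|^2 \le \frac{1}{P}\cdot\frac{n-P}{n-1}\sum_{i=1}^n p_i\Big\|u_i - \sum_j p_j u_j\Big\|^2,
\]
which produces the $\tfrac{n-P}{n-1}$ factor in the "partial participation error" and vanishes at $P=n$. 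Choosing the Lyapunov weights $c_y,c_v$ so the $\Delta_y,\Delta_v$ contributions to the $x$-bias are dominated by the contraction factors in the $\Delta$-recursions, and choosing the stepsizes as in \cref{eq:stepsize} (server rates $\gamma_x,\gamma_y,\gamma_v$ of order $\sqrt{P/(\bar\tau T)}$ up to problem constants, with the inner rates $\eta_x,\eta_y,\eta_v$ taken small enough to absorb all drift terms), I would obtain $\mathbb{E}\mathcal{V}^{(t+1)} \le \mathbb{E}\mathcal{V}^{(t)} - \Theta(\gamma_x)\,\mathbb{E}\|\nabla\widetilde{\Phi}(x^{(t)})\|^2 + (\text{noise}^{(t)})$. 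Telescoping over $t=0,\dots,T-1$, dividing by $\Theta(\gamma_x)T$, and bounding $\min_t$ by the average yields the three terms in \cref{eq:maintheorem}; setting the right-hand side equal to $\epsilon$ and solving for $T$ and $\bar\tau T$ gives the stated sample and communication complexities in both the full- and partial-participation regimes.

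I expect the main obstacle to be the simultaneous bookkeeping of the three coupled error sequences $\widetilde{\Phi}(x^{(t)}),\Delta_y^{(t)},\Delta_v^{(t)}$ under the client-specific coefficients $a^{(t,k)}_i$ and without-replacement sampling: one must order the substitutions so that no feedback loop arises — the $x$-drift enters the $v$-descent through $\mathbb{E}\|h^{(t)}_x\|^2$, while $\Delta_v^{(t)}$ enters the $x$-bias — and this requires the inner stepsizes $\eta$ to be small relative to the server stepsizes $\gamma$ and a careful choice of the free perturbation parameters $\delta_t'$ in \Cref{pps:propositionofdescent}. A secondary care point is that all reference iterates $\widetilde{y}^*,\widetilde{v}^*$ and all Lipschitz/strong-convexity constants used in the bias estimates must be those of the reweighted objectives $\sum_i w_i g_i$ and $\sum_i w_i R_i$; since $w$ is a probability vector these coincide with the $p$-weighted constants, but this must be made explicit for the surrogate-gap term (i) to vanish exactly, which is precisely what pins the limit to $\widetilde{\Phi}$ rather than $\Phi$.
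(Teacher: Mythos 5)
Your proposal is correct and follows essentially the same route as the paper's proof: the same Lyapunov function $\widetilde{\Phi}(x^{(t)}) + K_1\mathbb{E}\|y^{(t)}-\widetilde{y}^*\|^2 + K_2\mathbb{E}\|v^{(t)}-\widetilde{v}^*\|^2$, the same smoothness-based descent with the bias split into a surrogate gap (controlled by $\Delta_y,\Delta_v$) and normalized client-drift terms, the same without-replacement variance bound producing the $\tfrac{n-P}{n-1}$ factor, and the same stepsize scalings and telescoping to obtain the three error terms and the stated complexities. The care points you flag (ordering the substitutions to avoid a feedback loop between the $x$-drift and the $v$-descent, and using the $w_i$-reweighted reference points so the surrogate gap vanishes at $\widetilde{\Phi}$ rather than $\Phi$) are exactly the ones the paper handles via its choices of $K_1,K_2,\delta_t,\delta_t'$ and the reweighted definitions of $\widetilde{y}^*,\widetilde{v}^*$.
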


\noindent
First, when set $\bar{\tau} = \mathcal{O}(1)$, \Cref{th:theorem1} shows that SimFBO converges to a stationary point of an objective function $\widetilde \Phi(x)$ with a rate of $\mathcal{O}(\frac{1}{\sqrt{PT}}+\frac{1}{T})$, which, to the best of our knowledge, is the first linear speedup result under partial client participation without replacement. Note that
without system-level heterogeneity, i.e., $\|a_1^{(t)}\|=...=\|a_n^{(t)}\|$, $w_i=\frac{p_i\|a_i^{(t)}\|_1}{\sum_{j=1}^np_j\|a_j^{(t)}\|_1}=p_i$, and hence SimFBO converges to the stationary point of the original objective in \cref{eq:intro}. However, in the presence of system-level heterogeneity, SimFBO may converge to the stationary point of a different objective. 
Second, when nearly full clients participate, the partial participation error is approximately zero. Then we can see that setting local update round $\bar{\tau}$ to its upper-bound results in the best performance. 




\begin{theorem}\label{th:theorem2}
Define $\Phi(x) = F(x,y^*)$ as \cref{eq:intro}. Suppose Assumptions \ref{as:diffandSC}, \ref{as:Lipschitz} and \ref{as:varaince} are satisfied.  The iterates generated by ShroFBO in \Cref{alg:main} satisfy \vspace{-1.0mm}
\begin{align}\label{eq:maintheorem2}
    \min_t \mathbb{E}\Big\|\nabla \Phi(x^{(t)})\Big\|^2 = \mathcal{O}\Big(\frac{M_1(n-P)}{n}\sqrt{\frac{\bar{\tau}}{PT}}\Big) + \mathcal{O}\Big(M_2\sqrt{\frac{1}{P\bar{\tau}T}}\Big) + \mathcal{O}\Big(\frac{M_3}{\bar{\tau}T}\Big), 
\end{align} 
by setting the same server-side and local stepsizes and  $M_1$, $M_2$ and $M_3$ as in \Cref{th:theorem1}. For full client participation, the sample complexity is $\bar{\tau}T = \mathcal{O}(n^{-1}\epsilon^{-2})$, and the number of communication rounds is $T = \mathcal{O}(\epsilon^{-1})$. For partial client participation, the sample complexity is $\bar{\tau}T = \mathcal{O}(P^{-1}\epsilon^{-2})$, and the number of communication rounds is $T = \mathcal{O}(P^{-1}\epsilon^{-2})$. 
\end{theorem}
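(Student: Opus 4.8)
The plan is to exploit that ShroFBO is structurally identical to SimFBO, the only difference being that the server aggregates the \emph{normalized} local quantities $h_{y,i}^{(t)},h_{v,i}^{(t)},h_{x,i}^{(t)}$ with effective weights $\widetilde p_i=\tfrac{n}{|C^{(t)}|}p_i$ and rescales the step by $\rho^{(t)}$, rather than aggregating the raw $q_{\cdot,i}^{(t)}$, which—as \cref{eq:localAEofq} shows—implicitly carries the skewed weights $w_i$. The crucial consequence is the conditional unbiasedness identity $\mathbb{E}_{C^{(t)}}\big[\sum_{i\in C^{(t)}}\widetilde p_i h_{x,i}^{(t)}\big]=\sum_{i=1}^n p_i h_{x,i}^{(t)}$ (and likewise for $v$ and $y$), so that the effective server descent direction now targets the $p$-weighted objective, i.e.\ the original $\Phi$ in \cref{eq:intro}, instead of $\widetilde\Phi$. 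I would therefore re-run the argument behind \Cref{th:theorem1} with $w_i$ replaced by $p_i$ throughout.

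Concretely, the ingredients are as follows. \Cref{pps:propositionboundv} (boundedness of the local $v_i^{(t,k)}$) and \Cref{pps:propositionofCD} (client drift of $v_i^{(t,k)}$, with parallel bounds for $y$ and $x$) concern only the local updates in \cref{eq:localupdate}, which are untouched, so they carry over verbatim. The per-iteration progress statement \Cref{pps:propositionofdescent} is re-derived with the \emph{true} targets: set $\Delta_v^{(t)}=\mathbb{E}\|v^{(t)}-v^*(t)\|^2$ with $v^*$ the minimizer of $\sum_{i=1}^n p_i R_i(x,y^*,\cdot)$, and $\Delta_y^{(t)}=\mathbb{E}\|y^{(t)}-y^*(x^{(t)})\|^2$. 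Expanding the squared norm after the projected server step in \cref{eq:serverupdatefinal} (the projection $\mathcal{P}_{r}$ is non-expansive), using the $\mu_g$-strong convexity of $R$ in $v$ from \Cref{as:diffandSC}, and splitting the $x$--$v$ coupling with Young's inequality, one obtains the same three-term bound as in \Cref{pps:propositionofdescent} but with $\widetilde w_i$ replaced by $\widetilde p_i$; the residual cross-terms are controlled by the client drift $\mathbb{E}\|v^{(t)}-v_i^{(t,k)}\|^2$ and the estimator second moment $\mathbb{E}\|\sum_{i\in C^{(t)}}\widetilde p_i h_{v,i}^{(t)}\|^2$.

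I would then combine these exactly as for SimFBO. Write $\nabla\Phi(x^{(t)})=\bar{\nabla}F(x^{(t)},y^*(x^{(t)}),v^*(x^{(t)}))$ and bound the deterministic error $\|\sum_{i=1}^n p_i h_{x,i}^{(t)}-\nabla\Phi(x^{(t)})\|$ by (a) the client drifts from \Cref{pps:propositionofCD} and (b) the approximation errors $\Delta_y^{(t)},\Delta_v^{(t)}$, using \Cref{as:Lipschitz}; the stochastic gap between $\sum_{i\in C^{(t)}}\widetilde p_i h_{x,i}^{(t)}$ and its conditional mean $\sum_{i=1}^n p_i h_{x,i}^{(t)}$ is where client sampling without replacement contributes the $\tfrac{n-P}{n-1}$ factor of the partial-participation error. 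Plugging the $\Phi$-smoothness descent inequality together with the $\Delta_y,\Delta_v$ recursions into a Lyapunov function $\mathbb{V}^{(t)}=\Phi(x^{(t)})+c_1\Delta_y^{(t)}+c_2\Delta_v^{(t)}$ gives $\mathbb{E}[\mathbb{V}^{(t+1)}]\le\mathbb{E}[\mathbb{V}^{(t)}]-c\,\rho^{(t)}\gamma_x\,\mathbb{E}\|\nabla\Phi(x^{(t)})\|^2+(\text{drift}+\text{noise})$ for suitable constants $c_1,c_2,c$; telescoping over $t=0,\dots,T-1$, dividing through, and inserting the stepsizes of \cref{eq:stepsize} to balance the three contributions yields \cref{eq:maintheorem2} and the claimed sample and communication complexities.

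The main obstacle is the bookkeeping around the data-dependent multiplier $\rho^{(t)}$ and the normalized estimators: one must show that $\mathbb{E}\|\sum_{i\in C^{(t)}}\widetilde p_i h_{v,i}^{(t)}\|^2$ (and its $x$-analog) is bounded \emph{uniformly in the local step counts} $\tau_i^{(t)}$—this is precisely what the $1/\|a_i^{(t)}\|_1$ normalization buys and is the reason ShroFBO is resilient to heterogeneous local computation—while simultaneously keeping the estimator conditionally unbiased for the $p$-weighted direction and extracting the without-replacement variance with the correct $\tfrac{n-P}{n-1}$ coefficient. The coupled three-variable structure ($y$ feeds the LS problem defining $v$, and both feed the $x$-update) also forces a careful ordering of the Young's-inequality splits and a delicate choice of the Lyapunov weights relative to the stepsize ratios. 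None of this differs in substance from the SimFBO proof, however; the only genuinely new content is the unbiasedness argument that redirects convergence from $\widetilde\Phi$ to $\Phi$.
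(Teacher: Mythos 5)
Your proposal is correct and follows essentially the same route as the paper: both reduce \Cref{th:theorem2} to \Cref{th:theorem1} by observing that ShroFBO's $\widetilde p_i$-weighted, normalized server aggregation makes the algorithm identical to SimFBO run with $w_i = p_i$, so that $\widetilde\Phi = \Phi$, $\widetilde y^* = y^*$, $\widetilde v^* = v^*$, and the rate and complexities of \Cref{th:theorem1} transfer verbatim. The paper packages this as a comparison inequality bounding $\|\nabla\Phi(x^{(t)}) - \nabla\widetilde\Phi(x^{(t)})\|^2$ in terms of $\|y^*-\widetilde y^*\|^2$, $\|v^*-\widetilde v^*\|^2$ and $\|\nabla\widetilde\Phi\|^2$ that collapses once $w_i=p_i$, whereas you re-run the Theorem~1 machinery directly with $p_i$ in place of $w_i$; these are two presentations of the same argument.
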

\noindent
In \Cref{th:theorem2}, we show that even under the system-level heterogeneity, ShroFBO can converge to the original objective function with the same convergence rate as SimFBO. This justifies the design principle of robust server-side updates.

\section{Related Work}
{\bf Bilevel optimization.} Bilevel optimization, first introduced by~\cite{bracken1973mathematical}, has been studied for decades. A class of constraint-based bilevel methods was then proposed~\cite{hansen1992new, gould2016differentiating, shi2005extended, sinha2017review}, whose idea is to replace the lower-level problem by the optimality conditions. Gradient-based bilevel algorithms have attracted considerable attention due to the effectiveness in machine learning. Among them, AID-based approaches~\cite{domke2012generic, pedregosa2016hyperparameter, liao2018reviving, arbel2021amortized} leveraged the implicit derivation of the hypergradient, which was then approximated via solving a linear system. 
ITD-based approaches~\cite{maclaurin2015gradient, franceschi2017forward, finn2017model, shaban2019truncated, grazzi2020iteration} approximated the hypergradient based on automatic differentiation via the forward or backward mode. A group of stochastic bilevel approaches has been developed and analyzed recently based on Neumann series~\cite{chen2021single, ji2021bilevel,arbel2021amortized}, recursive momentum~\cite{yang2021provably, huang2021biadam, guo2021randomized} and variance reduction~\cite{yang2021provably, dagreou2022framework}, etc. For the lower-level problem with  multiple solutions, several approaches were proposed based on the upper- and lower-level gradient aggregation~\cite{sabach2017first, liu2020generic, li2020improved}, barrier types of regularization~\cite{liu2021value,liu2022bome}, penalty-based formulations~\cite{shen2023penalty}, primal-dual technique~\cite{sow2022constrained}, and dynamic system-based methods~\cite{liu2021towards}.


\noindent
{\bf Federated (bilevel) learning.}  
Federated Learning was proposed to enable collaborative model training across multiple clients without compromising the confidentiality of individual data~\cite{konevcny2015federated, shokri2015privacy, mohri2019agnostic}. As one of the earliest methods of federated learning~\cite{mcmahan2017communication}, FedAvg  has inspired an increasing number of approaches to deal with different limitations such as slower convergence, high communication cost and undesired client drift by leveraging the techniques including proximal regularization~\cite{li2020federated}, periodic variance reduction~\cite{mitra2021linear, karimireddy2020scaffold}, proximal splitting~\cite{pathak2020fedsplit}, adaptive gradients~\cite{reddi2020adaptive}. Theoretically, the convergence of FedAvg and its variants has been analyzed in various settings with the homogeneous~\cite{stich2018local, wang2020zeroth, stich2020error, basu2019qsparse} or heterogeneous datasets~\cite{li2020federated, wang2021cooperative, mitra2021linear, khaled2019first}. \cite{wang2020tackling} analyzed the impact of the system-level heterogeneity such as heterogeneous local computing on the convergence. \cite{sharma2023federated} further extended the analysis and the methods to the minimax problem setting.



\noindent
Federated bilevel optimization has not been explored well except for a few attempts recently. For example, \cite{gao2022convergence, li2022local} proposed momentum-based bilevel algorithms, and analyzed their convergence in the setting with homogeneous datasets. In the setting with non-i.i.d.~datasets, \cite{tarzanagh2022fednest} and \cite{huang2023achieving} proposed FedNest and FedMBO via AID-based federated hypergraident estimation, and \cite{xiao2023communication} proposed an ITD-based aggregated approach named Agg-ITD. Momentum-based techniques have been also used by \cite{huang2022fast,li2023communication} to improve the sample complexity. Moreover, there are some studies that focus on other distributed scenarios, including decentralized bilevel optimization~\cite{chen2022decentralized, yang2022decentralized, lu2022decentralized}, asynchronous bilevel optimization over directed network~\cite{yousefian2021bilevel}, and distributed bilevel network utility maximization~\cite{ji2023network}.

\begin{figure}[H]
    \centering
    \vspace{-0.35cm}
    \includegraphics[width=0.35\textwidth]{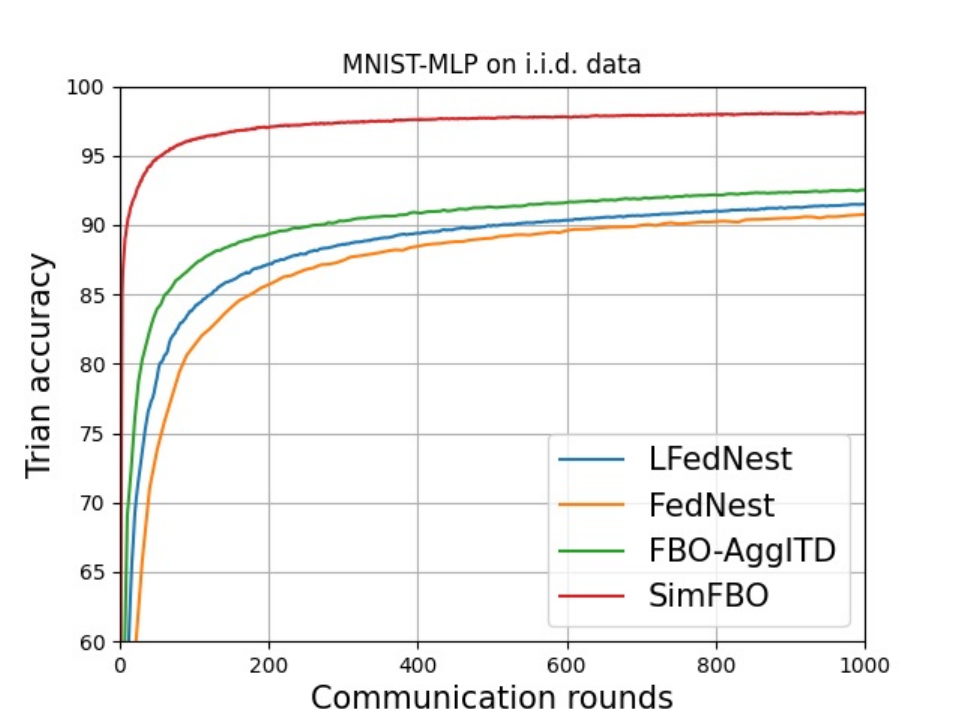}
    \hspace{-0.7cm}
    \includegraphics[width=0.35\textwidth]{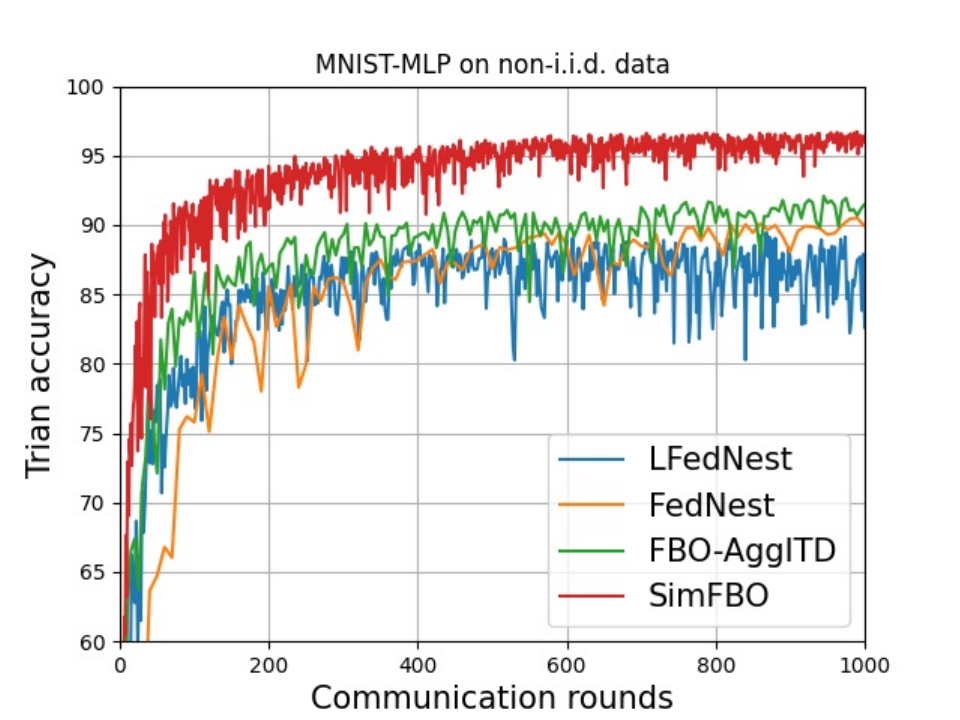}
    \hspace{-0.7cm}
    \includegraphics[width=0.35\textwidth]{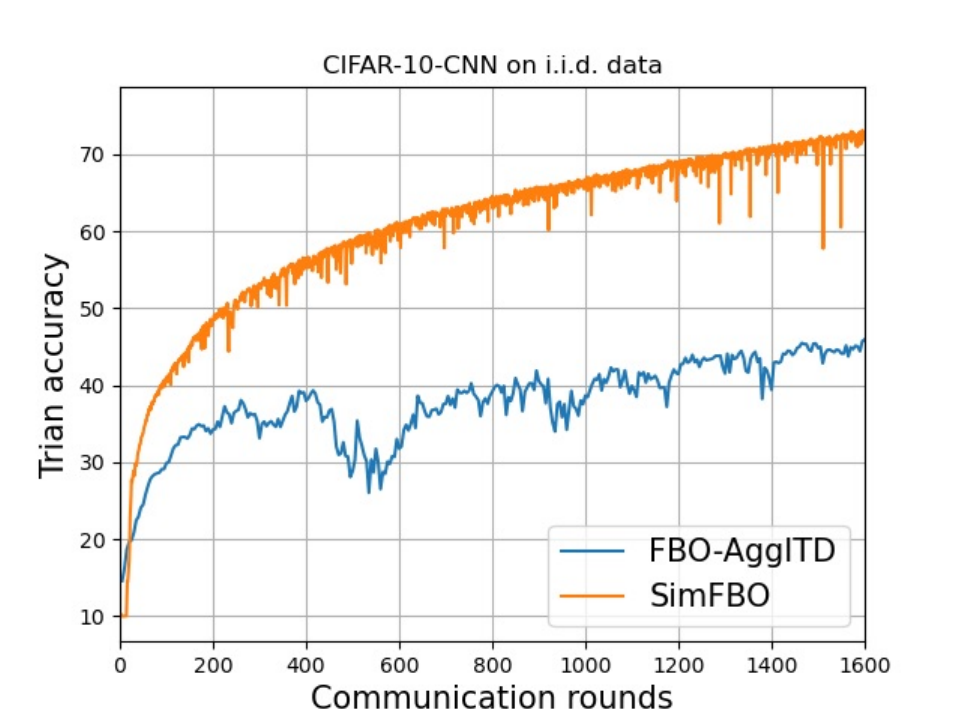}
    \vspace{-0.3cm} 
    \caption{Comparison among our SimFBO, FBO-AggITD~\cite{xiao2023communication}, FedNest~\cite{tarzanagh2022fednest} and  LFedNest~\cite{tarzanagh2022fednest}. The left and middle ones plot the training accuracy v.s.~\# of communication rounds on i.i.d.~MNIST datasets with MLP networks, and the right one plots the training accuracy v.s.~\# of rounds on i.i.d.~CIFAR-10 datasets with a 7-layer CNN.} 
    \label{realexptrain}
    \vspace{-0.2cm}
\end{figure}

\begin{figure}[H]
    \centering
    \vspace{-0.3cm}
    \includegraphics[width=0.35\textwidth]{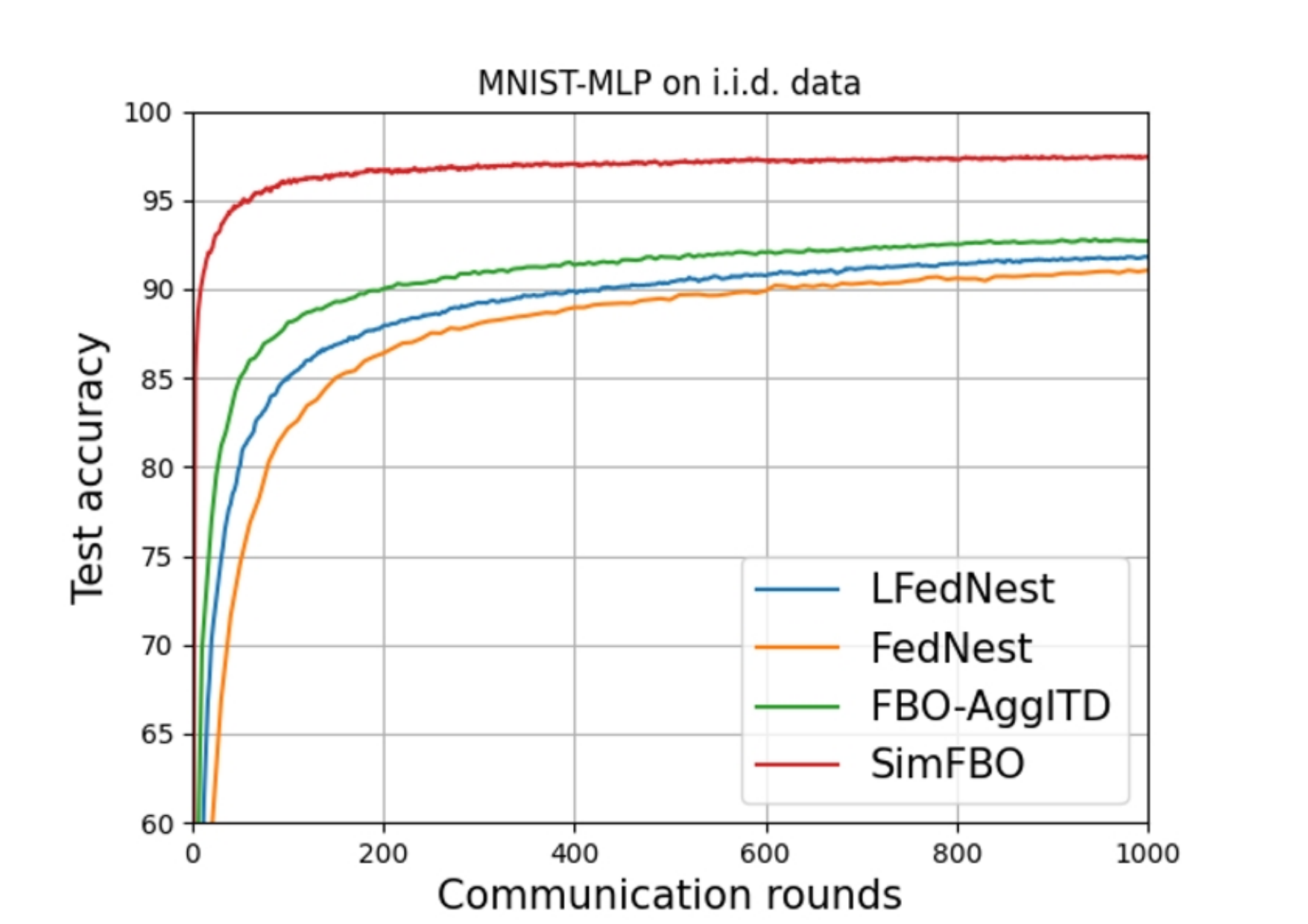}
    \hspace{-0.7cm}
    \includegraphics[width=0.35\textwidth]{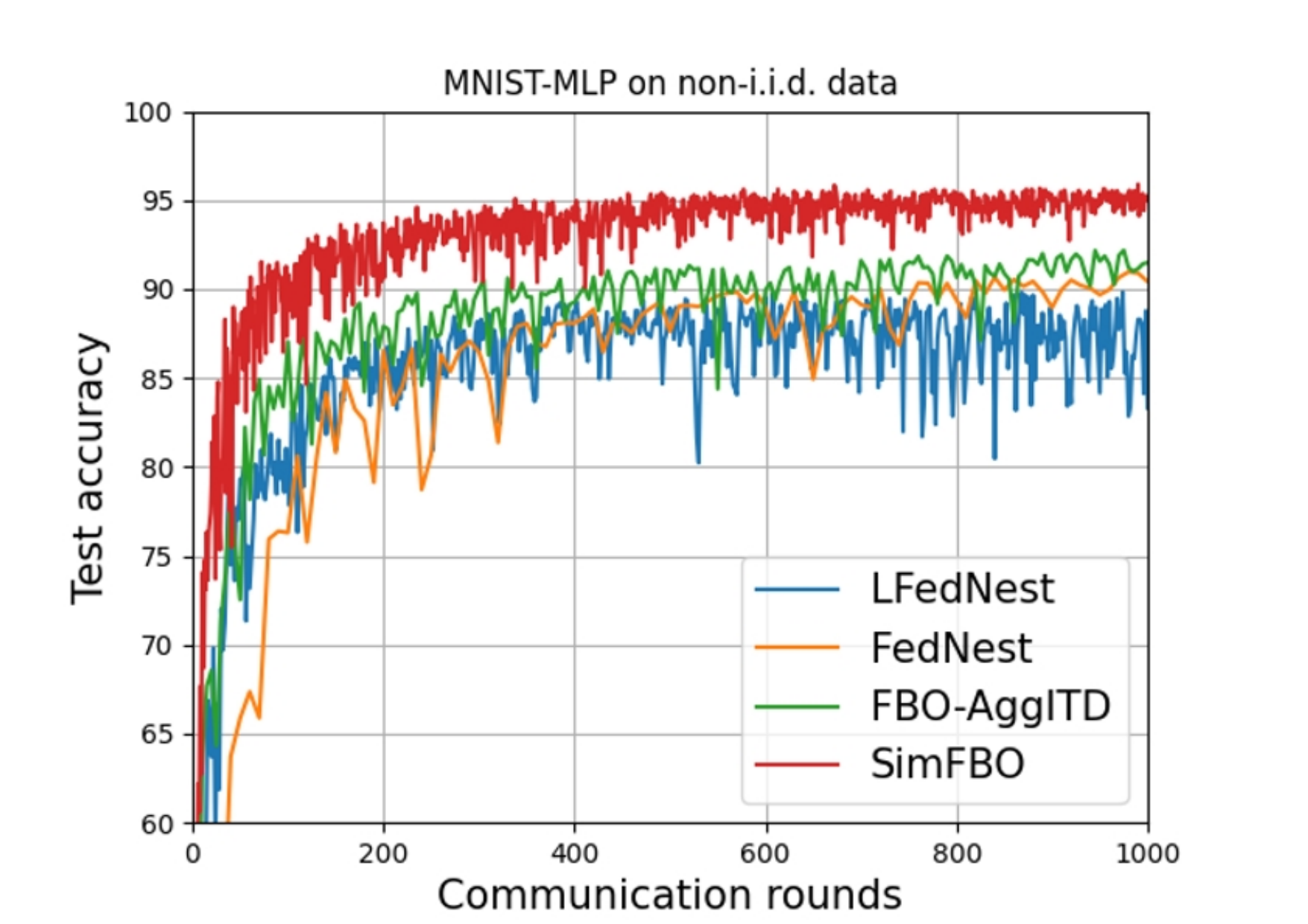}
    \hspace{-0.7cm}
    \includegraphics[width=0.35\textwidth]{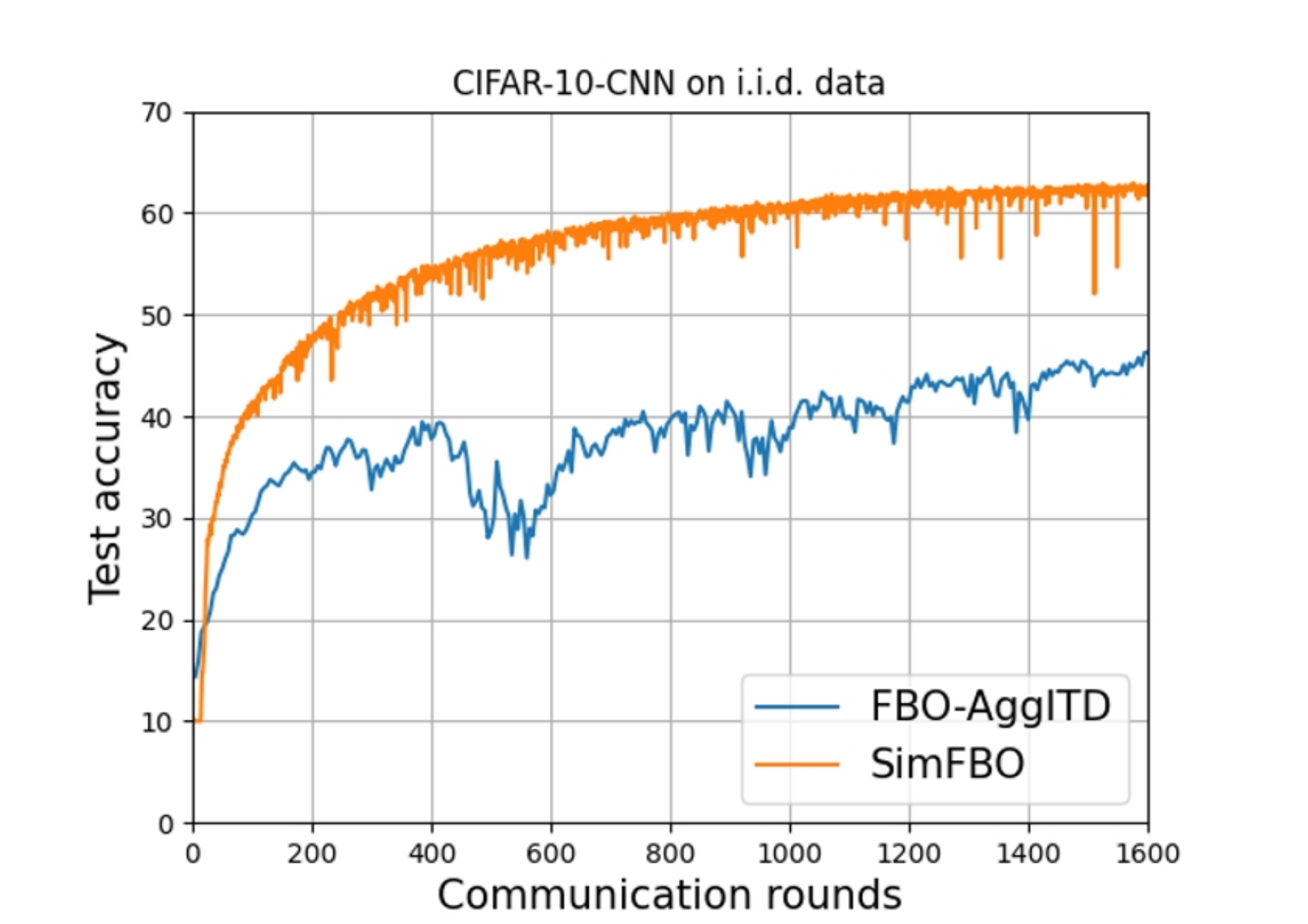}
    \vspace{-0.3cm}
    \caption{Comparison of different methods: the test accuracy v.s.~\# of communication rounds.} 
    \label{realexp}
    \vspace{-0.3cm}
\end{figure}

\section{Experiments}
In this section, we perform two hyper-representation experiments  to compare the performance of our proposed SimFBO algorithm with FBO-AggITD~\cite{xiao2023communication}, FedNest~\cite{xiao2023communication}, and LFedNest~\cite{tarzanagh2022fednest}, and validate the better performance  of ShroFBO in the presence of heterogeneous local computation. We test the performance on MNIST and CIFAR datasets with MLP and CNN backbones. We follow the same experimental setup and problem formulation as in \cite{tarzanagh2022fednest,xiao2023communication}. 
The details of all experimental specifications can be found in  \Cref{expdetails1}.



\noindent
\textbf{Comparison to existing methods. } 
The comparison results are presented in \Cref{realexptrain} and \Cref{realexp}. It can be seen that across different datasets and backbones, 
our proposed SimFBO consistently converges much faster than other comparison methods, while achieving a much higher training and test accuracy. We do not plot the curves of FedNest and LFedNest on CIFAR and CNN, because they are hard to converge under various hyperparameter configurations using their source codes.

\noindent
\textbf{Performance under heterogeneous local computation. }
We now test the performance in the setting where a total of $10$ clients perform a variable number of local steps. This is to simulate the scenario where clients have heterogeneous computing capabilities and hence can perform an uneven number of local updates. In this experiment, we choose the number $\tau_i$ of the client $i$'s local steps from the set $\{1,...,10\}$ uniform at random. As shown in \Cref{toyexample}, the proposed ShroFBO method performs the best due to the better resilience to such client heterogeneity.  
We also compare the convergence rate of our proposed SimFBO, FedNest and FBO-AggITD w.r.t.~running time. The results are provided in~\Cref{time}. All the settings for different algorithms are the same as in \Cref{expdetails2}. It can be seen that the proposed SimFBO still converges fastest with a higher test accuracy in terms of running time.

\begin{figure}[H]
    \centering
    \vspace{-0.25cm}
    \includegraphics[width=0.41\textwidth]{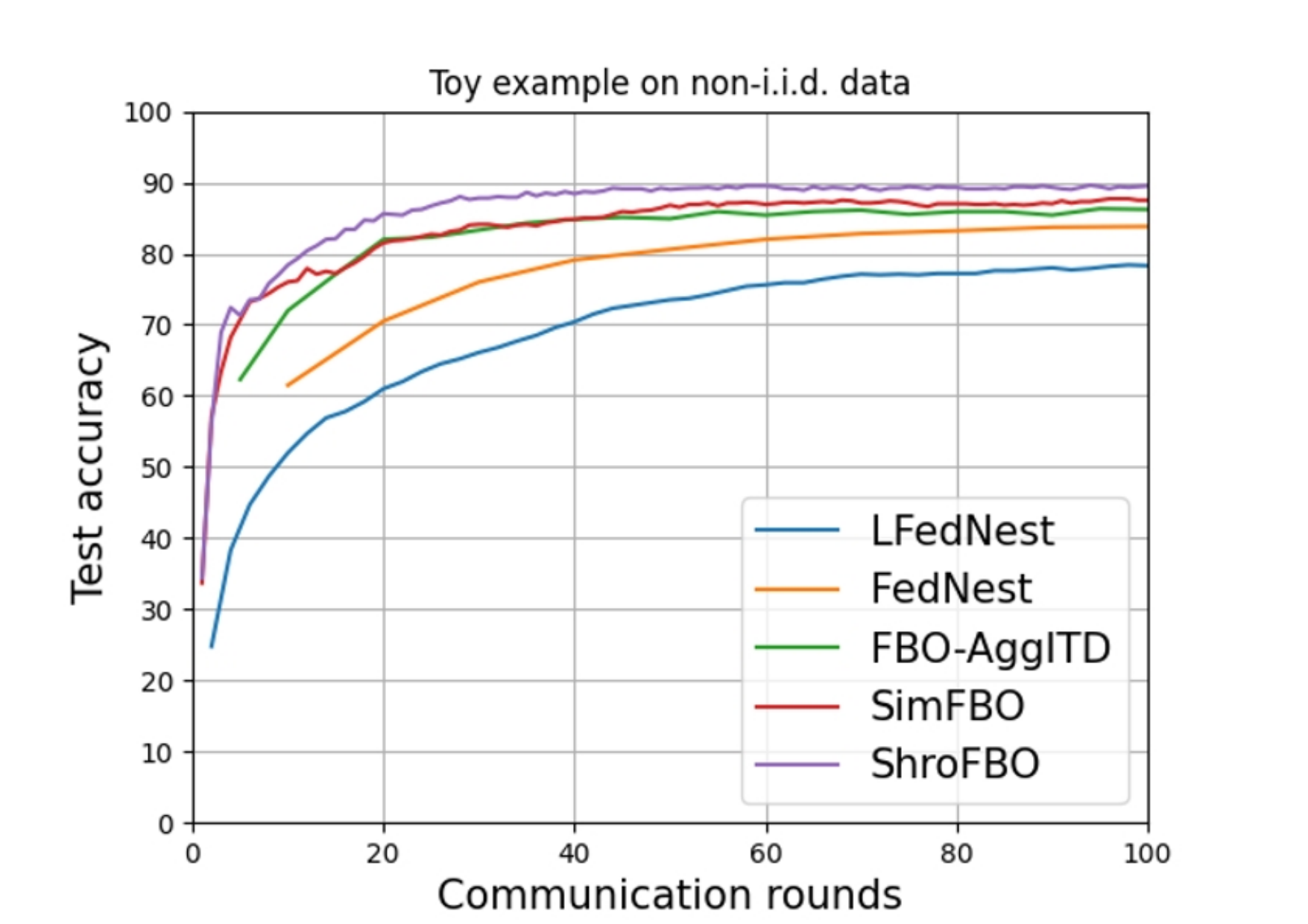}
    \includegraphics[width=0.41\textwidth]{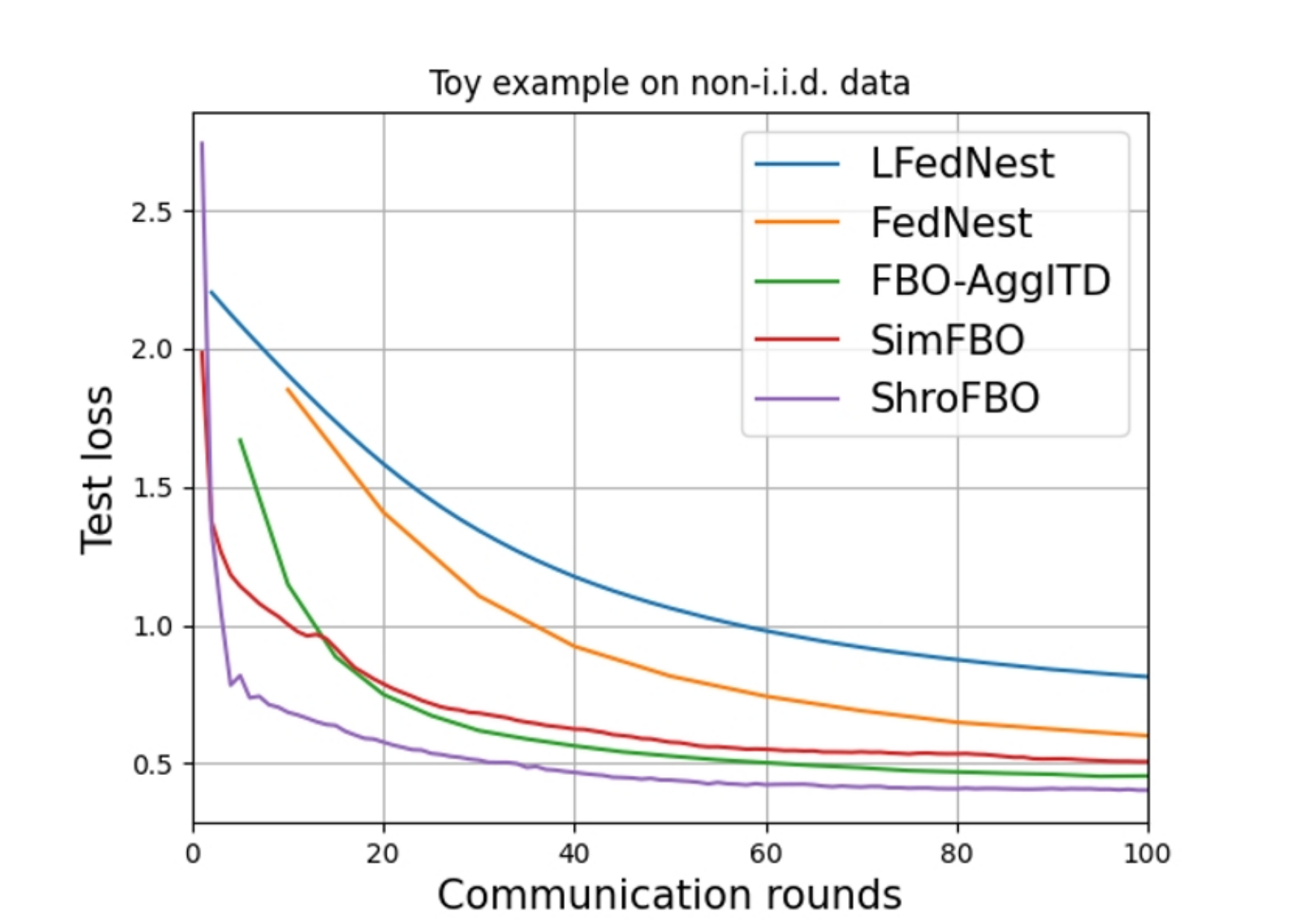}
        \vspace{-0.25cm}
    \caption{Comparison among different algorithms in the presence of heterogeneous local computation.} 
    \label{toyexample}
    \vspace{-0.2cm}
\end{figure}
\begin{figure}[H]
    \centering
    \vspace{-0.6cm}
\includegraphics[width=0.41\textwidth]{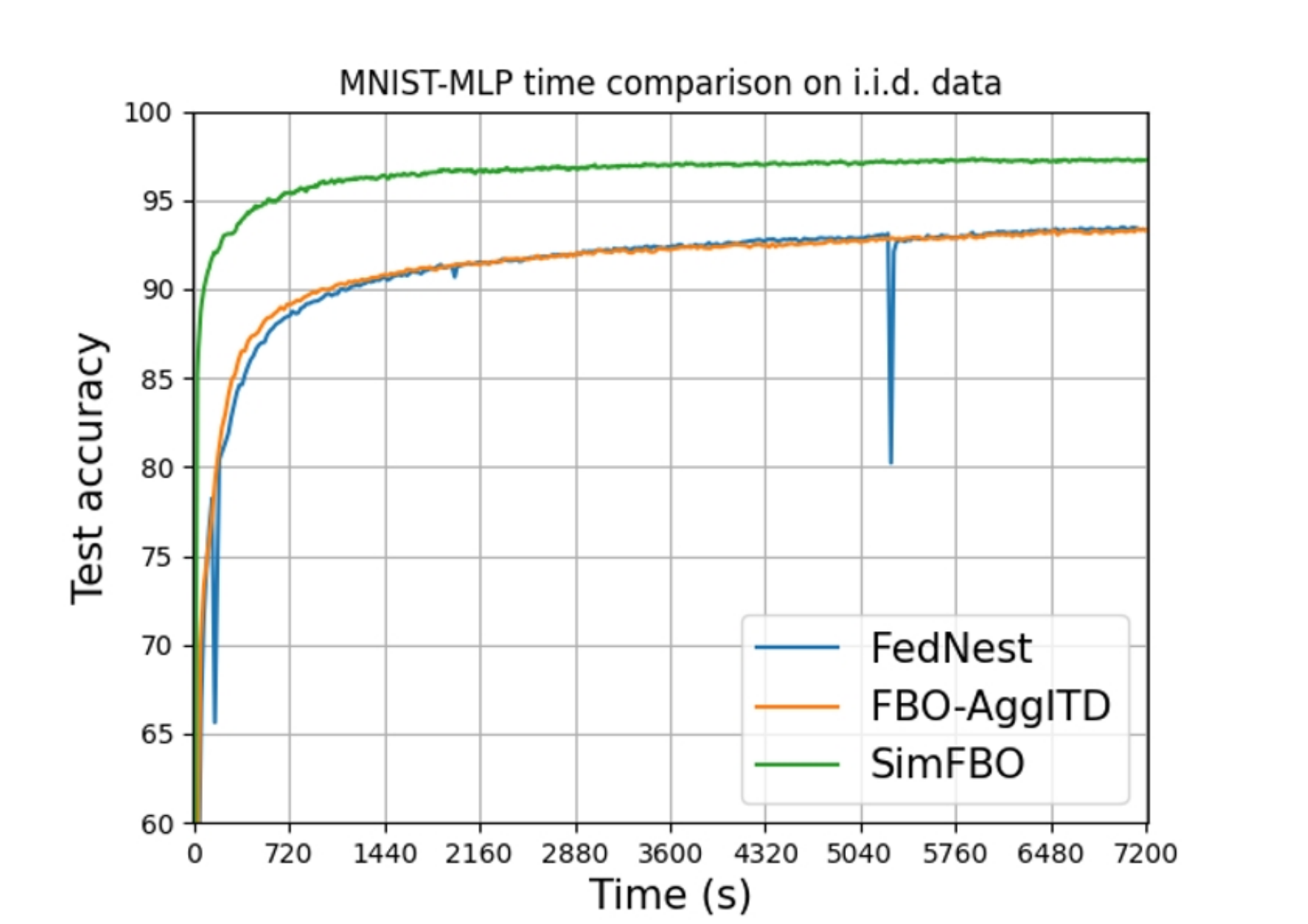}
\includegraphics[width=0.41\textwidth]{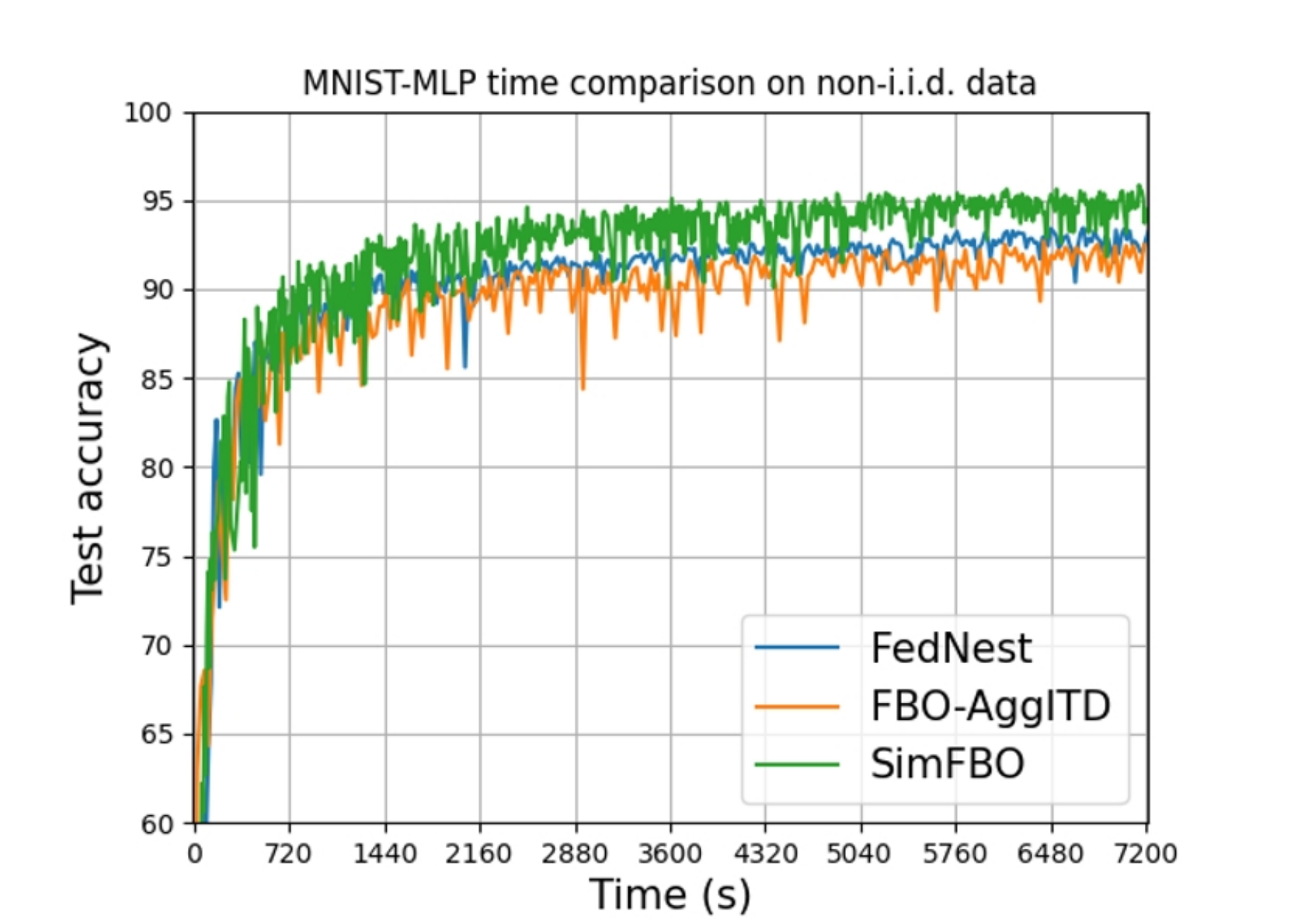}
        \vspace{-0.2cm}
    \caption{Time Comparison among different algorithms.} 
    \label{time}
    \vspace{-0.25cm}
\end{figure}


\section{Conclusion}
In this paper, we propose a simple and communication-efficient federated bilevel algorithm named SimFBO and its variant ShroFBO with better resilience to the system-level heterogeneity. We show that both SimFBO and ShroFBO allow for the more practical client sampling without replacement, and achieve better sample and communication complexities. Experiments demonstrate the great promise of the proposed methods. We anticipate that the proposed algorithms and the developed analysis can be applied to other distributed settings such as decentralized or asynchronous bilevel optimization, and the algorithms may be useful in applications such as hyperparameter tuning or fine-tuning in federated learning or AI-aware edge computing.

\bibliography{ref}
\bibliographystyle{abbrv}

\newpage
\appendix
\noindent
{\huge \bf Supplementary material}

\section{Specifications of Experiments}\label{expdetails}
\subsection{Model architecture and dataset}\label{expdetails1}

\textbf{MLP.} The 2-layer multilayer perceptron (MLP) has 784 input units and 200 hidden units so that the hidden layer parameters (157,000 parameters) are optimized for solving the upper-level problem and the output layer parameters (2,010 parameters) are optimized for solving the lower-level problem.

\noindent
\textbf{CNN.} We use the 7-layer CNN~\cite{lecun1998gradient} model to train CIFAR-10. We optimize the last fully connected layer's parameters for solving  the lower-level problem and optimize the rest layers' parameters for solving the upper-level problem.

\noindent
\textbf{Dataset.} For the hyper-representation experiment, we use full MNIST and CIFAR-10 datasets. In the experiment with heterogeneous local computation, we treat the first 2000 images in MNIST's default training dataset as the training data and the first 1000 images in MNIST's default test dataset as test data.




\subsection{Hyperparameter settings} \label{expdetails2}
For all comparison methods, we optimize their hyperparameters via grid search guided by the default values in their source codes, to ensure the best performance given the algorithms are convergent.  

\noindent
\textbf{Comparison to existing methods.} 
First of all, for all methods, 10 clients from 100 clients are chosen randomly and participate in each communication round. For the baseline methods FedNest and LFedNest, we use their published codes in \href{https://github.com/ucr-optml/FedNest}{https://github.com/ucr-optml/FedNest}. 
For FBO-AggITD, we use the source codes sent from the authors. 
For our method, SimFBO, we take the number of local updates,$\tau_i$, for each client $i$ to be 1, $a_i^{(t,k)}$ to be 1, and $\widetilde{p_i}$ to be 0.1. In MNIST-MLP experiment,
the stepsizes $[\eta_y, \eta_v, \eta_x]$ and $[\gamma_y,\gamma_v,\gamma_x]$ of our method for updating $[y_i^{t,k}(y^t), v_i^{t,k}(v^t), x_i^{t,k}(x^t)]$ are both [0.2, 0.1, 0.05], respectively.
Second, for the CIFAR-10-CNN experiment under the i.i.d.~setup, we only draw the result of FBO-AggITD and SimFBO because other algorithms cannot converge under various hyperparameter configurations. We take the best inner stepsize as 0.003 and the best outer stepsize as 0.005 for FBO-AggITD, and the stepsizes $[\eta_y, \eta_v, \eta_x]$ and $[\gamma_y,\gamma_v,\gamma_x]$ of our method for updating $[y_i^{t,k}(y^t), v_i^{t,k}(v^t), x_i^{t,k}(x^t)]$ are both [0.1, 0.05, 0.03], respectively.

\noindent
\textbf{Performance under heterogeneous local computation.} In this experiment, we compare the results among LFedNest, FedNest, FBO-AggITD, SimFBO, and ShroFBO. For all methods, the numbers of local updates of different clients are randomly chosen in the range from 1 to 10 to simulate the system-level heterogeneity and there are a total of 10 clients participating during the entire procedure.
In specific, inner stepsizes for LFedNest, FedNest, FBO-AggITD are [0.002, 0.01, 0.005], respectively while outer stepsizes for LFedNest, FedNest, FBO-AggITD are [0.005, 0.03, 0.04], respectively. Other hyperpameters of the above mentioned three methods are chosen the same as in the above experiment. Then for ShroFBO, we keep choosing $a_i^{(t,k)}$ for each client $i$ as 1 but the values of $\|a_j^{(t)}\|_1$ for $j\in[1,n]$ would be different since the number of local updates $\tau_i$ is randomly chosen between 1 and 10. The value of $p_j$ is chosen as 0.1.
Similarly, the stepsizes $[\eta_y, \eta_v, \eta_x]$ and $[\gamma_y,\gamma_v,\gamma_x]$ of our method for updating $[y_i^{t,k}(y^t), v_i^{t,k}(v^t), x_i^{t,k}(x^t)]$ are both [0.03, 0.02, 0.01], respectively. Lastly, SimFBO has the same settings mentioned above.




\section{Notations}
For notational convenience, we define 
\begin{align*}
\widetilde{F}(x,y):= \sum_{i=1}^n w_i f_i(x,y),\;
\widetilde{G}(x,y) := \sum_{i=1}^n w_i g_i(x,y),\; \widetilde{R}(w,y,v) := \sum_{i=1}^n w_i R_i(x,y,v).
\end{align*}
For SimBFO, the problem we solve here is 
\begin{align}
&\min_{x\in\mathbb{R}^{p}} \widetilde{\Phi}(x)=\widetilde{F}(x, \widetilde{y}^*(x)) : =\sum_{i=1}^{n} w_if_i(x,\widetilde{y}^*(x)) = \sum_{i=1}^{n} w_i\mathbb{E}_{\xi}\big[f_i(x,\widetilde{y}^*(x);\xi)\big] \nonumber \\
&\;\;\mbox{s.t.} \; \widetilde{y}^*(x)= \argmin_{y\in\mathbb{R}^q} \widetilde{G}(x, y) : =\sum_{i=1}^{n} w_ig_i(x,y) = \sum_{i=1}^{n} w_i\mathbb{E}_{\zeta}\big[g_i(x,y;\zeta)\big]. \nonumber
\end{align}
Similarly, we define 
\begin{align}
\widetilde{\Phi}(x) := \widetilde{F}(x,\widetilde{y}^*),\; \nabla\widetilde{\Phi}(x) := \sum_{i=1}^n w_i \bar{\nabla}f(x, \widetilde{y}^*, \widetilde{v}^*)
\end{align}
where $\widetilde{y}^* = \argmin_y \widetilde{G}(x,y)$ and $\widetilde{v}^* = \argmin_v \widetilde{R}(x,\widetilde{y}^*,v)$.

\noindent
We can see that $\widetilde{y}^*$ and $\widetilde{v}^*$ are unique due to the strong convexity of $g_i(x,y)$ and $R_i(x,y,v)$. 
Client updates are aggregated to compute $\{h^{(t)}_{x,i}, h^{(t)}_{y,i}, h^{(t)}_{v,i}\}$ as 
\begin{align}
    h_{y,i}^{(t)} &= \frac{1}{\|a_i^{(t)}\|_1}\sum_{k=1}^{\tau_i^{(t)}}a_i^{(t,k)}\nabla_yg_i(x_i^{(t,k)},y_i^{(t,k)}; \zeta_i^{(t,k)}), \nonumber\\ 
    h_{v,i}^{(t)} &= \frac{1}{\|a_i^{(t)}\|_1}\sum_{k=1}^{\tau_i^{(t)}}a_i^{(t,k)}\nabla_vR_i(x_i^{(t,k)},y_i^{(t,k)},v_i^{(t,k)}; \psi_i^{(t,k)}), \nonumber\\
    h_{x,i}^{(t)} &= \frac{1}{\|a_i^{(t)}\|_1}\sum_{k=1}^{\tau_i^{(t)}}a_i^{(t,k)}\bar{\nabla}f_i(x_i^{(t,k)},y_i^{(t,k)},v_i^{(t,k)}; \xi_i^{(t,k)}), \nonumber
\end{align}
and their expectations are 
\begin{align}
    \widetilde{h}_{y,i}^{(t)} &= \mathbb{E}[h_{y,i}^{(t)}] = \frac{1}{\|a_i^{(t)}\|_1}\sum_{k=1}^{\tau_i^{(t)}}a_i^{(t,k)}\nabla_yg_i(x_i^{(t,k)},y_i^{(t,k)}), \nonumber\\
    \widetilde{h}_{v,i}^{(t)} &= \mathbb{E}[h_{v,i}^{(t)}] =\frac{1}{\|a_i^{(t)}\|_1}\sum_{k=1}^{\tau_i^{(t)}}a_i^{(t,k)}\nabla_vR_i(x_i^{(t,k)},y_i^{(t,k)},v_i^{(t,k)}), \nonumber\\
    \widetilde{h}_{x,i}^{(t)} &= \mathbb{E}[h_{x,i}^{(t)}] =\frac{1}{\|a_i^{(t)}\|_1}\sum_{k=1}^{\tau_i^{(t)}}a_i^{(t,k)}\bar{\nabla}f_i(x_i^{(t,k)},y_i^{(t,k)},v_i^{(t,k)}) \nonumber
\end{align}
for all $t \in \{0,1,...,T-1\}$, $k \in \{0,1,...,\tau_i-1\}$ and $i \in \{1,2,...,n\}$.

\noindent
To ensure the robustness of server updates, we set $\alpha_{\min} \leq a_i^{(t,k)} \leq \alpha_{\max}$ for all $t= 0, 1, ..., T$, $i = 1,2, ... n$ and $k = 0, 1, ..., \tau_i-1$; we also set $\frac{\beta_{\min}}{n} \leq w_i \leq \frac{\beta_{\max}}{n}$ and $\frac{\beta_{\min}'}{n} \leq p_i \leq \frac{\beta_{\max}'}{n}$ for all $i = 1,2, ... n$.

\noindent
At global iteration $t$, the server samples $|C^{(t)}|$ clients without replacement (\textbf{WOR}) uniformly at random. 
On the server side, the aggregated client $i$ update is weighed by $\widetilde{w}_i = \frac{n}{|C^{(t)}|}w_i$. 
The aggregates $\{h^{(t)}_y,h^{(t)}_v, h^{(t)}_x\}$ computed at the server are of the form
\begin{align}
    h^{(t)}_y = \sum_{i \in C^{(t)}}\widetilde{w}_ih^{(t)}_{y,i},\quad 
    h^{(t)}_v = \sum_{i \in C^{(t)}}\widetilde{w}_ih^{(t)}_{v,i}, \quad
    h^{(t)}_x = \sum_{i \in C^{(t)}}\widetilde{w}_ih^{(t)}_{x,i}, \nonumber
\end{align}
and we also have partial clients expectation as 
\begingroup
\allowdisplaybreaks
\begin{align}
    \mathbb{E}_{C_{(t)}}\big[h^{(t)}_y\big] &= \mathbb{E}_{C_{(t)}}\bigg[\sum_{i=1}^n \mathbb{I}(i\in C^{(t)})\widetilde{w}_ih^{(t)}_{y,i}\bigg] = \sum_{i=1}^n w_ih^{(t)}_{y,i}, \nonumber\\
    \mathbb{E}_{C_{(t)}}\big[h^{(t)}_v\big] &= \mathbb{E}_{C_{(t)}}\bigg[\sum_{i=1}^n \mathbb{I}(i\in C^{(t)})\widetilde{w}_ih^{(t)}_{v,i}\bigg] = \sum_{i=1}^n w_ih^{(t)}_{v,i}, \nonumber\\
    \mathbb{E}_{C_{(t)}}\big[h^{(t)}_x\big] &= \mathbb{E}_{C_{(t)}}\bigg[\sum_{i=1}^n \mathbb{I}(i\in C^{(t)})\widetilde{w}_ih^{(t)}_{x,i}\bigg] = \sum_{i=1}^n w_ih^{(t)}_{x,i}.\nonumber
\end{align}
\endgroup
Generally, the expectations we use contain both the expectations of samples and the expectations of clients. 
And in analysis, we simply define $|C^{(t)}| = P$ for all $t$. And server updates $y^{(t+1)}$, $v^{(t+1)}$, $x^{(t+1)}$ as 
\begin{align}
    y^{(t+1)} &= y^{(t)} - \rho^{(t)}\gamma_y h^{(t)}_y, \nonumber\\
    v^{(t+1)} &= \mathcal{P}_{r}\big(v^{(t)} - \rho^{(t)}\gamma_v h_{v}^{(t)}\big),\nonumber\\ 
    x^{(t+1)} &= x^{(t)} - \rho^{(t)}\gamma_x h_{x}^{(t)},\nonumber
\end{align}
where the auxiliary projection function is defined as $\mathcal{P}_{r}(v) := \min\{1, \frac{r}{\|v\|}\}v$ and $r = \frac{L_f}{\mu_g}$ is the server side auxiliary projection radius. To simplify the problem, we set $a_i^{(t,k)}$ such that $\rho^{(t)} \in [\frac{1}{2}\bar{\rho}, \frac{3}{2}\bar{\rho}]$ and $c_{a}'\bar{\tau}\alpha_{\min} \leq \|a_i^{(t)}\|_1 \leq c_{a}\bar{\tau}\alpha_{\max}$ for some positive constant $c_a$, $c_{a}'$ and $i \in C^{(t)}$, where $\bar{\rho}:= \frac{1}{T}\sum_{t=0}^{T-1}\rho^{(t)}$ and $\bar{\tau}:=\sum_{i=1}^n \tau_i$. 

\section{Proofs of Preliminary Lemmas}

\begin{lemma}[Boundedness of $v^*$]\label{lm:boundofv}
Under Assumptions~\ref{as:diffandSC} and~\ref{as:Lipschitz}, we have  
for $v^*$ in~\cref{def:R},
    $\norm{v^*}^2 \leq \frac{L^2_{f}}{\mu^2_g}.$
\end{lemma}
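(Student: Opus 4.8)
The plan is to exploit the fact that $v^*$ is the unique minimizer of the strongly convex quadratic $R(x,y,\cdot)$ in \cref{def:R}, so that it solves the associated linear system explicitly and can be bounded term by term. First I would invoke Assumption~\ref{as:diffandSC}: since each $g_i$ is $\mu_g$-strongly convex in $y$ and the weights $p_i$ are nonnegative with $\sum_{i=1}^n p_i = 1$, the global Hessian satisfies $\nabla_{yy}^2 G(x,y) \succeq \mu_g I$, hence it is invertible with spectral norm bound $\|[\nabla_{yy}^2 G(x,y)]^{-1}\| \le 1/\mu_g$. In particular $R(x,y,\cdot)$ is $\mu_g$-strongly convex, so it has a unique minimizer characterized by the first-order optimality condition $\nabla_v R(x,y,v^*) = \nabla_{yy}^2 G(x,y)\,v^* - \nabla_y F(x,y) = 0$, i.e. $v^* = [\nabla_{yy}^2 G(x,y)]^{-1}\nabla_y F(x,y)$.

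Next I would bound $\|\nabla_y F(x,y)\|$. By Assumption~\ref{as:Lipschitz} each $f_i$ is $L_f$-Lipschitz continuous, so $\|\nabla_y f_i(x,y)\| \le \|\nabla f_i(x,y)\| \le L_f$; since $F = \sum_{i=1}^n p_i f_i$ with $p_i \ge 0$ and $\sum_i p_i = 1$, the triangle inequality gives $\|\nabla_y F(x,y)\| \le \sum_{i=1}^n p_i \|\nabla_y f_i(x,y)\| \le L_f$. Combining this with the operator-norm bound on the inverse Hessian yields $\|v^*\| \le \|[\nabla_{yy}^2 G(x,y)]^{-1}\|\,\|\nabla_y F(x,y)\| \le L_f/\mu_g$, and squaring gives the claimed bound $\|v^*\|^2 \le L_f^2/\mu_g^2$.

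There is essentially no hard step here; this lemma is a short sanity check whose purpose is to justify taking the fixed projection radius $r = L_f/\mu_g$ in \cref{eq:serverupdate}. The only points requiring a little care are (i) noting that the bound is uniform in $(x,y)$, since neither $\mu_g$ nor $L_f$ depends on the evaluation point, and (ii) applying the $L_f$-Lipschitz hypothesis to each $f_i$ and then passing to the convex combination $F$, rather than assuming Lipschitzness of $F$ outright.
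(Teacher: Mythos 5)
Your proof is correct and follows essentially the same route as the paper: write $v^*$ in closed form as $[\nabla_{yy}^2 G(x,y^*)]^{-1}\nabla_y F(x,y^*)$ via the optimality condition, then bound the inverse Hessian by $1/\mu_g$ (strong convexity) and the gradient by $L_f$ (Lipschitzness of the $f_i$). The paper's version simply compresses these steps into a single chain of inequalities, while you make the intermediate bounds on $\|\nabla_y F\|$ and the operator norm explicit.
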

\begin{proof}
Remind that we define $v^* = \arg \min_v R(x,y^*,v)$, then we have
\begin{align}
    \norm{v^*}^2 = \norm{\big[\nabla_{yy}^2G(x,y^*)\big]^{-1}\nabla_y F(x,y^*)}^2 \leq \norm{\big[\nabla_{yy}^2G(x,y^*)\big]^{-1}}^2 \norm{\nabla_y F(x,y^*)}^2 \overset{(a)}{\leq} r^2, \nonumber
\end{align}
where (a) follows Assumptions~\ref{as:diffandSC},~\ref{as:Lipschitz} and defines $r := \frac{L_{f}}{\mu_g}$. Then, the proof is complete. 
\end{proof}

\begin{lemma}[Boundedness of local $v$]\label{lm:boundofvi} 
Under Assumptions~\ref{as:diffandSC} and~\ref{as:Lipschitz}, for each global iteration $t$, client $i$, and local iteration $k=1,2,..., \tau_i$, we have 
\begin{align}
     r_i := \|v_i^{(t,k)}\| \leq \Big( 1 + \frac{\alpha_{\max}}{\alpha_{\min}} \Big) r =: r_{\max}, \nonumber
\end{align}
where $r = \frac{L_f}{\mu_g}$ is the server side auxiliary projection radius. 
\end{lemma}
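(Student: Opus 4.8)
The plan is to prove the bound by induction on the local iteration index $k$, exploiting the structure of the local $v$-update in \cref{eq:localupdate}. The base case $k=0$ is immediate: by construction $v_i^{(t,0)} = v^{(t)}$, and since $v^{(t)}$ is obtained by applying the projection $\mathcal{P}_r$ on the server side, we have $\|v_i^{(t,0)}\| = \|v^{(t)}\| \le r \le (1 + \alpha_{\max}/\alpha_{\min}) r$. For the inductive step, I would write the update explicitly:
\begin{align}
v_i^{(t,k+1)} = v_i^{(t,k)} - a_i^{(t,k)}\eta_v^{(t)} \big(\nabla_{yy}^2 g_i(x_i^{(t,k)}, y_i^{(t,k)}) v_i^{(t,k)} - \nabla_y f_i(x_i^{(t,k)}, y_i^{(t,k)})\big) + (\text{noise}), \nonumber
\end{align}
and recognize this as a gradient step on the strongly convex quadratic $R_i(x_i^{(t,k)}, y_i^{(t,k)}, \cdot)$, whose unique minimizer is $[\nabla_{yy}^2 g_i]^{-1}\nabla_y f_i$, which by the argument in \Cref{lm:boundofv} (applied to $g_i, f_i$ rather than the averaged $g, f$) has norm at most $r = L_f/\mu_g$.

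The key step is a contraction estimate toward that per-client minimizer. Writing $v_i^\star = v_i^\star(x_i^{(t,k)}, y_i^{(t,k)}) := [\nabla_{yy}^2 g_i(x_i^{(t,k)}, y_i^{(t,k)})]^{-1}\nabla_y f_i(x_i^{(t,k)}, y_i^{(t,k)})$, a standard gradient-descent-on-strongly-convex-quadratic bound gives, for a suitably small stepsize $a_i^{(t,k)}\eta_v^{(t)}$ (controlled since $a_i^{(t,k)}\le\alpha_{\max}$ and $\eta_v^{(t)}$ is chosen small in \cref{eq:stepsize}),
\begin{align}
\|v_i^{(t,k+1)} - v_i^\star\| \le \|v_i^{(t,k)} - v_i^\star\| \nonumber
\end{align}
in the deterministic (or conditional-expectation) sense; combined with $\|v_i^\star\|\le r$ and the triangle inequality, this would close the induction but only yield the weaker bound $\|v_i^{(t,k)}\|\le 2r$. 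To get the stated constant $(1 + \alpha_{\max}/\alpha_{\min})r$ I expect the argument is slightly different: one telescopes over $k$, bounding $\|v_i^{(t,k)}\| \le \|v^{(t)}\| + \sum_{j=0}^{k-1} a_i^{(t,j)}\eta_v^{(t)}\|\nabla_v R_i(\cdots)\|$, controls each gradient norm via strong convexity and smoothness in terms of the distance to the minimizer, and uses the ratio $\|a_i^{(t)}\|_1 / a_i^{(t,k)} \le \|a_i^{(t)}\|_1/\alpha_{\min}$ together with $a_i^{(t,k)}\le\alpha_{\max}$; the factor $\alpha_{\max}/\alpha_{\min}$ strongly suggests the bound arises from comparing the cumulative coefficient weight $\|a_i^{(t)}\|_1$ against a single coefficient, so I would set up the recursion to make that ratio appear naturally.

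The main obstacle will be getting the constant exactly right: a crude triangle-inequality argument gives $2r$, so the sharper $(1+\alpha_{\max}/\alpha_{\min})r$ requires carefully tracking how the projection radius $r$ on the server side propagates through $\tau_i^{(t)}$ local steps, and in particular showing that the accumulated drift away from the ball of radius $r$ is itself bounded by $(\alpha_{\max}/\alpha_{\min})r$ rather than growing with $\tau_i^{(t)}$. This is precisely where the per-client stepsize normalization and the strong convexity of $R_i$ (giving geometric decay of $\|v_i^{(t,k)} - v_i^\star\|$) must be combined. Once the induction hypothesis $\|v_i^{(t,k)}\| \le r_i$ is in place, the smoothness constant $L_1$ and the bound $r_i$ feed back consistently (note $r_i\sigma_{gg}^2$ and $r_i^2 L_1^2$ already appear in \Cref{pps:propositionofCD}, confirming the constant $r_i$ is meant to be reused downstream), so I would verify at the end that the chosen stepsizes in \cref{eq:stepsize} are indeed small enough for the contraction to hold with this explicit $r_i$.
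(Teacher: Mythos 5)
Your high-level plan (base case from the server-side projection, then control the growth over local steps using the strong convexity of $R_i$ and the ratio $\alpha_{\max}/\alpha_{\min}$) points in the right direction, but neither of the two routes you sketch actually closes, and the one missing ingredient is precisely the step the paper's proof is built on. Writing the update as
\begin{align}
v_i^{(t,k)} = \big(I - \eta_v a_i^{(t,k-1)}\nabla^2_{yy}g_i(\cdot;\psi)\big)v_i^{(t,k-1)} + \eta_v a_i^{(t,k-1)}\nabla_y f_i(\cdot;\psi), \nonumber
\end{align}
one takes norms directly to obtain the scalar recursion $\|v_i^{(t,k)}\| \le (1-\eta_v a_i^{(t,k-1)}\mu_g)\|v_i^{(t,k-1)}\| + \eta_v a_i^{(t,k-1)}L_f$; telescoping this and bounding the resulting geometric series by $\sum_{j\ge 0}(1-\eta_v\alpha_{\min}\mu_g)^j\,\eta_v\alpha_{\max}L_f \le \frac{\alpha_{\max}}{\alpha_{\min}}\cdot\frac{L_f}{\mu_g}$ yields the additive term $\frac{\alpha_{\max}}{\alpha_{\min}}r$ on top of $\|v_i^{(t,0)}\| = \|v^{(t)}\|\le r$, with no induction hypothesis on $r_i$ needed at all. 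Your proposed telescoping $\|v_i^{(t,k)}\| \le \|v^{(t)}\| + \sum_j \eta_v a_i^{(t,j)}\|\nabla_v R_i(\cdots)\|$ drops the contraction factor from the $\|v_i^{(t,k-1)}\|$ term, and then the sum stays bounded only if you separately prove that the gradient norms decay geometrically --- which is equivalent to establishing the recursion above, so you have identified the obstacle without resolving it.

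Your first route (non-expansiveness toward the per-client minimizer, giving ``$2r$'') has an additional unaddressed problem: the minimizer $v_i^\star(x_i^{(t,k)},y_i^{(t,k)})$ moves at every local step because $x_i$ and $y_i$ are updated simultaneously with $v_i$, so the one-step non-expansiveness does not telescope and the minimizer-drift terms accumulate over $k$. (Also note $(1+\alpha_{\max}/\alpha_{\min})r \ge 2r$, so the stated constant is the \emph{weaker} one --- the lemma is not after a sharper constant than $2r$, just one obtainable by the crude scalar recursion.) Finally, be aware that the argument is applied pathwise to the stochastic update, so it implicitly uses $\nabla^2_{yy}g_i(\cdot;\psi)\succeq \mu_g I$ and $\|\nabla_y f_i(\cdot;\psi)\|\le L_f$ per sample, not merely in expectation; your ``(noise)'' term cannot simply be averaged away here.
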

\begin{proof}
By the local update rule of $v_i^{(t,k)}$ from step 6 in Algorithm \ref{alg:main}, we have 
\begin{align}
    v_i^{(t,k)} &= v_i^{(t,k-1)} - \eta_v a_i^{(t,k-1)} \nabla_v R_i(x_i^{(t,k-1)}, y_i^{(t,k-1)}, v_i^{(t,k-1)}; \psi_i^{(t,k-1)}) \nonumber\\
    &= \big(I - \eta_v a_i^{(t,k-1)} \nabla^2_{yy} g_i(x_i^{(t,k-1)}, y_i^{(t,k-1)}; \psi_i^{(t,k-1)})\big)v_i^{(t,k-1)} \nonumber\\
    &\quad \ + \eta_v a_i^{(t,k-1)} \nabla_y f_i(x_i^{(t,k-1)}, y_i^{(t,k-1)}; \psi_i^{(t,k-1)}). \nonumber
\end{align}
By taking $l_2$ norm, we have 
\begin{align}\label{eq:boundofvi1}
    \|v_i^{(t,k)}\| \leq (1-\eta_v a_i^{(t,k-1)}\mu_g)\|v_i^{(t,k-1)}\| + \eta_va_i^{(t,k-1)}L_f.
\end{align}
Telescope \cref{eq:boundofvi1} over $j \in [0, ... ,k-1]$, and we have 
\begin{align}
    \|v_i^{(t,k)}\| &\leq (1-\eta_v a_i^{(t,k-1)}\mu_g)\|v_i^{(t,k-1)}\| + \eta_va_i^{(t,k-1)}L_f \nonumber\\
    & \leq (1-\eta_v \alpha_{\min}\mu_g)^k \|v_i^{(t,0)}\| + \sum_{j=0}^{k-1}(1-\eta \alpha_{\min}\mu_g)^j\eta_v \alpha_{\max}L_f \nonumber\\
    & \leq (1-\eta_v \alpha_{\min}\mu_g)^k \|v_i^{(t,0)}\| + \frac{\alpha_{\max}L_f}{\alpha_{\min}\mu_g} \nonumber\\
    & \leq \Big( 1 + \frac{\alpha_{\max}}{\alpha_{\min}} \Big) r.\nonumber
\end{align}
Then, the proof is complete. 
\end{proof}
\noindent
Since both $\alpha_{\max}$ and $\alpha_{\min}$ are preset constants, the bound of local $v_i$ has the same order $\kappa$ as server-side auxiliary projection radius $r$. 

\begin{lemma}[Basic properties of linear system function R]\label{lm:propertiesofR}
Under Assumptions~\ref{as:diffandSC} and~\ref{as:Lipschitz}, we have $R_i(x,y,v)$ is $\mu_g$-strongly convex w.r.t $v$ and $\nabla_v R_i(x,y,v)$ is $L_R$-Lipschitz continuous w.r.t $(x,y)$, where we define $L_R^2 := 2(L^2_2r^2_{\max} + L_1^2)$.
\end{lemma}
\begin{proof}
The strong convexity can be easily observed since $\nabla_{vv}^2 R(x,y,v) = \nabla_{yy}^2 g(x,y) \succeq \mu_g I$. 
By using the definition of $R_i(x,y,v)$ and assumptions~\ref{as:diffandSC} and~\ref{as:Lipschitz}, we have
\begin{align}
    \|\nabla_v& R_i(x_1,y_1,v) - \nabla_v R_i(x_2,y_2,v)\|^2 \nonumber \\
    & \leq 2\|\big(\nabla_{yy}^2 g(x_1,y_1) - \nabla_{yy}^2 g(x_2,y_2)\big)v\|^2 + 2\|\nabla_y f(x_1,y_1) - \nabla_y f(x_2,y_2)\|^2\nonumber \\
    & \leq 2(L^2_2r^2_{\max} + L_1^2)(\|x_1-x_2\|^2 + \|y_1-y_2\|^2) = L_R^2(\|x_1-x_2\|^2 + \|y_1-y_2\|^2). \nonumber 
\end{align}
Then, the proof is complete.
\end{proof}

\begin{lemma}[\cite{ghadimi2018approximation} lemma 2.2, \cite{chen2021closing} lemma 2 and extensions]\label{lm:3ieq}
Under Assumptions~\ref{as:diffandSC},~\ref{as:Lipschitz} and ~\ref{as:varaince}, we have, for all $x, x_1 , x_2 \in \mathbb{R}^{d_{x}}$ and $y \in \mathbb{R}^{d_{y}}$,  
\begin{align}
    \|\nabla f(x,y) - \nabla \Phi(x)\|^2 \leq \widetilde{L}^2\|y - y^*(x)\|^2 ,&\ \ 
    \|\nabla \Phi(x_1) - \nabla \Phi(x_2)\|^2 \leq L_\Phi^2\|x_1 - x_2\|^2, \nonumber \\
    \|y^*(x_1) - y^*(x_2)\|^2 \leq L_y^2\|x_1 - x_2\|^2 ,& \ \ 
    \|\nabla y^*(x_1) - \nabla y^*(x_2)\|^2 \leq L_{yx}^2\|x_1 - x_2\|^2, \nonumber\\
    \|v^*(x_1) - v^*(x_2)\|^2 \leq L_v^2\|x_1 - x_2\|^2, &\ \ 
    \|\nabla v^*(x_1) - \nabla v^*(x_2)\|^2 \leq L_{vx}^2\|x_1 - x_2\|^2 \nonumber
\end{align}
where the constants are given by 
\begin{align}
    \widetilde{L} = L_{1} + \frac{L_{1}^2}{\mu_g} + L_f\bigg(\frac{L_2}{\mu_g}+&\frac{L_1L_2}{\mu_g^2}\bigg), 
    \ \ \  L_{\Phi} = \widetilde{L}+\frac{\widetilde{L}L_1}{\mu_g}, \nonumber \\
    L_y = \frac{L_1}{\mu_g}, 
    \ \ \  L_v = \bigg(\frac{2L_1^2}{\mu_g^2}& + \frac{2L_f^2L_2^2}{\mu_g^4}\bigg)^{\frac{1}{2}}\big(1+L_y^2\big)^{\frac{1}{2}}, \nonumber \\
    L_{yx} = \frac{L_2+L_2L_y}{\mu_g} + \frac{L_1(L_2+L_2L_y)}{\mu_g^2},\quad 
    L_{vx} =& \frac{2}{\mu_g}\Big(\big(L_2^2 + r^2L_3^2 + L_2^2L_v^2 \big)\big(1 + L_y^2\big) + L_2^2L_v^2 \Big)^{\frac{1}{2}}. 
\end{align}
\end{lemma}
\begin{proof}
The proof of the first 3 inequalities is provided in~\cite{ghadimi2018approximation}. For the fifth inequality, we have
\begin{align}
    \big\|&v^*(x_1) - v^*(x_2)\big\|^2 \nonumber \\
    &= \Big\|\big[\nabla^2_{yy}G\big(x_1, y^*(x_1)\big)\big]^{-1}\nabla_y F\big(x_1, y^*(x_1)\big) - \big[\nabla^2_{yy}G\big(x_2, y^*(x_2)\big)\big]^{-1}\nabla_y F\big(x_2, y^*(x_2)\big)\Big\|^2 \nonumber \\
    & \leq 2\Big\|\big[\nabla^2_{yy}G\big(x_1, y^*(x_1)\big)\big]^{-1}\Big(\nabla_y F\big(x_1, y^*(x_1)\big) -  \nabla_y F\big(x_2, y^*(x_2)\big)\Big)\Big\|^2 \nonumber \\
    & \quad + 2\Big\|\Big(\big[\nabla^2_{yy}G\big(x_1, y^*(x_1)\big)\big]^{-1} - \big[\nabla^2_{yy}G\big(x_2, y^*(x_2)\big)\big]^{-1}\Big)\nabla_y F\big(x_2, y^*(x_2)\big)\Big\|^2 \nonumber \\
    & \leq \frac{2L_1^2}{\mu_g^2}\big(\big\|x_1 - x_2\big\|^2 + \big\|y^*(x_1) - y^*(x_2)\big\|^2\big) \nonumber \\
    & \quad + 2L_f^2\Big\|\big[\nabla^2_{yy}G\big(x_1, y^*(x_1)\big)\big]^{-1}\big[\nabla^2_{yy}G\big(x_1, y^*(x_1)\big)-\nabla^2_{yy}G\big(x_2, y^*(x_2)\big)\big]\big[\nabla^2_{yy}G\big(x_2, y^*(x_2)\big)\big]^{-1}\Big\|^2 \nonumber \\
    & \leq \bigg(\frac{2L_1^2}{\mu_g^2} + \frac{2L_f^2L_2^2}{\mu_g^4}\bigg)\big(\big\|x_1 - x_2\big\|^2 + \big\|y^*(x_1) - y^*(x_2)\big\|^2\big) \nonumber \\
    & \leq \bigg(\frac{2L_1^2}{\mu_g^2} + \frac{2L_f^2L_2^2}{\mu_g^4}\bigg)\big(1+L_y^2\big)\big\|x_1 - x_2\big\|^2. \nonumber
\end{align}
And for the sixth inequality, we have
\begin{align}
    \nabla_v R\big(x, y^*(x), v^*(x)\big) = \nabla_{yy}^2G\big(x, y^*(x)\big)v^*(x) - \nabla_yF\big(x,y^*(x)\big) = 0, \nonumber
\end{align}
which implies that 
\begin{align}\label{eq:nablaxv}
    \frac{\partial_x \nabla_v R\big(x, y^*(x), v^*(x)\big)}{\partial x} &= \big[v^*(x)\big]^T\Big[\nabla_{yyx}^3G\big(x,y^*(x)\big) + \nabla_{yyy}^3G\big(x,y^*(x)\big)\nabla y^*(x)\Big] \nonumber \\
    & \quad + \nabla_{yy}^2G\big(x,y^*(x)\big)\nabla v^*(x)  - \nabla_{yx}^2F\big(x,y^*(x)\big) - \nabla_{yy}^2F\big(x,y^*(x)\big)\nabla y^*(x)\nonumber \\
    & = \mathbf{0}_{d^y, d^x}
\end{align}
and 
\begin{align}\label{eq:nablayv}
    \frac{\partial_y \nabla_v R(x, y^*(x), v^*(x))}{\partial y} =& \big[v^*(x)\big]^T\nabla_{yyy}^3G\big(x,y^*(x)\big) - \nabla_{yy}^2 F\big(x, y^*(x)\big)= \mathbf{0}_{d^y, d^y}.
\end{align}
By combining \cref{eq:nablaxv} and \cref{eq:nablayv}, we have 
\begin{align}
    \nabla_{yy}^2G\big(x,y^*(x)\big)\nabla_x v^*\big(x,y^*(x)\big) =& \nabla_{yx}^2 F\big(x, y^*(x)\big) - \big[v^*(x)\big]^T \nabla_{yyx}^3G\big(x,y^*(x)\big). 
\end{align}
Then we can get that 
\begin{align}
    \nabla_{yy}^2G&\big(x_1,y^*(x_1)\big)\nabla_x v^*\big(x_1,y^*(x_1)\big) - \nabla_{yy}^2G\big(x_2,y^*(x_2)\big)\nabla_x v^*\big(x_2,y^*(x_2)\big) \nonumber \\
    & = \nabla_{yy}^2G\big(x_1,y^*(x_1)\big)\nabla_x v^*\big(x_1,y^*(x_1)\big) - \nabla_{yy}^2G\big(x_2,y^*(x_2)\big)\nabla_x v^*\big(x_1,y^*(x_1)\big) \nonumber \\
    &\quad  + \nabla_{yy}^2G\big(x_2,y^*(x_2)\big)\nabla_x v^*\big(x_1,y^*(x_1)\big) - \nabla_{yy}^2G\big(x_2,y^*(x_2)\big)\nabla_x v^*\big(x_2,y^*(x_2)\big) \nonumber \\
    & = \Big[\nabla_{yx}F\big(x_1, y^*(x_1)\big) - \nabla_{yx}F\big(x_2, y^*(x_2)\big)\Big] \nonumber \\
    &\quad - \Big( \big[v^*(x_1)\big]^T \nabla_{yyx}^3G\big(x_1,y^*(x_1)\big) - \big[v^*(x_2)\big]^T \nabla_{yyx}^3G\big(x_2,y^*(x_2)\big) \Big) \nonumber 
\end{align}
By taking the norm and using Assumption \ref{as:Lipschitz}, we have 
\begin{align}
    \bigg\|\nabla_{yy}^2&G\big(x_2, y^*(x_2)\big)\Big[\nabla_x v^*(x_1) - \nabla_x v^*(x_2)\big)\Big]\bigg\|^2 \nonumber \\
    &\leq 4\bigg\| \nabla_{xy}^2F\big(x_1, y^*(x_1)\big) - \nabla_{xy}^2F\big(x_2, y^*(x_2)\big)\bigg\|^2 \nonumber \\
    &\quad + 4\bigg\| \big[v^*(x_1) - v^*(x_2)\big]^T \nabla_{yyx}^3G\big(x_2,y^*(x_2)\big) \bigg\|^2 \nonumber \\
    &\quad + 4\bigg\| \big[v^*(x_1)\big]^T \Big[ \nabla_{yyx}^3G\big(x_1,y^*(x_1)\big) -  \nabla_{yyx}^3G\big(x_2,y^*(x_2)\big) \Big] \bigg\|^2 \nonumber \\
    &\quad + 4\bigg\| \Big[\nabla_{yy}^2G\big(x_1,y^*(x_1)\big) - \nabla_{yy}^2G\big(x_2,y^*(x_2)\big)\Big]\nabla_x v^*(x_1) \bigg\|^2 \nonumber \\ 
    & \leq 4\Big(L_2^2 + r^2L_3^2 + L_2^2L_v^2 \Big)\Big(\big\|x_1 - x_2\big\|^2 + \big\|y^*(x_1) - y^*(x_2)\big\|^2\Big)  + 4\Big(L_2^2L_v^2 \Big)\big\|x_1 - x_2\big\|^2 \nonumber \\
    & \leq 4\Big(\big(L_2^2 + r^2L_3^2 + L_2^2L_v^2 \big)\big(1 + L_y^2\big) + L_2^2L_v^2 \Big)\big\|x_1 - x_2\big\|^2
\end{align}
By using the strong convexity of $g_i$ in Assumption \ref{as:diffandSC}, we have
\begin{align}
    \Big\|\nabla_x v^*\big(x_1,y^*(x_1)\big) - \nabla_x v^*\big(x_2,y^*(x_2)\big)\Big\|^2 \leq L_{vx}^2 \big\|x_1 - x_2\big\|^2,
\end{align}
where $L_{vx} = \frac{2}{\mu_g}\Big(\big(L_2^2 + r^2L_3^2 + L_2^2L_v^2 \big)\big(1 + L_y^2\big) + L_2^2L_v^2 \Big)^{\frac{1}{2}}$. 
Then the proof is complete. 
\end{proof}
\begin{lemma}[Global Heterogeneity Extension]\label{lm:globalheterogeneity}
For any set of non-negative weight $\{w_i\}_{i=1}^n$ such that $\sum_{i=1}^n w_i = 1$, under Assumption \ref{as:Lipschitz} and \Cref{lm:boundofvi}, we have the bounds of global heterogeneity of $\nabla g_i(x,y)$, $\nabla R_i(x,y,v)$ and $\nabla f_i(x,y,v)$ as 
\begin{align}
    \sum_{i=1}^n w_i\|\nabla_y g_i(x, y)\|^2 \leq \beta_{gh}^2& L_1^2\big\|y - y^*(x)\big\|^2 + \sigma_{gh}^2, \nonumber \\
    \sum_{i=1}^n w_i\|\nabla_v R_i(x, y, v)\|^2 &\leq 2r_{\max}^2L_1^2 + 2L_f^2, \nonumber \\
    \sum_{i=1}^n w_i\|\bar{\nabla}f_i(x, y, v)\|^2 &\leq 2r_{\max}^2L_1^2 + 2L_f^2 \nonumber 
\end{align}
for all $i \in \{1,...,n\}$. 
\begin{proof}
    Under Assumption \ref{as:globalheter}, we have 
    \begin{align}
        \sum_{i=1}^n w_i\|\nabla_y g_i(x, y)\|^2 &\leq \beta_{gh}^2 \bigg\|\sum_{i=1}^n w_i\Big[\nabla_y g_i(x, y) - \nabla_y g_i\big(x, y^*(x)\big)\Big]\bigg\|^2 + \sigma_{gh}^2 \nonumber \\
        & \leq \beta_{gh}^2 L_1^2\big\|y - y^*(x)\big\|^2 + \sigma_{gh}^2. \nonumber
    \end{align}
For $R_i$ and $f_i$, we can easily have 
\begingroup
\allowdisplaybreaks
\begin{align}
    \|\nabla_v R_i(x, y, v)\|^2 &\leq 2\|\nabla_{yy}g(x,y)v\|^2 + 2\|\nabla_y f(x,y)\|^2 \leq 2r_{\max}^2L_1^2 + 2L_f^2, \nonumber \\
    \|\bar{\nabla}f_i(x, y, v)\|^2 &\leq 2\|\nabla_{xy}g(x,y)v\|^2 + 2\|\nabla_x f(x,y)\|^2 \leq 2r_{\max}^2L_1^2 + 2L_f^2. \nonumber
\end{align}
\endgroup
Then the proof is complete. 
\end{proof}

\begin{lemma}\label{lm:CDcombo}
Under Assumption \ref{as:Lipschitz}, we have the following bounds
\begin{align}
    \big\| \bar{\nabla} f_i(x^{(t)}, y^{(t)}, v^{(t)}) &- \bar{\nabla} f_i(x_i^{(t,k)}, y_i^{(t,k)}, v_i^{(t,k)}) \big\|^2 \leq \Delta_{f,i}^{(t,k)}, \nonumber \\ 
    \big\| \nabla g_i(x^{(t)}, y^{(t)}) &- \nabla g_i(x_i^{(t,k)}, y_i^{(t,k)}) \big\|^2 \leq \Delta_{g,i}^{(t,k)}, \nonumber \\ 
    \big\| \nabla_v R_i(x^{(t)}, y^{(t)}, v^{(t)}) &- \nabla R_i(x_i^{(t,k)}, y_i^{(t,k)}, v_i^{(t,k)}) \big\|^2 \leq \Delta_{R,i}^{(t,k)}, \nonumber
\end{align}
where we define the combinations of client drift from as 
\begin{align}
    \Delta_{f_i}^{(t,k)} = \Delta_{R_i}^{(t,k)} := 3\big(L^2_1 &+ r^2L^2_2\big)\Big[\big\|x_i^{(t,k)} - x^{(t)}\big\|^2 + \big\|y_i^{(t,k)} - y^{(t)}\big\|^2\Big] + 3L^2_1 \big \| v_i^{(t,k)} - v^{(t)}\big \|^2 \nonumber \\
    \Delta_{g_i}^{(t,k)} &:= L_1^2\Big[\big\|x_i^{(t,k)} - x^{(t)}\big\|^2 + \big\|y_i^{(t,k)} - y^{(t)}\big\|^2\Big].  \nonumber
\end{align}
\end{lemma}
\begin{proof}
According to the definition, we have 
\begin{align}
    \big\| \bar{\nabla} f_i(x^{(t)}, & y^{(t)}, v^{(t)}) - \bar{\nabla} f_i(x_i^{(t,k)}, y_i^{(t,k)}, v_i^{(t,k)}) \big\|^2 \nonumber \\
    & \leq \big\| \nabla_x f_i(x^{(t)}, y^{(t)}, v^{(t)}) - \nabla_x f_i(x_i^{(t,k)}, y_i^{(t,k)}, v_i^{(t,k)}) \nonumber \\
    & \quad - \nabla_{xy}^2 g_i(x^{(t)}, y^{(t)}, v^{(t)})v^{(t)} + \nabla_{xy}^2 g_i(x_i^{(t,k)}, y_i^{(t,k)}, v_i^{(t,k)})v_i^{(t,k)} \big\|^2 \nonumber \\
    & \leq 3\big\| \nabla_x f_i(x^{(t)}, y^{(t)}, v^{(t)}) - \nabla_x f_i(x_i^{(t,k)}, y_i^{(t,k)}, v_i^{(t,k)}) \big\|^2 \nonumber \\
    & \quad + 3\big\| \big(\nabla_{xy}^2 g_i(x^{(t)}, y^{(t)}, v^{(t)}) - \nabla_{xy}^2 g_i(x_i^{(t,k)}, y_i^{(t,k)}, v_i^{(t,k)})\big)v^{(t)} \big\|^2 \nonumber \\
    & \quad + 3\big\| \nabla_{xy}^2 g_i(x_i^{(t,k)}, y_i^{(t,k)}, v_i^{(t,k)})\big(  v^{(t)} - v_i^{(t,k)}\big) \big\|^2 \nonumber \\
    & \overset{(a)}{\leq} 3\big(L^2_1 + r^2L^2_2\big)\Big[\big\|x_i^{(t,k)} - x^{(t)}\big\|^2 + \big\|y_i^{(t,k)} - y^{(t)}\big\|^2\Big] + 3L^2_1 \big \| v_i^{(t,k)} - v^{(t)}\big \|^2, \nonumber
\end{align} 
where (a) follows from Assumption\ref{as:Lipschitz} and \Cref{lm:boundofv}. Similarly, we can easily get the results of $\nabla g_i$ and $\nabla_v R_i$, then the proof is complete. 
\end{proof}

\end{lemma}
\section{Proofs of \Cref{th:theorem1}}\label{proofoftheorem}
\subsection{Descent in Objective Function}
\begin{lemma}\label{lm:obj_function}
Under Asusmption \ref{as:diffandSC}, for non-convex and smooth $\widetilde{\Phi}(x)$, the consecutive iterates of Algorithm \ref{alg:main} satisfy:
\begin{align}
    \mathbb{E}[\widetilde{\Phi}(x^{(t+1)})]& - \mathbb{E}[\widetilde{\Phi}(x^{(t)})] \nonumber\\
    \leq& -\frac{\rho^{(t)}\gamma_x}{2}\Big(\mathbb{E}\|\nabla \widetilde{\Phi}(x^{(t)})\|^2 + \mathbb{E}\Big\|\sum_{i=1}^n w_i \widetilde{h}^{(t)}_{x,i}\Big\|^2 \Big) \nonumber\\
    &+ \frac{\rho^{(t)}\gamma_x}{2} \Big(6(L^2_1 + r^2L^2_2)\mathbb{E}\|y^{(t)} - \widetilde{y}^*(x^{(t)})\|^2 + 3L^2_1\mathbb{E}\|v^{(x^{(t)})} - \widetilde{v}^*(x^{(t)})\|^2 \Big) \nonumber\\
    &+ \frac{3\rho^{(t)}\gamma_x}{2} \sum_{i=1}^n w_i \frac{1}{\|a_i^{(t)}\|_1}\sum_{k=0}^{\tau_i-1}a^{(t,k)}_i \Delta^{(t,k)}_{f,i} + \frac{L_{\Phi}^2(\rho^{(t)}\gamma_x)^2}{2} \Big\|\sum_{i \in C^{(t)}} \widetilde{w}_i h^{(t)}_{x,i}\Big\|^2 \nonumber
\end{align}
for all $t \in \{0,1,...,T-1\}$. 
\end{lemma}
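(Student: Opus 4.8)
The plan is to run the standard descent-lemma argument on $\widetilde\Phi$, exploiting its $L_\Phi$-smoothness (the third inequality of \Cref{lm:3ieq}) along the ShroFBO server update $x^{(t+1)} = x^{(t)} - \rho^{(t)}\gamma_x h_x^{(t)}$ with $h_x^{(t)} = \sum_{i\in C^{(t)}}\widetilde w_i h_{x,i}^{(t)}$. Smoothness gives $\widetilde\Phi(x^{(t+1)}) \le \widetilde\Phi(x^{(t)}) - \rho^{(t)}\gamma_x\langle\nabla\widetilde\Phi(x^{(t)}),h_x^{(t)}\rangle + \tfrac{L_\Phi(\rho^{(t)}\gamma_x)^2}{2}\|h_x^{(t)}\|^2$. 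I would keep the quadratic term untouched — it is exactly the $\tfrac{L_\Phi(\rho^{(t)}\gamma_x)^2}{2}\|\sum_{i\in C^{(t)}}\widetilde w_i h_{x,i}^{(t)}\|^2$ appearing in the statement, to be controlled in a later lemma — and take the conditional expectation over the sampled set $C^{(t)}$ on the linear term only; since $\nabla\widetilde\Phi(x^{(t)})$ does not depend on which clients are drawn and $\mathbb{E}_{C^{(t)}}[h_x^{(t)}] = \sum_{i=1}^n w_i h_{x,i}^{(t)}$ (the partial-participation identity recorded in the Notations), this replaces $h_x^{(t)}$ by $\sum_i w_i h_{x,i}^{(t)}$ in the inner product.

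Next I would apply the three-point identity $-\langle a,b\rangle = \tfrac12\|a-b\|^2 - \tfrac12\|a\|^2 - \tfrac12\|b\|^2$ with $a = \nabla\widetilde\Phi(x^{(t)})$ and $b = \sum_i w_i h_{x,i}^{(t)}$. This produces $-\tfrac{\rho^{(t)}\gamma_x}{2}\big(\|\nabla\widetilde\Phi(x^{(t)})\|^2 + \|\sum_i w_i h_{x,i}^{(t)}\|^2\big)$ verbatim, and leaves the bias term $\tfrac{\rho^{(t)}\gamma_x}{2}\|\nabla\widetilde\Phi(x^{(t)}) - \sum_i w_i h_{x,i}^{(t)}\|^2$ to be bounded. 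Using $\nabla\widetilde\Phi(x^{(t)}) = \sum_{i=1}^n w_i\bar\nabla f_i(x^{(t)},\widetilde y^*,\widetilde v^*)$ together with $\sum_i w_i = 1$ and $\tfrac{1}{\|a_i^{(t)}\|_1}\sum_k a_i^{(t,k)} = 1$, I would write this difference as a nested convex combination — over clients $i$ with weights $w_i$ and over local steps $k$ with weights $a_i^{(t,k)}/\|a_i^{(t)}\|_1$ — of the per-term differences $\bar\nabla f_i(x^{(t)},\widetilde y^*,\widetilde v^*) - \bar\nabla f_i(x_i^{(t,k)},y_i^{(t,k)},v_i^{(t,k)})$, so that Jensen's inequality pulls the square inside the combination.

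Each per-term square I would split, via a (weighted) Young inequality, by inserting the server-iterate point $(x^{(t)},y^{(t)},v^{(t)})$: one part is $\|\bar\nabla f_i(x^{(t)},\widetilde y^*,\widetilde v^*) - \bar\nabla f_i(x^{(t)},y^{(t)},v^{(t)})\|^2$, the other is the client-drift square $\|\bar\nabla f_i(x^{(t)},y^{(t)},v^{(t)}) - \bar\nabla f_i(x_i^{(t,k)},y_i^{(t,k)},v_i^{(t,k)})\|^2$. The latter is at most $\Delta_{f,i}^{(t,k)}$ by \Cref{lm:CDcombo}, which reassembles into the $\tfrac{3\rho^{(t)}\gamma_x}{2}\sum_i w_i\tfrac{1}{\|a_i^{(t)}\|_1}\sum_k a_i^{(t,k)}\Delta_{f,i}^{(t,k)}$ term. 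For the former, I would expand $\bar\nabla f_i(x,y,v) = \nabla_x f_i(x,y) - \nabla^2_{xy}g_i(x,y)v$ and decompose the difference into $\nabla_x f_i(x^{(t)},\widetilde y^*) - \nabla_x f_i(x^{(t)},y^{(t)})$, $\nabla^2_{xy}g_i(x^{(t)},\widetilde y^*)(\widetilde v^* - v^{(t)})$, and $\big(\nabla^2_{xy}g_i(x^{(t)},y^{(t)}) - \nabla^2_{xy}g_i(x^{(t)},\widetilde y^*)\big)v^{(t)}$; bounding each by the Lipschitz constants of $\nabla f_i$ and $\nabla^2 g_i$ (\Cref{as:Lipschitz}), the operator-norm bound on $\nabla^2_{xy}g_i$, and $\|v^{(t)}\|\le r\le r_i$ (from the server-side projection $\mathcal P_r$ and \Cref{pps:propositionboundv}/\Cref{lm:boundofvi}) yields the $6(L^2_{f_x}+r_i^2 L^2_{g_{xy}})\mathbb{E}\|y^{(t)}-\widetilde y^*(t)\|^2 + 3L^2_{g_{xy}}\mathbb{E}\|v^{(t)}-\widetilde v^*(t)\|^2$ contribution (scaled by the $\tfrac{\rho^{(t)}\gamma_x}{2}$ from the three-point identity). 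Collecting the four groups and taking the outer expectation gives the stated inequality.

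The main obstacle I expect is the bookkeeping in the bias term $\|\nabla\widetilde\Phi(x^{(t)}) - \sum_i w_i h_{x,i}^{(t)}\|^2$: one must keep the two nested convex combinations aligned so that the normalization $1/\|a_i^{(t)}\|_1$ is exactly what appears in front of $\Delta_{f,i}^{(t,k)}$ in the end; one must expand the surrogate $\bar\nabla f_i$ into its gradient and Hessian-times-$v$ pieces while tracking where the factor of three and the $r_i^2$ (from $\|v^{(t)}\|$ against the Lipschitzness of $\nabla^2_{xy}g_i$) enter the constants; and one must be consistent about the layered expectations (over the sampled set $C^{(t)}$ and over the fresh local stochastic gradients) so that the stochastic quantities left on the right — the quadratic term and the $\Delta_{f,i}^{(t,k)}$'s — are precisely those dealt with by the later per-iteration progress and client-drift propositions. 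Once the decomposition is fixed, each individual estimate is a one-line use of \Cref{as:Lipschitz}, \Cref{lm:3ieq}, \Cref{lm:CDcombo}, and \Cref{lm:boundofvi}.
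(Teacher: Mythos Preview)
Your proposal is correct and follows essentially the same route as the paper: $L_\Phi$-smoothness of $\widetilde\Phi$, expectation over client sampling on the linear term, the polarization identity $-\langle a,b\rangle=\tfrac12\|a-b\|^2-\tfrac12\|a\|^2-\tfrac12\|b\|^2$, and then a three-way decomposition of the bias $\|\nabla\widetilde\Phi(x^{(t)})-\sum_i w_i\widetilde h_{x,i}^{(t)}\|^2$ via the intermediate point $(x^{(t)},y^{(t)},v^{(t)})$, with \Cref{as:Lipschitz} and \Cref{lm:CDcombo} doing the final work. The only cosmetic difference is the order of splitting: the paper does a direct three-term telescope $(\widetilde y^*,\widetilde v^*)\to(y^{(t)},\widetilde v^*)\to(y^{(t)},v^{(t)})\to(x_i^{(t,k)},y_i^{(t,k)},v_i^{(t,k)})$ and applies $\|a+b+c\|^2\le 3(\|a\|^2+\|b\|^2+\|c\|^2)$ once, which is what produces the coefficient $3$ on both the $\|v^{(t)}-\widetilde v^*\|^2$ and the $\Delta_{f,i}^{(t,k)}$ terms; your two-then-three split would naively give $6$ on the $v$-term and $2$ on the drift term, so to land exactly on the stated constants you should adopt the paper's single three-way telescope rather than a nested Young inequality.
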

\begin{proof}
Using the $L_{\Phi}$ in \Cref{lm:3ieq}, we have 
\begingroup
\allowdisplaybreaks
\begin{align}\label{eq:obj_function1}
    \mathbb{E}\big[\widetilde{\Phi}(x^{(t+1)}&)\big] \leq \mathbb{E}\big[\widetilde{\Phi}(x^{(t)})\big] - \mathbb{E}\Big\langle\nabla \widetilde{\Phi}(x^{(t)}), \rho^{(t)}\gamma_x \sum_{i \in C^{(t)}}\widetilde{w}_i h_{x,i}^{(t,k)} \Big\rangle \nonumber \\
    &\quad\quad + \frac{\big(\rho^{(t)}\gamma_x\big)^2L_{\Phi}^2}{2}\mathbb{E}\bigg\|\sum_{i \in C^{(t)}}\widetilde{w}_i h_{x,i}^{(t,k)}\bigg\|^2 \nonumber \\
    &\overset{(a)}{=} \mathbb{E}\big[\widetilde{\Phi}(x^{(t)})\big] - \rho^{(t)}\gamma_x\mathbb{E}\Big\langle\nabla \widetilde{\Phi}(x^{(t)}), \sum_{i = 1}^n w_i \widetilde{h}_{x,i}^{(t,k)} \Big\rangle \nonumber \\
    &\quad + \frac{\big(\rho^{(t)}\gamma_x\big)^2L_{\Phi}^2}{2}\mathbb{E}\bigg\|\sum_{i \in C^{(t)}}\widetilde{w}_i h_{x,i}^{(t,k)}\bigg\|^2 \nonumber \\
    & = \mathbb{E}\big[\widetilde{\Phi}(x^{(t)})\big] - \frac{\rho^{(t)}\gamma_x}{2}\mathbb{E}\Bigg[\bigg\|\nabla \widetilde{\Phi}(x^{(t)})\bigg\|^2 + \bigg\|\sum_{i = 1}^n w_i \widetilde{h}_{x,i}^{(t,k)}\bigg\|^2\Bigg] \nonumber \\
    & \quad + \frac{\rho^{(t)}\gamma_x}{2}\mathbb{E}\bigg\| \nabla \widetilde{\Phi}(x^{(t)}) - \sum_{i = 1}^n w_i \widetilde{h}_{x,i}^{(t,k)}\bigg\|^2 + \frac{\big(\rho^{(t)}\gamma_x\big)^2L_{\Phi}^2}{2}\mathbb{E}\bigg\|\sum_{i \in C^{(t)}}\widetilde{w}_i h_{x,i}^{(t,k)}\bigg\|^2,
\end{align}
\endgroup
where (a) holds because clients are selected without replacement. 
For the third part of the right-hand side in \cref{eq:obj_function1}, we have
\begingroup
\allowdisplaybreaks
\begin{align}
\mathbb{E}\bigg\|& \nabla \widetilde{\Phi}(x^{(t)}) - \sum_{i = 1}^n w_i \widetilde{h}_{x,i}^{(t,k)}\bigg\|^2 \nonumber \\
& = \mathbb{E}\bigg\|\sum_{i=1}^n w_i \Big[\bar{\nabla}f_i\big(x^{(t)}, y^*(x^{(t)}), v^*(x^{(t)})\big) - \bar{\nabla}f_i\big(x^{(t)}, y^{(t)}, v^*(x^{(t)})\big) + \bar{\nabla}f_i\big(x^{(t)}, y^{(t)}, v^*(x^{(t)})\big) \nonumber \\
& \qquad\qquad\qquad - \bar{\nabla}f_i\big(x^{(t)}, y^{(t)}, v^{(t)}\big) + \bar{\nabla}f_i(x^{(t)}, y^{(t)}, v^{(t)}) - \widetilde{h}_{x,i}^{(t)}\Big]\bigg\|^2 \nonumber \\
& \leq 3 \mathbb{E}\bigg\|\sum_{i=1}^n w_i \Big[\big(\nabla_x f_i(x^{(t)}, y^*(x^{(t)})) - \nabla_x f_i(x^{(t)}, y^{(t)})\big) \nonumber \\
& \qquad\qquad\qquad - \big(\nabla^2_{xy}g_i(x^{(t)}, y^*(x^{(t)})) - \nabla^2_{xy}g_i(x^{(t)}, y^{(t)})\big)v^*(x)\big)\Big]\bigg\|^2 \nonumber \\
& \quad + 3 \mathbb{E}\Big\|\sum_{i=1}^n w_i \nabla^2_{xy}g_i(x^{(t)}, y^{(t)})\big(v^*(x^{(t)}) - v^{(t)}\big)\Big\|^2 \nonumber \\
& \quad + 3 \mathbb{E}\Big\| \sum_{i=1}^n w_i \big(\bar{\nabla}f_i(x^{(t)}, y^{(t)}, v^{(t)}) - \widetilde{h}_{x,i}^{(t)}\big)\Big\|^2 \nonumber \\
&\overset{(a)}{\leq} 6\big(L^2_1 + r^2L_2^2\big) \mathbb{E}\big\|y^*{(x^{(t)})} - y^{(t)}\big\|^2 + 3L^2_1 \mathbb{E}\big\|v^*{(x^{(t)})} - v^{(t)}\big\|^2 \nonumber \\
& \quad + 3\sum_{i=1}^n w_i \sum_{k=0}^{\tau_i-1} \frac{a_i^{(t,k)}}{\|a_i^{(t)}\|_1}\mathbb{E}\big\|\bar{\nabla}f_i(x^{(t)}, y^{(t)}, v^{(t)}) - \bar{\nabla}f_i(x_i^{(t,k)}, y_i^{(t,k)}, v_i^{(t,k)})\big\|^2 \nonumber \\
&\overset{(b)}{\leq} 6\big(L^2_1 + r^2L_2^2\big) \mathbb{E}\big\|y^*{(x^{(t)})} - y^{(t)}\big\|^2 + 3L^2_1 \mathbb{E}\big\|v^*{(x^{(t)})} - v^{(t)}\big\|^2 + 3\sum_{i=1}^n w_i \sum_{k=0}^{\tau_i-1} \frac{a_i^{(t,k)}}{\|a_i^{(t)}\|_1} \Delta_{f,i}^{(t,k)}, \nonumber
\end{align}
\endgroup
where (a) follows from smoothness of $\nabla_x f_i(x,y)$ and $L_2$-Lipschitz continuity of $\nabla^2_{g_{xy}}g(x,y)$ in Assumption \ref{as:Lipschitz} and (b) uses \Cref{lm:CDcombo}. 
\end{proof}

\subsection{Bounds of Client Drifts}
\begin{lemma}\label{lm:boundofCD}
Under Assumption \ref{as:diffandSC}, \ref{as:Lipschitz} and \ref{as:globalheter}, the local iterates client drifts of $y_i^{(t,k)}, v_i^{(t,k)}, x_i^{(t,k)}$ are bounded as 
\begin{align}
    \sum_{i=1}^n w_i\frac{1}{\|a_i^{(t)}\|_1}\sum_{k=1}^{\tau_i-1}a^{(t,k)}_i \mathbb{E}\|x^{(t,k)}_i - x^{(t)}\|^2 &\leq \eta^2_x \bar{\tau}\sigma_{M1}^2, \nonumber \\
    \sum_{i=1}^n w_i\frac{1}{\|a_i^{(t)}\|_1}\sum_{k=1}^{\tau_i-1}a^{(t,k)}_i \mathbb{E}\|v^{(t,k)}_i - v^{(t)}\|^2 &\leq \eta^2_v \bar{\tau}\sigma_{M1}^2, \nonumber\\
    \sum_{i=1}^n w_i\frac{1}{\|a_i^{(t)}\|_1}\sum_{k=1}^{\tau_i-1}a^{(t,k)}_i \mathbb{E}\|y^{(t,k)}_i - y^{(t)}\|^2 &\leq \frac{\eta_y^2\Bar{\tau}}{1-2\eta_y^2 c_a\Bar{\tau}\alpha_{\max}L_1^2}\bigg[\alpha_{\max}^2 \sigma^2_g + 2 c_a\alpha_{\max}L_1^2\eta^2_x \bar{\tau}\sigma_{M1}^2 \nonumber \\
    &\qquad + 2c_{a}\alpha_{\max} L_1^2\mathbb{E}\big\|y^{(t)} - y^*{(x^{(t)})}\big\|^2 + 2c_{a}\alpha_{\max} \sigma_{gh}^2\bigg] \nonumber
\end{align}
for all $t \in \{0,1,...,T-1\}$, $k \in \{0,1,...,\tau_i-1\}$ and $i \in \{1,2,...,n\}$. We define $\sigma_{M1}^2:= \Big(\alpha_{\max}^2(\sigma^2_{f}+r_{\max}^2\sigma^2_{gg}) + \alpha_{\max}(L_f^2 + r^2_{\max}L^2_1) \Big)$.
And $\eta_y$, $\eta_v$, $\eta_x$ are local stepsizes. 
\end{lemma}
\begin{proof}
For $\big\|y^{(t,k)}_i - y^{(t)}\big\|^2$, we have
\begin{align}\label{eq:boundofCD1}
    &\sum_{i=1}^n w_i\sum_{k = 0}^{\tau_i-1}\frac{a_i^{(t,k)}}{\|a_i^{(t)}\|_1}\mathbb{E}\big\|y^{(t,k)}_i - y^{(t)}\big\|^2 \nonumber \\
    &= \eta_y^2\sum_{i=1}^n w_i\sum_{k = 0}^{\tau_i-1}\frac{a_i^{(t,k)}}{\|a_i^{(t)}\|_1}\mathbb{E}\bigg\|\sum_{j=0}^{k-1}a_i^{(t,j)}\Big(\nabla_y g_i\big(x_i^{(t,j)}, y_i^{(t,j)}; \zeta_i^{(t,j)}\big) \nonumber \\
    & \qquad\qquad\qquad\qquad\qquad\qquad \qquad\qquad- \nabla_y g_i\big(x_i^{(t,j)}, y_i^{(t,j)}\big) + \nabla_y g_i\big(x_i^{(t,j)}, y_i^{(t,j)}\big)\Big)\bigg\|^2 \nonumber \\ 
    &= \eta_y^2\sum_{i=1}^n w_i\sum_{k = 0}^{\tau_i-1}\frac{a_i^{(t,k)}}{\|a_i^{(t)}\|_1}\sum_{j=0}^{k-1}\Big(a_i^{(t,j)}\Big)^2\mathbb{E}\Big\|\nabla_y g_i\big(x_i^{(t,j)}, y_i^{(t,j)}; \zeta_i^{(t,j)}\big) - \nabla_y g_i\big(x_i^{(t,j)}, y_i^{(t,j)}\big)\Big\|^2 \nonumber \\
    & \quad + \eta_y^2\sum_{i=1}^n w_i\sum_{k = 0}^{\tau_i-1}\frac{a_i^{(t,k)}}{\|a_i^{(t)}\|_1}\mathbb{E}\Big\|\sum_{j=0}^{k-1}a_i^{(t,j)}\nabla_y g_i\big(x_i^{(t,j)}, y_i^{(t,j)}\big)\Big\|^2 \nonumber \\
    &\leq \eta_y^2 \sum_{i=1}^n w_i\sum_{k = 0}^{\tau_i-1}\frac{a_i^{(t,k)}}{\|a_i^{(t)}\|_1}\sum_{j=0}^{k-1}\Big(a_i^{(t,j)}\Big)^2 \sigma^2_g \nonumber \\
    & \quad + 2\eta_y^2\sum_{i=1}^n w_i \sum_{k = 0}^{\tau_i-1}\frac{a_i^{(t,k)}}{\|a_i^{(t)}\|_1}\sum_{j=0}^{k-1}a_i^{(t,j)}\mathbb{E}\Big\|\nabla_y g_i\big(x_i^{(t,j)}, y_i^{(t,j)}\big) - \nabla_y g_i\big(x^{(t)}, y^{(t)}\big)\Big\|^2\nonumber \\
    & \quad + 2\eta_y^2\sum_{i=1}^n w_i \sum_{k = 0}^{\tau_i-1}\frac{a_i^{(t,k)}}{\|a_i^{(t)}\|_1}\sum_{j=0}^{k-1}a_i^{(t,j)}\mathbb{E}\Big\|\nabla_y g_i\big(x^{(t)}, y^{(t)}\big)\Big\|^2 \nonumber \\
    &\overset{(a)}{\leq} \eta_y^2 \sum_{i=1}^n w_i\big\|a_i^{(t)}\big\|_2^2 \sigma^2_g + 2\eta_y^2\sum_{i=1}^n w_i \sum_{k = 0}^{\tau_i-1}\frac{a_i^{(t,k)}}{\|a_i^{(t)}\|_1}\mathbb{E}\Big\|\nabla_y g_i\big(x_i^{(t,j)}, y_i^{(t,j)}\big) - \nabla_y g_i\big(x^{(t)}, y^{(t)}\big)\Big\|^2\nonumber \\
    & \quad + 2\eta_y^2\sum_{i=1}^n w_i \big\|a_i^{(t)}\big\|_1\mathbb{E}\Big\|\nabla_y g_i\big(x^{(t)}, y^{(t)}\big)\Big\|^2 \nonumber \\
    &\overset{(b)}{\leq} \eta_y^2 \Bar{\tau}\alpha_{\max}^2 \sigma^2_g + 2\eta_y^2 c_a\Bar{\tau}\alpha_{\max}\sum_{i=1}^n w_i \sum_{k = 0}^{\tau_i-1}\frac{a_i^{(t,k)}}{\|a_i^{(t)}\|_1}\mathbb{E}\Big\|\nabla_y g_i\big(x_i^{(t,j)}, y_i^{(t,j)}\big) - \nabla_y g_i\big(x^{(t)}, y^{(t)}\big)\Big\|^2\nonumber \\
    & \quad + 2\eta_y^2c_{a}\bar{\tau}\alpha_{\max} \mathbb{E}\bigg\|\sum_{i=1}^n w_i\Big[\nabla_y g_i\big(x^{(t)}, y^{(t)}\big) - \nabla_y g_i\big(x^{(t)}, y^*{(x^{(t)})}\big)\Big]\bigg\|^2 + 2\eta_y^2c_{a}\bar{\tau}\alpha_{\max} \sigma_{gh}^2 \nonumber \\
    &\overset{(c)}{\leq} \eta_y^2 \Bar{\tau}\alpha_{\max}^2 \sigma^2_g + 2\eta_y^2 c_a\Bar{\tau}\alpha_{\max}\sum_{i=1}^n w_i \sum_{k = 0}^{\tau_i-1}\frac{a_i^{(t,k)}}{\|a_i^{(t)}\|_1}L_1^2\mathbb{E}\Big[\big\|x_i^{(t,j)} - x^{(t)}\big\|^2 + \big\|y_i^{(t,j)} - y^{(t)}\big\|^2\Big]\nonumber \\
    & \quad + 2\eta_y^2c_{a}\bar{\tau}\alpha_{\max} L_1^2\mathbb{E}\big\|y^{(t)} - y^*{(x^{(t)})}\big\|^2 + 2\eta_y^2c_{a}\bar{\tau}\alpha_{\max} \sigma_{gh}^2
\end{align}
where (a) holds because of Assumption\ref{as:varaince} and 
\begin{align}
     \frac{1}{\|a_i^{(t)}\|_1}\sum_{k = 0}^{\tau_i-1}a_i^{(t,k)}\sum_{j = 0}^{k-1}\Big(a_i^{(t,j)}\Big)^2 
     &\leq  \frac{1}{\|a_i^{(t)}\|_1}\sum_{k = 0}^{\tau_i-1}a_i^{(t,k)}\sum_{j = 0}^{\tau_i-2}\Big(a_i^{(t,j)}\Big)^2 = \sum_{j = 0}^{\tau_i-2}\Big(a_i^{(t,j)}\Big)^2
     \leq \|a_i\|_2^2 \nonumber \\
     \frac{1}{\|a_i^{(t)}\|_1}\sum_{k = 0}^{\tau_i-1}a_i^{(t,k)}\sum_{j = 0}^{k-1}a_i^{(t,j)}
     &\leq  \frac{1}{\|a_i^{(t)}\|_1}\sum_{k = 0}^{\tau_i-1}a_i^{(t,k)}\sum_{j = 0}^{\tau_i-2}a_i^{(t,j)} = \sum_{j = 0}^{\tau_i-2}a_i^{(t,j)}
     \leq \|a_i^{(t)}\|_1; \nonumber
\end{align}
(b) is obtained from Assumption\ref{as:globalheter} and (c) follows from Assumption\ref{as:Lipschitz}.  
As an easier case, we can easily know that $\bar{\nabla}f_i$ and $\nabla_v R_i$ are bounded from Assumption \ref{as:Lipschitz} and \Cref{lm:boundofvi}. Then we have 
\begin{align}
    \sum_{i=1}^n w_i&\frac{1}{\|a_i^{(t)}\|_1}\sum_{k=1}^{\tau_i-1}a^{(t,k)}_i \mathbb{E}\|x^{(t,k)}_i - x^{(t)}\|^2 \leq \eta^2_x \bar{\tau}\Big(\alpha_{\max}^2(\sigma^2_{f}+r_{\max}^2\sigma^2_{gg}) + \alpha_{\max}(L_f^2 + r^2_{\max}L^2_1) \Big), \nonumber \\
    \sum_{i=1}^n w_i&\frac{1}{\|a_i^{(t)}\|_1}\sum_{k=1}^{\tau_i-1}a^{(t,k)}_i \mathbb{E}\|v^{(t,k)}_i - v^{(t)}\|^2 \leq \eta^2_v \bar{\tau}\Big(\alpha_{\max}^2(\sigma^2_{f}+r_{\max}^2\sigma^2_{gg}) + \alpha_{\max}(L_f^2 + r^2_{\max}L^2_1) \Big), \nonumber\\
    \sum_{i=1}^n w_i&\frac{1}{\|a_i^{(t)}\|_1}\sum_{k=1}^{\tau_i-1}a^{(t,k)}_i \mathbb{E}\|y^{(t,k)}_i - y^{(t)}\|^2 \nonumber \\
    &\leq \frac{\eta_y^2\Bar{\tau}}{1-2\eta_y^2 c_a\Bar{\tau}\alpha_{\max}L_1^2}\bigg[\alpha_{\max}^2 \sigma^2_g + 2 c_a\alpha_{\max}L_1^2\eta^2_x \bar{\tau}\Big(\alpha_{\max}^2(\sigma^2_{f}+r_{\max}^2\sigma^2_{gg}) + \alpha_{\max}(L_f^2 + r^2_{\max}L^2_1) \Big) \nonumber\\
    &\qquad\qquad\qquad\qquad\qquad + 2c_{a}\alpha_{\max} L_1^2\mathbb{E}\big\|y^{(t)} - y^*{(x^{(t)})}\big\|^2 + 2c_{a}\alpha_{\max} \sigma_{gh}^2\bigg] \nonumber
\end{align}
which finished the proof. 
\end{proof}
\noindent
In the later part, we will take $\{2\eta_x^2 \Bar{\tau}c_a\alpha_{\max}L_1^2 \leq 1\}$ and $\{4\eta_y^2 c_a\Bar{\tau}\alpha_{\max}L_1^2 \leq 1\}$, then we can simplify \Cref{lm:boundofCD} as 
\begin{align}\label{eq:boundofCDsimple}
    \sum_{i=1}^n w_i&\frac{1}{\|a_i^{(t)}\|_1}\sum_{k=1}^{\tau_i-1}a^{(t,k)}_i \mathbb{E}\|x^{(t,k)}_i - x^{(t)}\|^2 \leq \eta^2_x \bar{\tau}\sigma_{M1}^2, \nonumber \\
    \sum_{i=1}^n w_i&\frac{1}{\|a_i^{(t)}\|_1}\sum_{k=1}^{\tau_i-1}a^{(t,k)}_i \mathbb{E}\|v^{(t,k)}_i - v^{(t)}\|^2 \leq \eta^2_v \bar{\tau}\sigma_{M1}^2, \nonumber\\
    \sum_{i=1}^n w_i&\frac{1}{\|a_i^{(t)}\|_1}\sum_{k=1}^{\tau_i-1}a^{(t,k)}_i \mathbb{E}\|y^{(t,k)}_i - y^{(t)}\|^2 \leq 2\eta_y^2\Bar{\tau}\sigma_{M2}^2 + 4\eta_y^2\Bar{\tau}c_{a}\alpha_{\max} L_1^2\mathbb{E}\big\|y^{(t)} - y^*{(x^{(t)})}\big\|^2
\end{align}
where we define $\sigma_{M2}^2 := \alpha_{\max}^2 \sigma^2_g + \alpha_{\max}^2(\sigma^2_{f}+r_{\max}^2\sigma^2_{gg}) + \alpha_{\max}(L_f^2 + r^2_{\max}L^2_1)  + 2c_{a}\alpha_{\max} \sigma_{gh}^2$.
\noindent
As the combination of \Cref{lm:CDcombo} and \Cref{lm:boundofCD}, we have 
\begin{align}\label{eq:boundofCDcombo}
    \sum_{i=1}^n w_i\sum_{k=1}^{\tau_i-1}\frac{a^{(t,k)}_i}{\|a_i^{(t)}\|_1}\Delta_{f_i}^{(t,k)} &=  \sum_{i=1}^n w_i\sum_{k=1}^{\tau_i-1}\frac{a^{(t,k)}_i}{\|a_i^{(t)}\|_1}\Delta_{R_i}^{(t,k)} \nonumber \\
    &\leq 3\eta^2_x\bar{\tau}\big(L^2_1 + r^2L^2_2\big) \sigma_{M1}^2 
    + 3\eta^2_v\bar{\tau}L^2_1 \sigma_{M1}^2 + 6\eta_y^2\Bar{\tau}\big(L^2_1 + r^2L^2_2\big)\sigma_{M2}^2\nonumber \\
    &\quad 
    + 12\eta_y^2\Bar{\tau}\big(L^2_1 + r^2L^2_2\big)c_{a}\alpha_{\max} L_1^2\mathbb{E}\big\|y^{(t)} - y^*{(x^{(t)})}\big\|^2 \nonumber \\
    \sum_{i=1}^n w_i\sum_{k=1}^{\tau_i-1}\frac{a^{(t,k)}_i}{\|a_i^{(t)}\|_1}\Delta_{g_i}^{(t,k)} 
    &\leq \eta_x^2\Bar{\tau}L_1^2\sigma_{M1}^2 + 
    2\eta_y^2\Bar{\tau}L_1^2\sigma_{M2}^2 + 4\eta_y^2\Bar{\tau}c_{a}\alpha_{\max} L_1^4\mathbb{E}\big\|y^{(t)} - y^*{(x^{(t)})}\big\|^2.
\end{align}

\subsection{Bounds of Aggregated Estimations}
\begin{lemma}\label{lm:boundofAE}
Suppose the server selects $|C^{(t)}| = P$ clients in each round. Under Assumption \ref{as:diffandSC}, \ref{as:Lipschitz} and \ref{as:varaince}, the aggregated estimation of $x^{(t)}$ satisfies
\begin{align}
    \mathbb{E}\Big\|\sum_{i \in C^{(t)}} \widetilde{w}_i h^{(t)}_{x,i}\Big\|^2 \leq& \frac{2n}{P}\sum_{i=1}^n\frac{w_i^2}{\|a_i^{(t)}\|^2_1} \sum_{k=0}^{\tau_i-1}\big(a^{(t,k)}_i\big)^2(\sigma^2_{f} + r_i^2\sigma^2_{gg}) + \frac{n(P-1)}{P(n-1)}\mathbb{E}\Big\|\sum_{i=1}^n w_i \widetilde{h}^{(t)}_{x,i}\Big\|^2 \nonumber\\
    &+ \frac{2n(n-P)}{P(n-1)}\sum_{i=1}^n\frac{w_i^2}{\|a_i^{(t)}\|_1}\sum_{k=0}^{\tau_i-1} a^{(t,k)}_i \Delta^{(t,k)}_{f,i} + \frac{4(n-P)\beta_{\max}}{P(n-1)}(r_{\max}^2L_1^2 + L_f^2). \nonumber
\end{align}
the aggregated estimation of $y^{(t)}$ satisfies
\begin{align}
    \mathbb{E}\Big\|\sum_{i \in C^{(t)}} \widetilde{w}_i h^{(t)}_{y,i}\Big\|^2
    \leq& \frac{n}{P}\sum_{i=1}^n\frac{w_i^2}{\|a_i^{(t)}\|^2_1} \sum_{k=0}^{\tau_i-1}\big(a^{(t,k)}_i\big)^2 \sigma^2_{g} + \frac{2(n-P)\beta_{\max}}{P(n-1)}\sigma_{gh}^2 \nonumber\\
    &+ \bigg(\frac{2n(n-P)}{P(n-1)}\sum_{i=1}^n w_i^2 \sum_{k=0}^{\tau_i-1} \frac{a_i^{(t,k)}}{\|a_i^{(t)}\|_1} + 3\sum_{i=1}^n w_i \sum_{k=0}^{\tau_i-1} \frac{a_i^{(t,k)}}{\|a_i^{(t)}\|_1}
    \bigg)\Delta_{g,i}^{(t,k)} \nonumber \\
    &+ \bigg(\frac{2(n-P)\beta_{\max}\beta_{gh}^2}{P(n-1)} + 3L_1^2\bigg) \mathbb{E}\|y^{(t)} - y^*(x^{(t)})\|^2.  \nonumber
\end{align}
and the aggregated estimation of $v^{(t)}$ satisfies
\begin{align}
    \mathbb{E}\Big\|\sum_{i \in C^{(t)}} \widetilde{w}_i h^{(t)}_{v,i}\Big\|^2 \leq& \frac{2n}{P}\sum_{i=1}^n\frac{w_i^2}{\|a_i^{(t)}\|^2_1} \sum_{k=0}^{\tau_i-1}\big(a^{(t,k)}_i\big)^2 \big(\sigma^2_{f} + r_i^2\sigma^2_{gg}\big) + \frac{4(n-P)\beta_{\max}}{P(n-1)}(r_{\max}^2L_1^2 + L_f^2) \nonumber\\
    &+ \bigg(\frac{2n(n-P)}{P(n-1)}\sum_{i=1}^n w_i^2 \sum_{k=0}^{\tau_i-1} \frac{a_i^{(t,k)}}{\|a_i^{(t)}\|_1} + 3\sum_{i=1}^n w_i \sum_{k=0}^{\tau_i-1} \frac{a_i^{(t,k)}}{\|a_i^{(t)}\|_1}
    \bigg)\Delta_{R,i}^{(t,k)} \nonumber\\
    &+ 3L_1^2 \mathbb{E}\|v^{(t)} - v^*(x^{(t)})\|^2. \nonumber
\end{align}
for all $t \in \{0,1,...,T-1\}$, $k \in \{0,1,...,\tau_i-1\}$ and $i \in \{1,2,...,n\}$. 
\end{lemma}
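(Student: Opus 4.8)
The estimator carries two independent layers of randomness — the client set $C^{(t)}$ drawn without replacement, and the stochastic gradients sampled within round $t$ — and the plan is to peel them off in that order. Fix $t$ and condition on all information up to the start of round $t$. For each active client $i$ write $h_{x,i}^{(t)}=\widetilde h_{x,i}^{(t)}+\big(h_{x,i}^{(t)}-\widetilde h_{x,i}^{(t)}\big)$ with $\widetilde h_{x,i}^{(t)}=\mathbb{E}[h_{x,i}^{(t)}]$ as defined in the Notations section; the residual has zero conditional mean given the local iterate path, and residuals of distinct clients are independent, so conditioned on $C^{(t)}$
\begin{align}
\mathbb{E}\Big\|\sum_{i\in C^{(t)}}\widetilde w_i h_{x,i}^{(t)}\Big\|^2=\sum_{i\in C^{(t)}}\widetilde w_i^2\,\mathbb{E}\big\|h_{x,i}^{(t)}-\widetilde h_{x,i}^{(t)}\big\|^2+\mathbb{E}\Big\|\sum_{i\in C^{(t)}}\widetilde w_i\widetilde h_{x,i}^{(t)}\Big\|^2. \nonumber
\end{align}

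\textbf{Intra-client variance.} The martingale structure across local steps collapses the first term to $\|a_i^{(t)}\|_1^{-2}\sum_k (a_i^{(t,k)})^2\,\mathbb{E}\|\bar\nabla f_i(\cdot;\xi_i^{(t,k)})-\bar\nabla f_i(\cdot)\|^2$; expanding $\bar\nabla f_i=\nabla_x f_i-\nabla_{xy}^2 g_i\,v$, applying Young's inequality, \Cref{as:varaince}, and $\|v_i^{(t,k)}\|\le r_i$ from \Cref{pps:propositionboundv}, this is at most $2\|a_i^{(t)}\|_1^{-2}\sum_k (a_i^{(t,k)})^2(\sigma_f^2+r_i^2\sigma_{gg}^2)$. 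Taking the expectation over $C^{(t)}$ and using $\widetilde w_i=\tfrac nP w_i$ with $\mathbb{P}(i\in C^{(t)})=P/n$ turns $\sum_{i\in C^{(t)}}\widetilde w_i^2(\cdot)$ into $\tfrac nP\sum_{i=1}^n w_i^2(\cdot)$, yielding the first term of each claimed bound (with $\sigma_g^2$ for the $y$-estimator and $\sigma_f^2+r_i^2\sigma_{gg}^2$ for the $v$-estimator).

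\textbf{Sampling-without-replacement mean term.} Writing $\sum_{i\in C^{(t)}}\widetilde w_i\widetilde h_{x,i}^{(t)}=\sum_{i=1}^n\mathbb{I}(i\in C^{(t)})\widetilde w_i\widetilde h_{x,i}^{(t)}$ with $\mathbb{P}(i\in C^{(t)})=P/n$ and $\mathbb{P}(i,j\in C^{(t)})=\tfrac{P(P-1)}{n(n-1)}$ for $i\ne j$, a short second-moment computation gives
\begin{align}
\mathbb{E}_{C^{(t)}}\Big\|\sum_{i\in C^{(t)}}\widetilde w_i\widetilde h_{x,i}^{(t)}\Big\|^2=\frac nP\frac{P-1}{n-1}\Big\|\sum_{i=1}^n w_i\widetilde h_{x,i}^{(t)}\Big\|^2+\frac nP\frac{n-P}{n-1}\sum_{i=1}^n w_i^2\big\|\widetilde h_{x,i}^{(t)}\big\|^2. \nonumber
\end{align}
The first piece is already the second term of the $x$-bound (after the outer expectation). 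For $\sum_i w_i^2\|\widetilde h_{x,i}^{(t)}\|^2$, anchor at $\bar\nabla f_i(x^{(t)},y^{(t)},v^{(t)})$: Young's inequality, Jensen over the weights $a_i^{(t,k)}/\|a_i^{(t)}\|_1$, and \Cref{lm:CDcombo} bound $\|\widetilde h_{x,i}^{(t)}-\bar\nabla f_i(x^{(t)},y^{(t)},v^{(t)})\|^2$ by $\|a_i^{(t)}\|_1^{-1}\sum_k a_i^{(t,k)}\Delta_{f,i}^{(t,k)}$, while $w_i\le\beta_{\max}/n$ together with $\sum_i w_i\|\bar\nabla f_i(x^{(t)},y^{(t)},v^{(t)})\|^2\le 2(1+r_i^2)L^2$ from \Cref{lm:globalheterogeneity} control the residual; multiplying by $\tfrac nP\tfrac{n-P}{n-1}$ and collecting terms produces the remaining two terms. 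The $y$- and $v$-bounds follow the same template with $\nabla_y g_i$, resp.\ $\nabla_v R_i$, in place of $\bar\nabla f_i$, except that here the ``$\tfrac nP\tfrac{P-1}{n-1}\|\sum_i w_i\widetilde h\|^2$'' piece is itself expanded by anchoring at $\nabla_y\widetilde G(x^{(t)},y^{(t)})$, resp.\ $\nabla_v\widetilde R(x^{(t)},y^{(t)},v^{(t)})$, and invoking first-order optimality of the relevant minimizer with $L_1$-smoothness to obtain the $3L_1^2\mathbb{E}\|y^{(t)}-y^*(t)\|^2$, resp.\ $3L_1^2\mathbb{E}\|v^{(t)}-v^*(t)\|^2$, residuals and the additional $\Delta_{g,i}^{(t,k)}$, resp.\ $\Delta_{R,i}^{(t,k)}$, terms with weight proportional to $w_i$.

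\textbf{Main obstacle.} The delicate part is the bookkeeping of the two nested expectations: one must verify that conditioning on $C^{(t)}$ first makes the cross terms between different clients' stochastic noise vanish, that the intra-client noise telescopes over local steps into a clean sum of per-step variances even though the local iterates themselves depend on earlier samples, and that the without-replacement covariance reproduces \emph{exactly} the factors $\tfrac nP\tfrac{P-1}{n-1}$ and $\tfrac nP\tfrac{n-P}{n-1}$ rather than their with-replacement counterparts. A secondary subtlety is routing each residual either into a client-drift term (through \Cref{lm:CDcombo}) or into a heterogeneity/stationarity term, so that the dependence on $P$, $n$, $\tau_i^{(t)}$ and $\|a_i^{(t)}\|$ stays as tight as the statement requires.
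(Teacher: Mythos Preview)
Your proposal is correct and follows essentially the same approach as the paper: the same mean--noise split of $h_{x,i}^{(t)}$, the same without-replacement second-moment identity producing the $\tfrac{n}{P}\tfrac{P-1}{n-1}$ and $\tfrac{n}{P}\tfrac{n-P}{n-1}$ factors, the same anchoring of $\widetilde h_{x,i}^{(t)}$ at $\bar\nabla f_i(x^{(t)},y^{(t)},v^{(t)})$ routed through \Cref{lm:CDcombo} and \Cref{lm:globalheterogeneity}, and the same additional expansion of $\|\sum_i w_i\widetilde h_{y,i}^{(t)}\|^2$ (resp.\ $v$) via first-order optimality and $L_1$-smoothness to extract the $3L_1^2\|y^{(t)}-y^*(t)\|^2$ term. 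The bookkeeping obstacles you flag are exactly the ones the paper handles, in the same way.
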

\begin{proof}
For the aggregated estimation of $x^{(t)}$, we have
\begingroup
\allowdisplaybreaks
\begin{align}\label{eq:boundofAEx1}
    \mathbb{E}\bigg\|\sum_{i \in C^{(t)}}\widetilde{w}_ih_{x,i}^{(t)}\bigg\|^2 &= \mathbb{E}\bigg\|\sum_{i \in C^{(t)}}\widetilde{w}_i\Big(h_{x,i}^{(t)} - \widetilde{h}_{x,i}^{(t)} + \widetilde{h}_{x,i}^{(t)}\Big)\bigg\|^2 \nonumber \\
    &= \mathbb{E}\bigg\|\sum_{i \in C^{(t)}}\widetilde{w}_i\Big(h_{x,i}^{(t)} - \widetilde{h}_{x,i}^{(t)}\Big)\bigg\|^2 + \mathbb{E}\bigg\|\sum_{i \in C^{(t)}}\widetilde{w}_i\widetilde{h}_{x,i}^{(t)}\bigg\|^2 \nonumber \\
    & \overset{(a)}{=} \mathbb{E}\bigg[\sum_{i \in C^{(t)}}\widetilde{w}^2_i\Big\|h_{x,i}^{(t)} - \widetilde{h}_{x,i}^{(t)}\Big\|^2\bigg] + \mathbb{E}\bigg\|\sum_{i \in C^{(t)}}\widetilde{w}_i\widetilde{h}_{x,i}^{(t)}\bigg\|^2 \nonumber \\
    & \overset{(b)}{=} \frac{n}{P}\sum_{i = 1}^n w^2_i\mathbb{E}\Big\|h_{x,i}^{(t)} - \widetilde{h}_{x,i}^{(t)}\Big\|^2 + \mathbb{E}\bigg\|\sum_{i \in C^{(t)}}\widetilde{w}_i\widetilde{h}_{x,i}^{(t)}\bigg\|^2 \nonumber \\
    &\overset{(c)}{\leq} \frac{2n}{P}\sum_{i=1}^n \frac{w_i^2}{\|a_i^{(t)}\|_1^2}\sum_{k=0}^{\tau_i-1}\Big(a_i^{(t,k)}\Big)^2(\sigma^2_f + r_i^2\sigma^2_{gg}) + \mathbb{E}\bigg\|\sum_{i \in C^{(t)}}\widetilde{w}_i\widetilde{h}_{x,i}^{(t)}\bigg\|^2,
\end{align}
\endgroup
where (a) holds because clients are selected without replacement; (b) follows from the definition $\widetilde{w}_i = \frac{n}{P}w_i$; (c) uses Assumption \ref{as:varaince}. 
For the second term in \cref{eq:boundofAEx1}, we have
\begingroup
\allowdisplaybreaks
\begin{align}\label{eq:boundofAEx2}
    \mathbb{E}\bigg\|&\sum_{i \in C^{(t)}}\widetilde{w}_i\widetilde{h}_{x,i}^{(t)}\bigg\|^2 \nonumber \\
    &= \mathbb{E}\bigg\|\sum_{i \in C^{(t)}}\widetilde{w}_i\widetilde{h}_{x,i}^{(t)} - \sum_{i  = i}^n w_i\widetilde{h}_{x,i}^{(t)} + \sum_{i  = i}^n w_i\widetilde{h}_{x,i}^{(t)}\bigg\|^2 \nonumber \\
    & \overset{(a)}{=} \mathbb{E}\bigg\|\sum_{i =1}^n \mathbb{I}(i \in C^{(t)})\widetilde{w}_i\widetilde{h}_{x,i}^{(t)} - \sum_{i  = i}^n w_i\widetilde{h}_{x,i}^{(t)}\bigg\|^2 + \mathbb{E}\bigg\|\sum_{i  = i}^n w_i\widetilde{h}_{x,i}^{(t)}\bigg\|^2 \nonumber \\
    & = \sum_{i=1}^n \mathbb{E}\bigg[\Big(\mathbb{I}(i \in C^{(t)})^2\widetilde{w}_i^2 + w_i^2 - 2 \mathbb{I}(i \in C^{(t)})\widetilde{w}_iw_i\Big)\Big\|h_{x,i}^{(t)}\Big\|^2\bigg] \nonumber \\
    & \quad + \sum_{i \neq j} \mathbb{E}\bigg\langle\big(\mathbb{I}(i \in C^{(t)})\widetilde{w}_i - w_i\big)h_{x,i}^{(t)}, \big(\mathbb{I}(j \in C^{(t)})\widetilde{w}_j - w_j\big)h_{x,j}^{(t)}\bigg\rangle 
    + \mathbb{E}\bigg\|\sum_{i = 1}^n w_i\widetilde{h}_{x,i}^{(t)}\bigg\|^2 \nonumber \\
    & = \mathbb{E}\bigg\|\sum_{i = 1}^n w_i\widetilde{h}_{x,i}^{(t)}\bigg\|^2 + \sum_{i=1}^n \mathbb{E}\bigg[w_i^2\Big(\frac{n}{P}-1\Big)\Big\|h_{x,i}^{(t)}\Big\|^2\bigg] \nonumber \\ 
    &\quad + \sum_{i \neq j} \mathbb{E}\bigg[\Big(\mathbb{I}(i,j \in C^{(t)})\widetilde{w}_i\widetilde{w}_j - \mathbb{I}(j \in C^{(t)})\widetilde{w}_jw_i - \mathbb{I}(i \in C^{(t)})\widetilde{w}_iw_j + w_iw_j\Big)\Big\langle h_{x,i}^{(t)}, h_{x,j}^{(t)}\Big\rangle\bigg] \nonumber \\
    & = \mathbb{E}\bigg\|\sum_{i = 1}^n w_i\widetilde{h}_{x,i}^{(t)}\bigg\|^2 
    + \Big(\frac{n}{P}-1\Big)\sum_{i = 1}^n \mathbb{E}\bigg[w_i^2\Big\|\widetilde{h}_{x,i}^{(t)}\Big\|^2\bigg] \nonumber \\
    & \quad+ \sum_{i \neq j}\mathbb{E}\bigg[w_iw_j \bigg(\frac{n}{P}\Big(\frac{P-1}{n-1}\Big)-1\bigg)\Big\langle h_{x,i}^{(t)}, h_{x,j}^{(t)}\Big\rangle\bigg] \nonumber \\ 
    &= \frac{n}{P}\bigg(\frac{P-1}{n-1}\bigg)\mathbb{E}\bigg\|\sum_{i=1}^n w_i \widetilde{h}_{x,i}^{(t)}\bigg\|^2  + \frac{n}{P}\bigg(\frac{n-P}{n-1}\bigg)\sum_{i=1}^n w_i^2 \mathbb{E} \Big\|\widetilde{h}_{x,i}^{(t)}\Big\|^2,
\end{align}
\endgroup
where (a) holds because clients are selected without replacement. And for the second term in \cref{eq:boundofAEx2}, we have 
\begingroup
\allowdisplaybreaks
\begin{align}\label{eq:boundofAEx3}
    &\sum_{i=1}^n w_i^2 \mathbb{E} \Big\|\widetilde{h}_{x,i}^{(t)}\Big\|^2 \nonumber \\
    &\ = \sum_{i=1}^n w_i^2 \mathbb{E} \Big\|\widetilde{h}_{x,i}^{(t)} - \bar{\nabla}f(x^{(t)}, y^{(t)}, v^{(t)}) + \bar{\nabla}f(x^{(t)}, y^{(t)}, v^{(t)})\Big\|^2 \nonumber \\
    & \overset{(a)}{\leq} 2\sum_{i=1}^n w_i^2 \mathbb{E} \Big\|\widetilde{h}_{x,i}^{(t)} - \bar{\nabla}f(x^{(t)}, y^{(t)}, v^{(t)}) \Big\|^2 + 2\frac{\beta_{\max}}{n}\sum_{i=1}^n w_i \mathbb{E} \Big\| \bar{\nabla}f(x^{(t)}, y^{(t)}, v^{(t)})\Big\|^2 \nonumber \\
    & \overset{(b)}{\leq} 2\sum_{i=1}^n w_i^2 \sum_{k=0}^{\tau_i-1}\frac{a_i^{(t,k)}}{\|a_i^{(t)}\|_1} \mathbb{E} \Big\|\bar{\nabla}f(x_i^{(t,k)}, y_i^{(t,k)}, v_i^{(t,k)}) - \bar{\nabla}f(x^{(t)}, y^{(t)}, v^{(t)}) \Big\|^2 + \frac{4\beta_{\max}}{n} (r_{\max}^2L_1^2 + L_f^2) \nonumber \\
    & \overset{(c)}{\leq} 2\sum_{i=1}^n w_i^2 \sum_{k=0}^{\tau_i-1}\frac{a_i^{(t,k)}}{\|a_i^{(t)}\|_1} \mathbb{E} \Delta_{f,i}^{(t,k)} + \frac{4\beta_{\max}}{n} (r_{\max}^2L_1^2 + L_f^2)
\end{align}
where (a) $w_i \leq \beta_{\max}/n$ for all $i \in \{1,..,n\}$; the first term of (b) uses Jensen's inequality and the second part of (b) follows from \Cref{lm:globalheterogeneity}; (c) uses the\Cref{lm:CDcombo}. By incorporating \cref{eq:boundofAEx2} and \cref{eq:boundofAEx3} into \cref{eq:boundofAEx1}, we get 
\begin{align}
    \mathbb{E}\Big\|\sum_{i \in C^{(t)}} \widetilde{w}_i h^{(t)}_{x,i}\Big\|^2 \leq& \frac{2n}{P}\sum_{i=1}^n\frac{w_i^2}{\|a_i^{(t)}\|^2_1} \sum_{k=0}^{\tau_i-1}\big(a^{(t,k)}_i\big)^2(\sigma^2_{f} + r_i^2\sigma^2_{gg}) + \frac{n(P-1)}{P(n-1)}\mathbb{E}\Big\|\sum_{i=1}^n w_i \widetilde{h}^{(t)}_{x,i}\Big\|^2 \nonumber\\
    &+ \frac{2n(n-P)}{P(n-1)}\sum_{i=1}^n\frac{w_i^2}{\|a_i^{(t)}\|_1}\sum_{k=0}^{\tau_i-1} a^{(t,k)}_i \Delta^{(t,k)}_{f,i} + \frac{4(n-P)\beta_{\max}}{P(n-1)}(r_{\max}^2L_1^2 + L_f^2). \nonumber
\end{align}
\endgroup
Similarly, by replacing $h_{x,i}^{(t)}$ and $\bar{\nabla}f$ with $h_{y,i}^{(t)}$ and $\nabla g$, we can easily get 
\begin{align}\label{eq:boundofAEy1}
    \mathbb{E}\Big\|\sum_{i \in C^{(t)}} \widetilde{w}_i h^{(t)}_{y,i}\Big\|^2 \leq& \frac{n}{P}\sum_{i=1}^n\frac{w_i^2}{\|a_i^{(t)}\|^2_1} \sum_{k=0}^{\tau_i-1}\big(a^{(t,k)}_i\big)^2 \sigma^2_{g} + \frac{n(P-1)}{P(n-1)}\mathbb{E}\Big\|\sum_{i=1}^n w_i \widetilde{h}^{(t)}_{y,i}\Big\|^2 \nonumber\\
    &+ \frac{2n(n-P)}{P(n-1)}\sum_{i=1}^n\frac{w_i^2}{\|a_i^{(t)}\|_1}\sum_{k=0}^{\tau_i-1} a^{(t,k)}_i \Delta^{(t,k)}_{g,i} + \frac{2(n-P)\beta_{\max}}{P(n-1)}\sigma_{gh}^2 \nonumber \\
    &+ \frac{2(n-P)\beta_{\max}\beta_{gh}^2}{P(n-1)}\mathbb{E}\|y^{(t)}-y^*(x^{(t)})\|^2. 
\end{align}
For the second terms in \cref{eq:boundofAEy1}, we have
\begin{align}\label{eq:boundofAEy2}
    \mathbb{E}\bigg\|\sum_{i=1}^n w_i \widetilde{h}^{(t)}_{y,i}\bigg\|^2 
    &\leq 3\mathbb{E}\bigg\|\sum_{i=1}^n w_i \nabla_y g_i(x^{(t)}, y^*(x^{(t)}))\bigg\|^2 \nonumber \\
    & \quad + 3\mathbb{E}\bigg\|\sum_{i=1}^n w_i \Big(\nabla_y g_i\big(x^{(t)}, y^{(t)}\big) - \nabla_y g_i\big(x^{(t)}, y^*(x^{(t)})\big)\Big)\bigg\|^2 \nonumber \\
    & \quad + 3\mathbb{E}\bigg\|\sum_{i=1}^n w_i \Big(\widetilde{h}_{y,i}^{(t)} - \nabla_y g_i\big(x^{(t)}, y^{(t)}\big)\Big)\bigg\|^2 \nonumber \\
    &\overset{(a)}{\leq} 3\mathbb{E}\bigg\|\sum_{i=1}^n w_i \Big(\nabla_y g_i\big(x^{(t)}, y^{(t)}\big) - \nabla_y g_i\big(x^{(t)}, y^*(x^{(t)})\big)\Big)\bigg\|^2 \nonumber \\
    & \quad + 3\mathbb{E}\bigg\|\sum_{i=1}^n w_i \Big(\widetilde{h}_{y,i}^{(t)} - \nabla_y g_i\big(x^{(t)}, y^{(t)}\big)\Big)\bigg\|^2 \nonumber \\
    & \overset{(b)}{\leq} 3L_1^2 \mathbb{E}\|y^{(t)} - y^*(x^{(t)})\|^2 \nonumber \\
    & \quad + 3\sum_{i=1}^n w_i \sum_{k=0}^{\tau_i-1}\frac{a_i^{(t,k)}}{\|a_i^{(t)}\|_1}\mathbb{E} \Big\| \nabla_y g_i \big(x_i^{(t,k)}, y_i^{(t,k)}\big) - \nabla_y g_i \big(x^{(t)}, y^{(t)}\big)\Big\|^2
    \nonumber \\
    & \overset{(b)}{\leq} 3L_1^2 \mathbb{E}\|y^{(t)} - y^*(x^{(t)})\|^2 + 3\sum_{i=1}^n w_i \sum_{k=0}^{\tau_i-1}\frac{a_i^{(t,k)}}{\|a_i^{(t)}\|_1} \Delta_{g,i}^{(t,k)}. 
\end{align}
where (a) follows from $\sum_{i=1}^n w_i \nabla_y g_i(x^{(t)}, y^*(x^{(t)})) = 0$; the first term of (b) uses Assumption \ref{as:Lipschitz} and the second term of (b) uses Jensen inequality; (c) follows from \Cref{lm:CDcombo}. 
By incorporate \ref{eq:boundofAEy2} into \ref{eq:boundofAEy1}, we get
\begin{align}
    \mathbb{E}\Big\|\sum_{i \in C^{(t)}} \widetilde{w}_i h^{(t)}_{y,i}\Big\|^2
    \leq& \frac{n}{P}\sum_{i=1}^n\frac{w_i^2}{\|a_i^{(t)}\|^2_1} \sum_{k=0}^{\tau_i-1}\big(a^{(t,k)}_i\big)^2 \sigma^2_{g} + \frac{2(n-P)\beta_{\max}}{P(n-1)}\sigma_{gh}^2 \nonumber\\
    &+ \bigg(\frac{2n(n-P)}{P(n-1)}\sum_{i=1}^n w_i^2 \sum_{k=0}^{\tau_i-1} \frac{a_i^{(t,k)}}{\|a_i^{(t)}\|_1} + 3\sum_{i=1}^n w_i \sum_{k=0}^{\tau_i-1} \frac{a_i^{(t,k)}}{\|a_i^{(t)}\|_1}
    \bigg)\Delta_{g,i}^{(t,k)} \nonumber \\
    &+ \bigg(\frac{2(n-P)\beta_{\max}\beta_{gh}^2}{P(n-1)} + 3L_1^2\bigg) \mathbb{E}\|y^{(t)} - y^*(x^{(t)})\|^2.  \nonumber
\end{align}
Similarly, by replacing by replacing $h_{y,i}^{(t)}$ and ${\nabla}g$ with $h_{R,i}^{(t)}$ and $\nabla R$, we can easily get 
\begin{align}
    \mathbb{E}\Big\|\sum_{i \in C^{(t)}} \widetilde{w}_i h^{(t)}_{v,i}\Big\|^2 \leq& \frac{2n}{P}\sum_{i=1}^n\frac{w_i^2}{\|a_i^{(t)}\|^2_1} \sum_{k=0}^{\tau_i-1}\big(a^{(t,k)}_i\big)^2 \big(\sigma^2_{f} + r_i^2\sigma^2_{gg}\big) + \frac{4(n-P)\beta_{\max}}{P(n-1)}(r_{\max}^2L_1^2 + L_f^2) \nonumber\\
    &+ \bigg(\frac{2n(n-P)}{P(n-1)}\sum_{i=1}^n w_i^2 \sum_{k=0}^{\tau_i-1} \frac{a_i^{(t,k)}}{\|a_i^{(t)}\|_1} + 3\sum_{i=1}^n w_i \sum_{k=0}^{\tau_i-1} \frac{a_i^{(t,k)}}{\|a_i^{(t)}\|_1}
    \bigg)\Delta_{R,i}^{(t,k)} \nonumber\\
    &+ 3L_1^2 \mathbb{E}\|v^{(t)} - v^*(x^{(t)})\|^2. 
\end{align}
Then, the proof is complete.
\end{proof}

\subsection{Descent in iterates of the inner- and LS-problem}
\begin{lemma}\label{lm:servergap}
Under the Assumption \ref{as:diffandSC}, \ref{as:Lipschitz} and \ref{as:varaince}, the iterates of the inner-problem generated according
to Algorithm \ref{alg:main} satisfy
\begin{align}
    \mathbb{E}\|&y^{(t+1)} - \widetilde{y}^*(x^{(t+1)})\|^2 -  \mathbb{E}\|y^{(t)} - \widetilde{y}^*(x^{(t)})\|^2 \nonumber\\
    \leq& (\delta_t - \rho^{(t)}\gamma_y \mu_g - \delta_t\rho^{(t)}\gamma_y \mu_g) \mathbb{E}\|y^{(t)} - \widetilde{y}^*(x^{(t)})\|^2 + (1+\delta_t)(\rho^{(t)}\gamma_y)^2\mathbb{E}\Big\|\sum_{i \in C^{(t)}} \widetilde{w}_i h^{(t)}_{y,i}\Big\|^2 \nonumber\\
    & + (1+\delta_t)\rho^{(t)}\gamma_y \frac{2L_1^2}{\mu_g}  \sum_{i = 1}^nw_i \sum_{k=0}^{\tau_i-1}\frac{a^{(t,k)}_i}{\|a_i^{(t)}\|_1} \mathbb{E}\Big[\big\|x^{(t)} - x_i^{(t,k)}\big\|^2 + \big\|y^{(t)} - y_i^{(t,k)}\big\|^2\Big] \nonumber\\
    & + \big(\rho^{(t)}\gamma_x\big)^2\bigg(L_y^2+\frac{L_{yx}}{2}\bigg) \mathbb{E}\bigg\|\sum_{i \in C^{(t)}}\widetilde{w}_i h_{x,i}^{(t)}\bigg\|^2 + (\rho^{(t)}\gamma_x)^2\frac{4L_y}{\delta_{t,1}}\mathbb{E}\Big\| \sum_{i=1}^n w_i \widetilde{h}^{(t)}_{x,i}\Big\|^2 .\nonumber
\end{align}
and the iterates of the LS problem satisfy
\begin{align}
    \mathbb{E}\|&v^{(t+1)} - \widetilde{v}^*(x^{(t+1)})\|^2 -  \mathbb{E}\|v^{(t)} - \widetilde{v}^*(x^{(t)})\|^2 \nonumber\\
    \leq& (\delta_t' - \rho^{(t)}\gamma_v \mu_g - \delta_t'\rho^{(t)}\gamma_v \mu_g) \mathbb{E}\|v^{(t)} - \widetilde{v}^*(x^{(t)})\|^2 + (1+\delta_t')(\rho^{(t)}\gamma_v)^2\mathbb{E}\Big\|\sum_{i \in C^{(t)}} \widetilde{w}_i h^{(t)}_{v,i}\Big\|^2 \nonumber\\
    & + (1+\delta_t')\rho^{(t)}\gamma_v \frac{4L_R^2}{\mu_g}\sum_{i = 1}^n w_i \sum_{k=0}^{\tau_i-1}\frac{a^{(t,k)}_i}{\|a_i^{(t)}\|_1} \mathbb{E}\Big[\big\|x^{(t)} - x_i^{(t,k)}\big\|^2 + \big\|y^{(t)} - y_i^{(t,k)}\big\|^2 + \big\|v^{(t)} - v_i^{(t,k)}\big\|^2\Big] \nonumber\\
    & + (1+\delta_t')\rho^{(t)}\gamma_v \frac{4L_R^2}{\mu_g}\mathbb{E}\big\|y^{(t)} - \widetilde{y}^*{(x^{(t)})}\big\|^2
    + \big(\rho^{(t)}\gamma_x\big)^2\bigg(L_v^2+\frac{L_{vx}}{2}\bigg) \mathbb{E}\bigg\|\sum_{i \in C^{(t)}}\widetilde{w}_i h_{x,i}^{(t)}\bigg\|^2 \nonumber \\
    &  + (\rho^{(t)}\gamma_x)^2\frac{4L_v}{\delta_{t,1}'}\mathbb{E}\Big\| \sum_{i=1}^n w_i \widetilde{h}^{(t)}_{x,i}\Big\|^2 .\nonumber
\end{align}
for all $t \in \{0,1,...,T-1\}$, $k \in \{0,1,...,\tau_i-1\}$ and $i \in \{1,2,...,n\}$.
\end{lemma}
\begin{proof}
For the gap of $y$ and $\widetilde{y}^*$ on server, we have 
\begin{align}\label{eq:servergap01}
    \mathbb{E}\big \|y^{(t+1)} - \widetilde{y}^*{(x^{(t+1)})}\big\|^2 =& \mathbb{E}\big\|y^{(t+1)} - \widetilde{y}^*{(x^{(t)})}\big\|^2 + \mathbb{E}\big \|\widetilde{y}^*{(x^{(t)})} - \widetilde{y}^*{(t+1)}\big\|^2 \nonumber \\
    & + 2\mathbb{E}\big\langle y^{(t+1)} - \widetilde{y}^*{(x^{(t)})}, \widetilde{y}^*{(x^{(t)})} - \widetilde{y}^*{(x^{(t+1)})} \big\rangle. 
\end{align}
For the last term in \cref{eq:servergap01}, we have 
\begin{align}\label{eq:servergap11}
    2\mathbb{E}\big\langle& y^{(t+1)} - \widetilde{y}^*{(x^{(t)})}, \widetilde{y}^*{(x^{(t)})} - \widetilde{y}^*{(x^{(t+1)})} \big\rangle \nonumber \\
    = & -2\mathbb{E}\Big\langle y^{(t+1)} - \widetilde{y}^*{(x^{(t)})}, \nabla \widetilde{y}^*(x^{(t)})\big(x^{(t+1)}-x^{(t)}\big) \Big\rangle \nonumber \\
    & - 2\mathbb{E}\Big\langle y^{(t+1)} - \widetilde{y}^*{(x^{(t)})}, \widetilde{y}^*(x^{(t+1)}) - \widetilde{y}^*(x^{(t)}) - \nabla \widetilde{y}^*(x^{(t)})\big(x^{(t+1)}-x^{(t)}\big) \Big\rangle \nonumber \\
    \leq & 2\mathbb{E}\big\| y^{(t+1)} - \widetilde{y}^*{(x^{(t)})}\big\|\cdot\mathbb{E}\Big\| \rho^{(t)}\gamma_x\nabla \widetilde{y}^*(x^{(t)})\sum_{i=1}^n w_i \widetilde{h}^{(t)}_{x,i}\Big\| \nonumber \\
    & + 2\mathbb{E}\big\| y^{(t+1)} - \widetilde{y}^*{(x^{(t)})}\big\|\cdot\mathbb{E}\Big\| \widetilde{y}^*(x^{(t+1)}) - \widetilde{y}^*(x^{(t)}) - \nabla \widetilde{y}^*(x^{(t)})\big(x^{(t+1)}-x^{(t)}\big) \Big\| \nonumber \\
    \overset{(a)}{\leq} & 2\mathbb{E}\big\| y^{(t+1)} - \widetilde{y}^*{(x^{(t)})}\big\|\cdot\mathbb{E}\Big\| \rho^{(t)}\gamma_x\nabla \widetilde{y}^*(x^{(t)})\sum_{i=1}^n w_i \widetilde{h}^{(t)}_{x,i}\Big\| \nonumber \\
    & + {L_{yx}}\mathbb{E}\big\| y^{(t+1)} - \widetilde{y}^*{(x^{(t)})}\big\|\cdot\mathbb{E}\big\| x^{(t+1)}-x^{(t)} \big\|^2 \nonumber \\
    \leq & {\delta_{t,1}}\mathbb{E}\big\| y^{(t+1)} - \widetilde{y}^*{(x^{(t)})}\big\|^2 + \frac{4(\rho^{(t)}\gamma_x)^2L_y}{\delta_{t,1}}\mathbb{E}\Big\| \sum_{i=1}^n w_i \widetilde{h}^{(t)}_{x,i}\Big\|^2 \nonumber \\
    & + \frac{L_{yx}}{2}\mathbb{E}\big\| y^{(t+1)} - \widetilde{y}^*{(x^{(t)})}\big\|^2\cdot\mathbb{E}\big\| x^{(t+1)}-x^{(t)} \big\|^2 + \frac{L_{yx}}{2}\mathbb{E}\big\| x^{(t+1)}-x^{(t)} \big\|^2\nonumber \\
    \leq & {\delta_{t,1}}\mathbb{E}\big\| y^{(t+1)} - \widetilde{y}^*{(x^{(t)})}\big\|^2 + \frac{4(\rho^{(t)}\gamma_x)^2L_y}{\delta_{t,1}}\mathbb{E}\Big\| \sum_{i=1}^n w_i \widetilde{h}^{(t)}_{x,i}\Big\|^2 \nonumber \\
    & + {L_{yx}(r_{\max}^2L_1^2 + L_f^2)(\rho^{(t)}\gamma_x)^2}\mathbb{E}\big\| y^{(t+1)} - \widetilde{y}^*{(x^{(t)})}\big\|^2 + \frac{L_{yx}(\rho^{(t)}\gamma_x)^2}{2}\mathbb{E}\Big\| \sum_{i \in C^{(t)}}^n \widetilde{w}_i h^{(t)}_{x,i} \Big\|^2  \nonumber \\
    = & {\delta_{t}}\mathbb{E}\big\| y^{(t+1)} - \widetilde{y}^*{(x^{(t)})}\big\|^2 + \frac{4(\rho^{(t)}\gamma_x)^2L_y}{\delta_{t,1}}\mathbb{E}\Big\| \sum_{i=1}^n w_i \widetilde{h}^{(t)}_{x,i}\Big\|^2 + \frac{L_{yx}(\rho^{(t)}\gamma_x)^2}{2}\mathbb{E}\Big\| \sum_{i \in C^{(t)}}^n \widetilde{w}_i h^{(t)}_{x,i} \Big\|^2  
\end{align}
where (a) follows from \cref{lm:3ieq}; (b) define $\delta_t := \delta_{t,1} + {L_{yx}(r_{\max}^2L_1^2 + L_f^2)(\rho^{(t)}\gamma_x)^2}/2$. 
By incorporating \cref{eq:servergap11} into \cref{eq:servergap01}, we have
\begin{align}\label{eq:servergap1}
    \mathbb{E}\big \|y^{(t+1)} - \widetilde{y}^*{(x^{(t+1)})}\big\|^2 &\leq (1+\delta_t)\mathbb{E}\big\|y^{(t+1)} - \widetilde{y}^*{(x^{(t)})}\big\|^2 + \mathbb{E}\big \|\widetilde{y}^*{(x^{(t)})} - \widetilde{y}^*{(x^{(t+1)})}\big\|^2 \nonumber \\
    & \ \ \ + (\rho^{(t)}\gamma_x)^2\frac{4L_y}{\delta_{t,1}}\mathbb{E}\Big\| \sum_{i=1}^n w_i \widetilde{h}^{(t)}_{x,i}\Big\|^2 + (\rho^{(t)}\gamma_x)^2\frac{L_{yx}}{2}\mathbb{E}\Big\| \sum_{i \in C^{(t)}}^n \widetilde{w}_i h^{(t)}_{x,i} \Big\|^2. 
\end{align}
Similarly, we have 
\begin{align}
    \mathbb{E}\big \|v^{(t+1)} - \widetilde{v}^*{(x^{(t+1)})}\big\|^2 &\leq (1+\delta_t')\mathbb{E}\big\|v^{(t+1)} - \widetilde{v}^*{(x^{(t)})}\big\|^2 + \mathbb{E}\big \|\widetilde{v}^*{(x^{(t)})} - \widetilde{v}^*{(x^{(t+1)})}\big\|^2 \nonumber \\
    & \ \ \ + (\rho^{(t)}\gamma_x)^2\frac{4L_v}{\delta_{t,1}'}\mathbb{E}\Big\| \sum_{i=1}^n w_i \widetilde{h}^{(t)}_{x,i}\Big\|^2 + (\rho^{(t)}\gamma_x)^2\frac{L_{vx}}{2}\mathbb{E}\Big\| \sum_{i \in C^{(t)}}^n \widetilde{w}_i h^{(t)}_{x,i} \Big\|^2,
\end{align}
where $\delta_t' := \delta_{t,1}' + {L_{vx}(r_{\max}^2L_1^2 + L_f^2)(\rho^{(t)}\gamma_x)^2}/2$. 
For the first part in \cref{eq:servergap01}, we have 
\begingroup
\allowdisplaybreaks
\begin{align}\label{eq:servergap2}
    \mathbb{E}\big\|&y^{(t+1)} - \widetilde{y}^*{(x^{(t)})}\big\|^2 \nonumber \\
    &=  \mathbb{E}\bigg\|y^{(t)} - \widetilde{y}^*{(x^{(t)})} - \rho^{(t)}\gamma_y\sum_{i \in C{(t)}}\widetilde{w}_ih_{y,i}^{(t)}\bigg\|^2 \nonumber \\
    &=  \mathbb{E}\bigg\|y^{(t)} - \widetilde{y}^*{(x^{(t)})}\bigg\|^2 + \big(\rho^{(t)}\gamma_y\big)^2\mathbb{E}\bigg\|\sum_{i \in C{(t)}}\widetilde{w}_ih_{y,i}^{(t)}\bigg\|^2 - 2\rho^{(t)}\gamma_y \mathbb{E}\bigg\langle y^{(t)}-\widetilde{y}^*(x^{(t)}), \sum_{i \in C{(t)}}\widetilde{w}_ih_{y,i}^{(t)}\bigg\rangle.
\end{align}
\endgroup
For the last term in \cref{eq:servergap2}, we have 
\begingroup
\allowdisplaybreaks
\begin{align}\label{eq:servergap3}
     - \mathbb{E}\bigg\langle & y^{(t)} - \widetilde{y}^*(x^{(t)}), \sum_{i \in C{(t)}}\widetilde{w}_ih_{y,i}^{(t)}\bigg\rangle \nonumber \\
     & =  - \mathbb{E}\bigg\langle y^{(t)}-\widetilde{y}^*(x^{(t)}), \sum_{i = 1}^nw_i\widetilde{h}_{y,i}^{(t)} - \nabla_y \widetilde{G}(x^{(t)}, y^{(t)}) + \nabla_y \widetilde{G}(x^{(t)}, y^{(t)}) - \nabla_y \widetilde{G}\big(x^{(t)}, \widetilde{y}^*{(x^{(t)})}\big)\bigg\rangle \nonumber \\
     & = - \sum_{i = 1}^nw_i \sum_{k=0}^{\tau_i-1}\frac{a_i^{(t,k)}}{\|a_i^{(t)}\|_1}\mathbb{E}\Big\langle y^{(t)}-\widetilde{y}^*(x^{(t)}), \nabla_y g_i \big(x_i^{(t,k)}, y_i^{(t,k)}\big) - \nabla_y g_i \big(x^{(t)}, y^{(t)}\big)\Big\rangle \nonumber \\
     & \ \ \ \ - \sum_{i = 1}^nw_i \sum_{k=0}^{\tau_i-1}\frac{a_i^{(t,k)}}{\|a_i^{(t)}\|_1}\mathbb{E}\Big\langle y^{(t)}-\widetilde{y}^*(x^{(t)}), \nabla_y g_i \big(x^{(t)}, y^{(t)}\big) - \nabla_y g_i \big(x^{(t)}, y^*{(x^{(t)})}\big)\Big\rangle \nonumber \\
     & \overset{(a)}{\leq} \sum_{i = 1}^nw_i \sum_{k=0}^{\tau_i-1}\frac{a_i^{(t,k)}}{\|a_i^{(t)}\|_1} \mathbb{E}\bigg[\frac{1}{\mu_g}\Big\| \nabla_y g_i \big(x^{(t)}, y^{(t)}\big) - \nabla_y g_i \big(x^{(t)}, y^*{(x^{(t)})}\big)\Big\|^2 + \frac{\mu_g}{2}\big\|y^{(t)}-\widetilde{y}^*(x^{(t)})\big\|^2\bigg]\nonumber \\
     & \ \ \ \ -\sum_{i = 1}^nw_i \sum_{k=0}^{\tau_i-1}\frac{a_i^{(t,k)}}{\|a_i^{(t)}\|_1}{\mu_g}\mathbb{E}\big\|y^{(t)}-\widetilde{y}^*(x^{(t)})\big\|^2 \nonumber \\
     & \overset{(b)}{\leq} \frac{L_1^2}{\mu_g}\sum_{i = 1}^nw_i \sum_{k=0}^{\tau_i-1}\frac{a_i^{(t,k)}}{\|a_i^{(t)}\|_1}
     \mathbb{E}\Big[\big\|x^{(t)} - x_i^{(t,k)}\big\|^2 + \big\|y^{(t)} - y_i^{(t,k)}\big\|^2\Big] - \frac{\mu_g}{2}\mathbb{E}\big\|y^{(t)} - \widetilde{y}^*{(x^{(t)})}\big\|^2,
\end{align}
where (a) follows from the strong convexity of $g_i$;
(b) uses Assumption \ref{as:Lipschitz}. 
\endgroup
Incorporate \cref{eq:servergap3} into \cref{eq:servergap2} and we have
\begin{align}\label{eq:servergap4}
    \mathbb{E}\big\|y^{(t+1)} - \widetilde{y}^*{(x^{(t)})}\big\|^2
    &=  \bigg(1-\rho^{(t)}\gamma_y \mu_g \bigg)\mathbb{E}\Big\|y^{(t)} - \widetilde{y}^*{(x^{(t)})}\Big\|^2 + \big(\rho^{(t)}\gamma_y\big)^2\mathbb{E}\bigg\|\sum_{i \in C{(t)}}\widetilde{w}_ih_{y,i}^{(t)}\bigg\|^2 \nonumber \\
    &\quad + 2\rho^{(t)}\gamma_y \frac{L_1^2}{\mu_g} \sum_{i = 1}^nw_i \sum_{k=0}^{\tau_i-1}\frac{a_i^{(t,k)}}{\|a_i^{(t)}\|_1} \mathbb{E}\Big[\big\|x^{(t)} - x_i^{(t,k)}\big\|^2 + \big\|y^{(t)} - y_i^{(t,k)}\big\|^2\Big]. 
\end{align}
For the first part in \ref{eq:servergap01}, using \Cref{lm:3ieq}, we have
\begin{align}\label{eq:servergap5}
    \mathbb{E}\big\|\widetilde{y}^*(x^{(t)}) - \widetilde{y}^*(x^{(t+1)})\big\|^2 \leq L_y^2\big\|x^{(t)} - x^{(t+1)}\big\|^2 = L_y^2\big(\rho^{(t)}\gamma_x\big)^2 \mathbb{E}\bigg\|\sum_{i \in C^{(t)}}\widetilde{w}_i h_{x,i}^{(t)}\bigg\|^2.
\end{align}
By incorporating \cref{eq:servergap4} and \cref{eq:servergap5} into \cref{eq:servergap01}, we have
\begingroup
\allowdisplaybreaks
\begin{align}
    \mathbb{E}\|&y^{(t+1)} - \widetilde{y}^*(x^{(t+1)})\|^2 -  \mathbb{E}\|y^{(t)} - \widetilde{y}^*(x^{(t)})\|^2 \nonumber\\
    \leq& (\delta_t - \rho^{(t)}\gamma_y \mu_g - \delta_t\rho^{(t)}\gamma_y \mu_g) \mathbb{E}\|y^{(t)} - \widetilde{y}^*(x^{(t)})\|^2 + (1+\delta_t)(\rho^{(t)}\gamma_y)^2\mathbb{E}\Big\|\sum_{i \in C^{(t)}} \widetilde{w}_i h^{(t)}_{y,i}\Big\|^2 \nonumber\\
    & + (1+\delta_t)\rho^{(t)}\gamma_y \frac{2L_1^2}{\mu_g}  \sum_{i = 1}^nw_i \sum_{k=0}^{\tau_i-1}\frac{a^{(t,k)}_i}{\|a_i^{(t)}\|_1} \mathbb{E}\Big[\big\|x^{(t)} - x_i^{(t,k)}\big\|^2 + \big\|y^{(t)} - y_i^{(t,k)}\big\|^2\Big] \nonumber\\
    & + \big(\rho^{(t)}\gamma_x\big)^2\bigg(L_y^2+\frac{L_{yx}}{2}\bigg) \mathbb{E}\bigg\|\sum_{i \in C^{(t)}}\widetilde{w}_i h_{x,i}^{(t)}\bigg\|^2 + (\rho^{(t)}\gamma_x)^2\frac{4L_y}{\delta_{t,1}}\mathbb{E}\Big\| \sum_{i=1}^n w_i \widetilde{h}^{(t)}_{x,i}\Big\|^2 .\nonumber
\end{align}
\endgroup
\noindent
Following similar steps of \cref{eq:servergap3} by replacing $h^{(t)}_{y,i}$ and $\nabla_y g_i$ with $h^{(t)}_{v,i}$ and $\nabla_v R_i$, we can easily have 
\begingroup
\allowdisplaybreaks
\begin{align}
     &- \mathbb{E}\bigg\langle v^{(t)} - \widetilde{v}^*(x^{(t)}), \sum_{i \in C{(t)}}\widetilde{w}_ih_{v,i}^{(t)}\bigg\rangle \nonumber \\
     & = - \sum_{i = 1}^nw_i \sum_{k=0}^{\tau_i-1}\frac{a_i^{(t,k)}}{\|a_i^{(t)}\|_1}\mathbb{E}\Big\langle v^{(t)}-\widetilde{v}^*(x^{(t)}), \nabla_v R_i \big(x_i^{(t,k)}, y_i^{(t,k)}, v_i^{(t,k)}\big) - \nabla_v R_i \big(x^{(t)}, y^{(t)}, v^{(t)}\big)\Big\rangle \nonumber \\
     & \ \ \ \ - \sum_{i = 1}^nw_i \sum_{k=0}^{\tau_i-1}\frac{a_i^{(t,k)}}{\|a_i^{(t)}\|_1}\mathbb{E}\Big\langle v^{(t)}-\widetilde{v}^*(x^{(t)}), \nabla_v R_i \big(x^{(t)}, y^{(t)}, v^{(t)}\big) - \nabla_v R_i \big(x^{(t)}, y^*{(x^{(t)})}, v^{(t)}\big)\Big\rangle \nonumber \\
     & \ \ \ \ - \sum_{i = 1}^nw_i \sum_{k=0}^{\tau_i-1}\frac{a_i^{(t,k)}}{\|a_i^{(t)}\|_1}\mathbb{E}\Big\langle v^{(t)}-\widetilde{v}^*(x^{(t)}), \nabla_v R_i \big(x^{(t)}, y^*{(x^{(t)})}, v^{(t)}\big) - \nabla_v R_i \big(x^{(t)}, y^*{(x^{(t)})}, v^*{(x^{(t)})}\big)\Big\rangle \nonumber \\
     & \overset{(a)}{\leq} \frac{2}{\mu_g}\sum_{i = 1}^nw_i \sum_{k=0}^{\tau_i-1}\frac{a_i^{(t,k)}}{\|a_i^{(t)}\|_1} \mathbb{E}\Big\| \nabla_v R_i \big(x_i^{(t,k)}, y_i^{(t,k)}, v_i^{(t,k)}\big) - \nabla_v R_i \big(x^{(t)}, y^{(t)}, v^{(t)}\big)\Big\|^2 \nonumber \\
     & \ \ \ \ + \frac{2}{\mu_g}\sum_{i = 1}^nw_i \sum_{k=0}^{\tau_i-1}\frac{a_i^{(t,k)}}{\|a_i^{(t)}\|_1} \mathbb{E}\Big\| \nabla_v R_i \big(x^{(t)}, y^{(t)}, v^{(t)}\big) - \nabla_v R_i \big(x^{(t)}, y^*{(x^{(t)})}, v^{(t)}\big)\Big\|^2 \nonumber \\
     & \ \ \ \ -\frac{\mu_g}{2}\sum_{i = 1}^nw_i \sum_{k=0}^{\tau_i-1}\frac{a_i^{(t,k)}}{\|a_i^{(t)}\|_1}\mathbb{E}\big\|v^{(t)}-\widetilde{v}^*(x^{(t)})\big\|^2 \nonumber \\
     & \overset{(b)}{\leq} \frac{2L_R^2}{\mu_g}\sum_{i = 1}^nw_i \sum_{k=0}^{\tau_i-1}\frac{a_i^{(t,k)}}{\|a_i^{(t)}\|_1}
     \mathbb{E}\Big[\big\|x^{(t)} - x_i^{(t,k)}\big\|^2 + \big\|y^{(t)} - y_i^{(t,k)}\big\|^2 + \big\|v^{(t)} - v_i^{(t,k)}\big\|^2\Big] \nonumber \\
     & \ \ \ \ + \frac{2L_R^2}{\mu_g}\mathbb{E}\big\|y^{(t)} - \widetilde{y}^*{(x^{(t)})}\big\|^2 - \frac{\mu_g}{2}\mathbb{E}\big\|v^{(t)} - \widetilde{v}^*{(x^{(t)})}\big\|^2, \nonumber
\end{align}
where (a) uses the strong convexity of $R_i$; (b) follows from \Cref{lm:propertiesofR}. 
Since the projection is non-expansive, we can easily have that 
\begin{align}
    \mathbb{E}\|v^{(t+1)} - v^*(x^{(t)})\|^2 &= \mathbb{E} \| \mathcal{P}_r(v^{(t)} - \rho^{(t)} \gamma_v \sum_{i \in C^{(t)}} \widetilde{w}_i h_{v,i}^{(t)}) - v^*(x^{(t)}) \|^2 \nonumber \\
    &\leq \mathbb{E}||v^{(t)} - v^*(x^{(t)}) - \rho^{(t)}\gamma_v\sum_{i \in C^{(t)}}\widetilde{w}_i h_{v,i}^{(t)}||^2 \nonumber
\end{align}
Thus, we can have the similar result 
\begin{align}
    \mathbb{E}\|&v^{(t+1)} - \widetilde{v}^*(x^{(t+1)})\|^2 -  \mathbb{E}\|v^{(t)} - \widetilde{v}^*(x^{(t)})\|^2 \nonumber\\
    \leq& (\delta_t' - \rho^{(t)}\gamma_v \mu_g - \delta_t'\rho^{(t)}\gamma_v \mu_g) \mathbb{E}\|v^{(t)} - \widetilde{v}^*(x^{(t)})\|^2 + (1+\delta_t')(\rho^{(t)}\gamma_v)^2\mathbb{E}\Big\|\sum_{i \in C^{(t)}} \widetilde{w}_i h^{(t)}_{v,i}\Big\|^2 \nonumber\\
    & + (1+\delta_t')\rho^{(t)}\gamma_v \frac{4L_R^2}{\mu_g}\sum_{i = 1}^n w_i \sum_{k=0}^{\tau_i-1}\frac{a^{(t,k)}_i}{\|a_i^{(t)}\|_1} \mathbb{E}\Big[\big\|x^{(t)} - x_i^{(t,k)}\big\|^2 + \big\|y^{(t)} - y_i^{(t,k)}\big\|^2 + \big\|v^{(t)} - v_i^{(t,k)}\big\|^2\Big] \nonumber\\
    & + (1+\delta_t')\rho^{(t)}\gamma_v \frac{4L_R^2}{\mu_g}\mathbb{E}\big\|y^{(t)} - \widetilde{y}^*{(x^{(t)})}\big\|^2
    + \big(\rho^{(t)}\gamma_x\big)^2\bigg(L_v^2+\frac{L_{vx}}{2}\bigg) \mathbb{E}\bigg\|\sum_{i \in C^{(t)}}\widetilde{w}_i h_{x,i}^{(t)}\bigg\|^2 \nonumber \\
    &  + (\rho^{(t)}\gamma_x)^2\frac{4L_v}{\delta_{t,1}'}\mathbb{E}\Big\| \sum_{i=1}^n w_i \widetilde{h}^{(t)}_{x,i}\Big\|^2 .\nonumber
\end{align}
\endgroup
Then, the proof is complete. 
\end{proof}

\subsection{Descent in the Lyapunov Function}\label{eq:setting}
We define the Lyapunov function as 
\begin{align}\label{def:lyapunov}
    \Psi(x^{(t)}) := \mathbb{E}\Big[\widetilde{\Phi}(x^{(t)})\Big] + K_1\mathbb{E}\|y^{(t)} - \widetilde{y}^*{(x^{(t)})}\|^2 + K_2\mathbb{E}\|v^{(t)} - \widetilde{v}^*{(x^{(t)})}\|^2,
\end{align}
where the coefficients are given by
\begin{align}
    K_1 = \bigg[\frac{40(L_1^2+r^2L_2^2)}{\mu_g} + \frac{384L_R^2L_1^2}{\mu_g^3}\bigg]\frac{1}{c_{\gamma_y}} ,\quad 
    K_2 = \frac{6L_1^2}{\mu_g c_{\gamma_v}};\quad  \delta_t = \frac{\rho^{(t)}\gamma_y\mu_g}{4}, \quad
    \delta_t' = \frac{\rho^{(t)}\gamma_v\mu_g}{4}. \nonumber
\end{align}
For server and local stepsizes, we choose 
\begin{align}\label{eq:stepsize}
    \gamma_x = \mathcal{O}\Big(\sqrt{\frac{P}{\bar{\tau}T}}\Big), \quad \gamma_y &= c_{\gamma_y}\gamma_x, \quad \gamma_v = c_{\gamma_v}\gamma_x, \nonumber \\
    \eta_x = \mathcal{O}\Big(\frac{1}{\bar{\tau}\sqrt{T}}\Big),\  \eta_y = \mathcal{O}&\Big(\frac{1}{\bar{\tau}\sqrt{T}}\Big),\  \eta_v = \mathcal{O}\Big(\frac{1}{\bar{\tau}\sqrt{T}}\Big).
\end{align}
where $c_{\gamma_y} \geq \frac{256K_1L_y}{\mu_g}$ and $c_{\gamma_v} \geq \frac{256K_2L_v}{\mu_g}$. 
We also set 
\begin{align}\label{eq:restriction}
    \rho^{(t)}\gamma_x \leq \min\bigg\{&\frac{\mu_g}{12L_1^2c_{\gamma_v}}, \frac{\mu_g}{36L_1^2c_{\gamma_v}}\Big(\frac{2\beta_{\max}}{P}+3\Big)^{-1}, \frac{P\mu_gc_{\gamma_v}}{36L_1^2\beta_{\max}}\Big(L_v^2+\frac{L_{vx}}{2}\Big)^{-1}, \frac{P}{6L_{\Phi}^2\beta_{\max}}, \frac{4}{L_{\Phi}^2},\frac{4}{c_{\gamma_y}}, \frac{4}{c_{\gamma_v}}, \nonumber \\
    & \frac{\mu_g}{12c_{\gamma_y}}\Big[\big(\frac{2\beta_{\max}}{P}+3\big)L_1^2 + \frac{2\beta_{\max}\beta_{gh}^2}{P}+3L_1^2\Big]^{-1}, \frac{Pc_{\gamma_y}\mu_g}{36\beta_{\max}(L_1^2+r^2L_2^2)}(L_v^2+\frac{L_{yx}}{2})^{-1}, \nonumber \\
    &\frac{c_{\gamma_y}\mu_g}{4L_{yx}(L_f^2+r_{\max}^2L_1^2)}, \frac{c_{\gamma_v}\mu_g}{4L_{vx}(L_f^2+r_{\max}^2L_1^2)}, \frac{1}{8}\Big(K_1(L_y^2+\frac{L_{yx}}{2})+K_2(L_v^2+\frac{L_{vx}}{2})\Big)^{-1} \bigg\};\nonumber \\
    &\eta_x^2 \bar{\tau} \leq \frac{1}{2c_a\alpha_{\max}L_1^2},\quad \eta_y^2 \bar{\tau} \leq \min\bigg\{\frac{1}{4c_a\alpha_{\max}L_1^2}, \frac{\mu_g^2}{96c_a\alpha_{\max}L_1^4}\bigg\}. 
\end{align}
To simplify our proof, we define constants:
\begin{align}\label{eq:constants}
    \bar{\tau} :&= \frac{1}{n}\sum_{i=1}^n \tau_i ,\quad \bar{\rho} := \frac{1}{T}\sum_{t=0}^{T-1}\rho^{(t)},  \nonumber \\
    M_1 :&= 2\big[\frac{L_{\Phi}^2}{2}+K_1\big(L_y^2+\frac{L_{yx}}{2}\big)+K_2\big(L_v^2+\frac{L_{vx}}{2}\big) + 2K_2c_{\gamma_v}^2\big]\beta_{\max}(L_f^2+r_{\max}^2L_1^2) + K_1c_{\gamma_y}^2\beta_{\max}\sigma_{gh}^2 \nonumber \\
    M_2 :&= 2\big[\frac{L_{\Phi}^2}{2}+K_1\big(L_y^2+\frac{L_{yx}}{2}\big)+K_2\big(L_v^2+\frac{L_{vx}}{2}\big) + 2K_2c_{\gamma_v}^2\big](\sigma^2_{f} + r_{\max}^2\sigma^2_{gg}) + K_1c_{\gamma_y}^2\sigma_{g}^2, \nonumber \\
    M_{3} :&= 3\bigg[\frac{3}{2}+ 24K_2c_{\gamma_v} + \frac{\beta_{\max}}{P}\Big(\frac{17}{4} + 16K_2c_{\gamma_v}\Big) \bigg](L_1^2+r^2L_2^2) \nonumber \\
    &\quad + K_1\bigg[4c_{\gamma_y}\frac{L_1^2}{\mu_g} + 8c_{\gamma_v}\Big(\frac{2\beta_{\max}}{P}+3\Big)L_1^2\bigg] + K_2c_{\gamma_v} \frac{4L_R^2}{\mu_g}. 
\end{align}
We apply \Cref{lm:obj_function} and \Cref{lm:servergap} to \cref{def:lyapunov}, and incorporate \Cref{lm:boundofAE}, then we have 
\begingroup
\allowdisplaybreaks
\begin{align}\label{eq:Lyapunov1}
    &\Psi(x^{(t+1)})-\Psi(x^{(t)}) \nonumber \\
    &= \mathbb{E}\Big[\widetilde{\Phi}(x^{(t+1)}) - \widetilde{\Phi}(x^{(t)})\Big] + K_1\mathbb{E}\Big[\|y^{(t+1)} - \widetilde{y}^*{(x^{(t+1)})}\|^2 - \|y^{(t)} - \widetilde{y}^*{(x^{(t)})}\|^2\Big] \nonumber \\ 
    &\quad + K_2\mathbb{E}\Big[\|v^{(t+1)} - \widetilde{v}^*{(x^{(t+1)})}\|^2 - \|v^{(t)} - \widetilde{v}^*{(x^{(t)})}\|^2\Big] \nonumber \\
    & \overset{(a)}{=} -\frac{\rho^{(t)}\gamma_x}{2} \mathbb{E}\Big\|\nabla \widetilde{\Phi}(x^{(t)})\Big\|^2  \nonumber \\
    &\quad + 3(\rho^{(t)}\gamma_x)\bigg[\frac{3}{2}+ 24K_2c_{\gamma_v} + \frac{\beta_{\max}}{P}\Big(\frac{17}{4} + 16K_2c_{\gamma_v}\Big) \bigg](L_1^2+r^2L_2^2)  \Big((\eta_x^2+\eta_v^2)\bar{\tau}\sigma_{M1}^2 + 2\eta_y^2\bar{\tau}\sigma_{M2}^2\Big) \nonumber \\
    &\quad + \rho^{(t)}\gamma_xK_1\bigg[4c_{\gamma_y}\frac{L_1^2}{\mu_g} + 8c_{\gamma_v}\Big(\frac{2\beta_{\max}}{P}+3\Big)L_1^2\bigg]\Big(\eta_x^2\bar{\tau}\sigma_{M1}^2 +2\eta_y^2\bar{\tau}\sigma_{M2}^2\Big) \nonumber \\
    &\quad + \rho^{(t)}\gamma_xK_2c_{\gamma_v} \frac{4L_R^2}{\mu_g}\Big((\eta_x^2+\eta_v^2)\bar{\tau}\sigma_{M1}^2 + 2\eta_y^2\bar{\tau}\sigma_{M2}^2\Big) \nonumber \\
    &\quad + (\rho^{(t)}\gamma_x)^2\bigg[\frac{L_{\Phi}^2}{2}+K_1\big(L_y^2+\frac{L_{yx}}{2}\big)+K_2\big(L_v^2+\frac{L_{vx}}{2}\big) + 2K_2c_{\gamma_v}^2\bigg]\frac{2n}{P}\sum_{i=1}^n\frac{w_i^2\|a_i^{(t)}\|^2_2}{\|a_i^{(t)}\|^2_1} (\sigma^2_{f} + r_i^2\sigma^2_{gg}) \nonumber \\
    &\quad + (\rho^{(t)}\gamma_x)^2 K_1c_{\gamma_y}^2 \frac{n}{P}\sum_{i=1}^n\frac{w_i^2\|a_i^{(t)}\|^2_2}{\|a_i^{(t)}\|^2_1} \sigma^2_{g} \nonumber \\
    &\quad + (\rho^{(t)}\gamma_x)^2\bigg[\frac{L_{\Phi}^2}{2}+K_1\big(L_y^2+\frac{L_{yx}}{2}\big)+K_2\big(L_v^2+\frac{L_{vx}}{2}\big) + 2K_2c_{\gamma_v}^2\bigg]\frac{4(n-P)\beta_{\max}}{P(n-1)}(L_f^2+r_{\max}^2L_1^2) \nonumber \\
    &\quad + (\rho^{(t)}\gamma_x)^2 K_1c_{\gamma_y}^2\frac{2(n-P)\beta_{\max}}{P(n-1)}\sigma_{gh}^2
\end{align}
where (a) holds when we set $K_1$, $K_2$, $\delta_t$ and $\delta_t'$ as \cref{eq:constants}. 
\endgroup
We rearrange \cref{eq:Lyapunov1} and separate it into three error terms. Here, we define the error from full synchronization as 
\begingroup
\allowdisplaybreaks
\begin{align}\label{eq:errorsync}
    \epsilon_{sync}^{(t)} :&= (\rho^{(t)}\gamma_x)^2\bigg[\frac{L_{\Phi}^2}{2}+K_1\big(L_y^2+\frac{L_{yx}}{2}\big)+K_2\big(L_v^2+\frac{L_{vx}}{2}\big) + 2K_2c_{\gamma_v}^2\bigg]\frac{2n}{P}\sum_{i=1}^n\frac{w_i^2\|a_i^{(t)}\|^2_2}{\|a_i^{(t)}\|^2_1} (\sigma^2_{f} + r_i^2\sigma^2_{gg}) \nonumber \\
    &\quad + (\rho^{(t)}\gamma_x)^2 K_1c_{\gamma_y}^2 \frac{n}{P}\sum_{i=1}^n\frac{w_i^2\|a_i^{(t)}\|^2_2}{\|a_i^{(t)}\|^2_1} \sigma^2_{g} 
\end{align}
and the error from partial participation as
\begin{align}\label{eq:errorpart}
    \epsilon_{part}^{(t)} :&= (\rho^{(t)}\gamma_x)^2\bigg[\frac{L_{\Phi}^2}{2}+K_1\big(L_y^2+\frac{L_{yx}}{2}\big)+K_2\big(L_v^2+\frac{L_{vx}}{2}\big) + 2K_2c_{\gamma_v}^2\bigg]\frac{4(n-P)\beta_{\max}}{P(n-1)}(L_f^2+r_{\max}^2L_1^2) \nonumber \\
    &\quad + (\rho^{(t)}\gamma_x)^2 K_1c_{\gamma_y}^2\frac{2(n-P)\beta_{\max}}{P(n-1)}\sigma_{gh}^2
\end{align}
\endgroup
Next, we define the error due to client drifts as 
\begingroup
\allowdisplaybreaks
\begin{align}\label{eq:errorclientdrift}
    \epsilon_{cd}^{(t)} :&= 3(\rho^{(t)}\gamma_x)\bigg[\frac{3}{2}+ 24K_2c_{\gamma_v} + \frac{\beta_{\max}}{P}\Big(\frac{17}{4} + 16K_2c_{\gamma_v}\Big) \bigg](L_1^2+r^2L_2^2)  \Big((\eta_x^2+\eta_v^2)\bar{\tau}\sigma_{M1}^2 + 2\eta_y^2\bar{\tau}\sigma_{M2}^2\Big) \nonumber \\
    &\quad + \rho^{(t)}\gamma_xK_1\bigg[4c_{\gamma_y}\frac{L_1^2}{\mu_g} + 8c_{\gamma_v}\Big(\frac{2\beta_{\max}}{P}+3\Big)L_1^2\bigg]\Big(\eta_x^2\bar{\tau}\sigma_{M1}^2 +2\eta_y^2\bar{\tau}\sigma_{M2}^2\Big) \nonumber \\
    &\quad + \rho^{(t)}\gamma_xK_2c_{\gamma_v} \frac{4L_R^2}{\mu_g}\Big((\eta_x^2+\eta_v^2)\bar{\tau}\sigma_{M1}^2 + 2\eta_y^2\bar{\tau}\sigma_{M2}^2\Big)
\end{align}
\endgroup
Then we have the Descent in the Lyapunov function as 
\begin{align}\label{eq:Lyapunov2}
    \Psi(x^{(t+1)})-\Psi(x^{(t)}) \leq -\frac{\rho^{(t)}\gamma_x}{2} \mathbb{E}\Big\|\nabla \widetilde{\Phi}(x^{(t)})\Big\|^2 + \epsilon_{part}^{(t)} + \epsilon_{sync}^{(t)} + \epsilon_{cd}^{(t)}. 
\end{align}

\subsection{Proof of \Cref{th:theorem1}}
\begin{proof}
Summing \cref{eq:Lyapunov2}, we have 
\begin{align}\label{eq:finalmin1}
    \min_t \mathbb{E}\Big\|\nabla \widetilde{\Phi}(x^{(t)})\Big\|^2 
    &\leq \frac{1}{T}\sum_{t=0}^{T-1} \mathbb{E}\Big\|\nabla \widetilde{\Phi}(x^{(t)})\Big\|^2 \nonumber \\
    &\overset{(a)}{\leq} 2\times\frac{1}{T}\sum_{t=0}^{T-1} \frac{\rho^{(t)}}{\bar{\rho}}\mathbb{E}\Big\|\nabla \widetilde{\Phi}(x^{(t)})\Big\|^2 \nonumber \\
    & \leq 2\times\frac{2}{T}\Big(\frac{\Psi(x^{(0)})}{\bar{\rho}\gamma_x} - \frac{\Psi(x^{(T)})}{\bar{\rho}\gamma_x}\Big) + 2\times\frac{1}{T}\sum_{t=0}^{T-1}\frac{2}{\bar{\rho}\gamma_x}\Big(\epsilon_{part}^{(t)} + \epsilon_{sync}^{(t)} + \epsilon_{cd}^{(t)}\Big),
\end{align}
where (a) uses $\rho^{(t)} \in [\frac{1}{2}\bar{\rho}, \frac{3}{2}\bar{\rho}]$. 
For the error with partial participation in \cref{eq:errorsync}, we have 
\begingroup
\allowdisplaybreaks
\begin{align}
    \frac{1}{T}\sum_{t=0}^{T-1}\frac{2}{\bar{\rho}\gamma_x}\epsilon_{part}^{(t)} 
    \overset{(a)}{\leq} \frac{1}{T}\sum_{t=0}^{T-1}\frac{4}{\bar{\rho}\gamma_x}(\rho^{(t)}\gamma_x)^2\frac{(n-P)}{P(n-1)} M_1 
    \overset{(b)}{\leq} 6\bar{\rho}\gamma_x\frac{(n-P)}{P(n-1)} M_1
\end{align}
where (a) simplifies the problem by defining 
\begin{align}
    M_1 :=& 2\big[\frac{L_{\Phi}^2}{2}+K_1\big(L_y^2+\frac{L_{yx}}{2}\big)+K_2\big(L_v^2+\frac{L_{vx}}{2}\big) + 2K_2c_{\gamma_v}^2\big]\beta_{\max}(L_f^2+r_{\max}^2L_1^2) + K_1c_{\gamma_y}^2\beta_{\max}\sigma_{gh}^2 \nonumber
\end{align}
and (b) holds due to $\rho^{(t)} \in [\frac{1}{2}\bar{\rho}, \frac{3}{2}\bar{\rho}]$. 
By the definition of $\rho^{(t)}$ in \cref{eq:localAEofq}, we can easily see that $\bar{\rho} = \mathcal{O}(\bar{\tau})$. Then we have
\begin{align}\label{eq:orderofsync}
    \frac{1}{T}\sum_{t=0}^{T-1}\frac{2}{\bar{\rho}\gamma_x}\epsilon_{part}^{(t)} 
    \leq 6M_1 \frac{n-P}{P(n-1)}\bar{\rho}\gamma_x 
    = \mathcal{O}\Big(\frac{M_1(n-P)}{n}\sqrt{\frac{\bar{\tau}}{PT}}\Big)
\end{align}
by taking $\gamma_x = \mathcal{O}\Big(\sqrt{\frac{P}{\bar{\tau}T}}\Big)$. 
\endgroup
Similarly, for the error with full synchronization in \cref{eq:errorpart}, we have 
\begingroup
\allowdisplaybreaks
\begin{align}
    \frac{1}{T}\sum_{t=0}^{T-1}\frac{2}{\bar{\rho}\gamma_x}\epsilon_{sync}^{(t)} 
    &\overset{(a)}{\leq} \frac{1}{T}\sum_{t=0}^{T-1}\frac{2}{\bar{\rho}\gamma_x}(\rho^{(t)}\gamma_x)^2\frac{n}{P}\sum_{i=1}^n\frac{w_i^2\|a_i^{(t)}\|^2_2}{\|a_i^{(t)}\|^2_1}M_2 \nonumber \\
    &\overset{(b)}{\leq} \frac{3}{T}\sum_{t=0}^{T-1}\rho^{(t)}\gamma_x\frac{\beta_{\max}}{P}\sum_{i=1}^nw_i\frac{\alpha_{\max}}{c_{a}'\bar{\tau}\alpha_{\min}}M_2 \nonumber \\
    & \leq \frac{3\alpha_{\max}\beta_{\max}}{c_a'\alpha_{\min}} \frac{\bar{\rho}}{P\bar{\tau}}\gamma_xM_2
\end{align}
where (a) simplifies the problem by defining 
\begin{align}
    M_2 := 2\big[\frac{L_{\Phi}^2}{2}+K_1\big(L_y^2+\frac{L_{yx}}{2}\big)+K_2\big(L_v^2+\frac{L_{vx}}{2F}\big) + 2K_2c_{\gamma_v}^2\big](\sigma^2_{f} + r_{\max}^2\sigma^2_{gg}) + K_1c_{\gamma_y}^2\sigma_{g}^2, \nonumber
\end{align}
and (b) holds since $c_{a}'\bar{\tau}\alpha_{\min} \leq \|a_i^{(t)}\|_1 \leq c_{a}\bar{\tau}\alpha_{\max}$. 
Then we have 
\begin{align}\label{eq:orderofpart}
    \frac{1}{T}\sum_{t=0}^{T-1}\frac{2}{\bar{\rho} \gamma_x}\epsilon_{sync}^{(t)} \leq \frac{3\alpha_{\max}\beta_{\max}}{c_a'\alpha_{\min}} \frac{\bar{\rho}}{P\bar{\tau}}\gamma_xM_2 = \mathcal{O}\Big(\frac{M_2}{\sqrt{P\bar{\tau}T}}\Big)
\end{align}
by taking $\gamma_x = \mathcal{O}\Big(\sqrt{\frac{P}{\bar{\tau}T}}\Big)$. 
\endgroup
Similarly, for the error due to client drifts in \cref{eq:errorclientdrift}, we have 
\begingroup
\allowdisplaybreaks
\begin{align}\label{eq:errorclientdrift1}
\frac{1}{T}\sum_{t=0}^{T-1}\frac{2}{\bar{\rho}\gamma_x}\epsilon_{cd}^{(t)} &\leq \frac{1}{T}\sum_{t=0}^{T-1}\frac{2}{\bar{\rho}\gamma_x}{\rho^{(t)}}\gamma_x\Big((\eta_x^2+\eta_v^2)\bar{\tau}\sigma_{M1}^2 + 2\eta_y^2\bar{\tau}\sigma_{M2}^2\Big)M_3 \nonumber \\
& \leq 3\Big((\eta_x^2+\eta_v^2)\bar{\tau}\sigma_{M1}^2 + 2\eta_y^2\bar{\tau}\sigma_{M2}^2\Big)M_3
\end{align}
We define constant $M_{3}$ as
\begin{align}
    M_{3} :=& 3\bigg[\frac{3}{2}+ 24K_2c_{\gamma_v} + \frac{\beta_{\max}}{P}\Big(\frac{17}{4} + 16K_2c_{\gamma_v}\Big) \bigg](L_1^2+r^2L_2^2) \nonumber \\
    &+ K_1\bigg[4c_{\gamma_y}\frac{L_1^2}{\mu_g} + 8c_{\gamma_v}\Big(\frac{2\beta_{\max}}{P}+3\Big)L_1^2\bigg] + K_2c_{\gamma_v} \frac{4L_R^2}{\mu_g} \nonumber
\end{align}
Then we have 
\begingroup
\allowdisplaybreaks
\begin{align}\label{eq:orderofcd}
    \frac{1}{T}\sum_{t=0}^{T-1}\frac{2}{\bar{\rho}\gamma_x}\epsilon_{cd}^{(t)} \leq 3\Big((\eta_x^2+\eta_v^2)\bar{\tau}\sigma_{M1}^2 + 2\eta_y^2\bar{\tau}\sigma_{M2}^2\Big)M_3 = \mathcal{O}\bigg(\frac{M_3}{\bar{\tau}T}\bigg),
\end{align}
\endgroup
by 
setting  $\eta_x = \mathcal{O}\Big(\frac{1}{\bar{\tau}\sqrt{T}}\Big)$, $\eta_y = \mathcal{O}\Big(\frac{1}{\bar{\tau}\sqrt{T}}\Big)$ and $\eta_v = \mathcal{O}\Big(\frac{1}{\bar{\tau}\sqrt{T}}\Big)$. 
\endgroup
Last but not least, for the first tern on the right-hand side of \cref{eq:finalmin1}, we have
\begin{align}\label{eq:finalmin2}
    \frac{2}{T}\Big(\frac{\Psi(x^{(0)})}{\bar{\rho}\gamma_x} - \frac{\Psi(x^{(T)})}{\bar{\rho}\gamma_x}\Big) = \mathcal{O}\Big(\sqrt{\frac{1}{P\bar{\tau}T}}\Big)
\end{align}
when we take $\gamma_x = \mathcal{O}\Big(\sqrt{\frac{P}{\bar{\tau}T}}\Big)$. Finally, by combining \cref{eq:orderofsync}, \cref{eq:orderofpart}, \cref{eq:orderofcd} and \cref{eq:finalmin2}, we have
\begin{align}\label{eq:finalorder}
    \min_t \mathbb{E}\Big\|\nabla \widetilde{\Phi}(x^{(t)})\Big\|^2 = \mathcal{O}\Big(\frac{M_1(n-P)}{n}\sqrt{\frac{\bar{\tau}}{PT}}\Big) + \mathcal{O}\Big(M_2\sqrt{\frac{1}{P\bar{\tau}T}}\Big) + \mathcal{O}\Big(\frac{M_3}{\bar{\tau}T}\Big) . 
\end{align}
Then, the first part of the proof of \Cref{th:theorem1} is complete. Next, we provide the detail of complexity analysis. 
First, for nearly full client participation, which means that $\frac{n-P}{n-1} \approx 0$, we can easily have 
\begin{align}\label{eq:finalorder1}
    \min_t \mathbb{E}\Big\|\nabla \widetilde{\Phi}(x^{(t)})\Big\|^2 =  \mathcal{O}\Big(M_2\sqrt{\frac{1}{n\bar{\tau}T}}\Big) + \mathcal{O}\Big(\frac{M_3}{\bar{\tau}T}\Big) \leq \epsilon. 
\end{align}
As a result, we can see that the per-client sample complexity $\bar{\tau}T = \mathcal{O}(n^{-1}\epsilon^{-2})$. Since the local update rounds contribute to saving communication rounds, 
we take $\bar{\tau} = \mathcal{O}(\frac{T}{n})$, then we have $T = \mathcal{O}(\epsilon^{-1})$.
Second, for partial client participation, we have 
\begin{align}\label{eq:finalorder2}
    \min_t \mathbb{E}\Big\|\nabla \widetilde{\Phi}(x^{(t)})\Big\|^2 = \mathcal{O}\Big(\frac{M_1(n-P)}{n}\sqrt{\frac{\bar{\tau}}{PT}}\Big) + \mathcal{O}\Big(M_2\sqrt{\frac{1}{P\bar{\tau}T}}\Big) + \mathcal{O}\Big(\frac{M_3}{\bar{\tau}T}\Big) \leq \epsilon. 
\end{align}
when participating client number $P$ is not close to full client number $n$, we can find that the local update date rounds will increase the partial participation error, which may affect the whole convergence rate. As a consequence, taking $\bar{\tau} = \mathcal{O}(\frac{n}{n-P})$ will result in the best performance. We can see that 
\begin{align}\label{eq:finalorder3}
    \min_t \mathbb{E}\Big\|\nabla \widetilde{\Phi}(x^{(t)})\Big\|^2 = \mathcal{O}\Big((M_1+M_2)\sqrt{\frac{n-P}{nPT}}\Big) + \mathcal{O}\Big(\frac{M_3}{T}\Big) \leq \epsilon.
\end{align}
Since $T \gg P$, we have the per-client sample complexity $\bar{\tau}T = \mathcal{O}(P^{-1}\epsilon^{-2})$ and communication rounds $T = \mathcal{O}(P^{-1}\epsilon^{-2})$. Then, we finish the proof of \Cref{th:theorem1}. 
\end{proof}

\section{Proof of \Cref{th:theorem2}}
\begin{proof}
Recall the definitions: 
\begin{align}
    &\Phi(x^{(t)}) = F(x^{(t)}, y^*(x^{(t)})) = \sum_{i=1}^n p_i f_i(x^{(t)}, y^*(x^{(t)})), \nonumber \\
    &\widetilde{\Phi}(x^{(t)}) = \widetilde{F}(x^{(t)},  \widetilde{y}^*(x^{(t)})) = \sum_{i=1}^n w_i f_i(x^{(t)}, \widetilde{y}^*(x^{(t)})). \nonumber
\end{align}
Then we have 
\begingroup
\allowdisplaybreaks
\begin{align}\label{eq:corollary1}
    \nabla &\Phi(x^{(t)}) -  \nabla \widetilde{\Phi}(x^{(t)}) \nonumber \\
    & = \sum_{i=1}^n \Big[p_i \bar{\nabla}f_i\big(x^{(t)}, y^*(x^{(t)}), v^*(x^{(t)})\big) - w_i \bar{\nabla}f_i\big(x^{(t)}, \widetilde{y}^*(x^{(t)}), \widetilde{v}^*(x^{(t)})\big)\Big] \nonumber \\
    & = \sum_{i=1}^n p_i\Big[\bar{\nabla}f_i\big(x^{(t)}, y^*(x^{(t)}), v^*(x^{(t)})\big) -  \bar{\nabla}f_i\big(x^{(t)}, \widetilde{y}^*(x^{(t)}), \widetilde{v}^*(x^{(t)})\big)\Big] \nonumber \\
    & \quad + \sum_{i=1}^n (p_i-w_i)\bar{\nabla}f_i\big(x^{(t)}, \widetilde{y}^*(x^{(t)}), \widetilde{v}^*(x^{(t)})\big) \nonumber \\
    & = \bar{\nabla}F\big(x^{(t)}, y^*(x^{(t)}), v^*(x^{(t)})\big) - \bar{\nabla}F\big(x^{(t)}, \widetilde{y}^*(x^{(t)}), \widetilde{v}^*(x^{(t)})\big) + \sum_{i=1}^n \frac{p_i-w_i}{w_i}w_i \bar{\nabla}f_i\big(x^{(t)}, \widetilde{y}^*(x^{(t)}), \widetilde{v}^*(x^{(t)})\big) .
\end{align}
\endgroup
By taking the norm of \cref{eq:corollary1} and using Assumption \ref{as:Lipschitz}, we have 
\begin{align}
    \Big\| \nabla &\Phi(x^{(t)}) -  \nabla \widetilde{\Phi}(x^{(t)})\Big\|^2 \nonumber \\
    &\overset{(a)}{\leq} 6\big(L_1^2 + r^2L_2^2\big) \big\|y^*(x^{(t)}) - \widetilde{y}^*(x^{(t)})\big\|^2 + 6 L_1^2\big\|v^*(x^{(t)}) - \widetilde{v}^*(x^{(t)})\big\|^2 \nonumber \\
    & \quad + 2\bigg\|\sum_{i=1}^n \frac{p_i-w_i}{w_i}w_i \bar{\nabla}f_i\big(x^{(t)}, \widetilde{y}^*(x^{(t)}), \widetilde{v}^*(x^{(t)})\big) \bigg\|^2 \nonumber \\
    &\overset{(b)}{\leq} 6\big(L_1^2 + r^2L_2^2\big) \big\|y^*(x^{(t)}) - \widetilde{y}^*(x^{(t)})\big\|^2 + 6 L_1^2\big\|v^*(x^{(t)}) - \widetilde{v}^*(x^{(t)})\big\|^2 \nonumber \\
    & \quad + 2\bigg\| \frac{\beta_{\max}'-\beta_{\min}}{\beta_{\min}} \sum_{i=1}^n w_i \bar{\nabla}f_i\big(x^{(t)}, \widetilde{y}^*(x^{(t)}), \widetilde{v}^*(x^{(t)})\big) \bigg\|^2 \nonumber \\
    & = 6\Big(L_1^2 + r^2L_2^2 \Big)\big\|y^*(x^{(t)}) - \widetilde{y}^*(x^{(t)})\big\|^2 + 6 L_1^2\big\|v^*(x^{(t)}) - \widetilde{v}^*(x^{(t)})\big\|^2 + 2\Big(\frac{\beta_{\max}'-\beta_{\min}}{\beta_{\min}}\Big)^2\Big\|\nabla \widetilde{\Phi}(x)\Big\|^2,
\end{align}
where (a) uses Assumption \ref{as:Lipschitz}; (b) uses the setting $\frac{\beta_{\min}}{n} \leq w_i \leq \frac{\beta_{\max}}{n}$ and $\frac{\beta_{\min}'}{n} \leq p_i \leq \frac{\beta_{\max}'}{n}$ for all $i = 1,2, ... n$. Then we can see that 
\begingroup
\allowdisplaybreaks
\begin{align}\label{eq:th2}
    \min_t \Big\|\nabla \Phi(x^{(t)})\Big\|^2 &\leq \frac{1}{T}\sum_{t=0}^{T-1}\Big\|\nabla \Phi(x^{(t)})\Big\|^2 \nonumber \\
    &\leq \frac{2}{T}\sum_{t=0}^{T-1}\bigg( \Big\| \nabla \Phi(x^{(t)}) -  \nabla \widetilde{\Phi}(x^{(t)})\Big\|^2 + \Big\|\nabla \widetilde{\Phi}(x^{(t)})\Big\|^2\bigg) \nonumber \\
    & \leq \frac{2}{T}\sum_{t=0}^{T-1}\bigg[1+2\Big(\frac{\beta_{\max}'-\beta_{\min}}{\beta_{\min}}\Big)^2\bigg]\Big\|\nabla \widetilde{\Phi}(x^{(t)})\Big\|^2  \nonumber \\
    & \quad + 12\Big(L_1^2 + r^2L_2^2\Big)\frac{1}{T}\sum_{t=0}^{T-1}\big\|y^*(x^{(t)}) - \widetilde{y}^*(x^{(t)})\big\|^2 + 12 L_1^2\frac{1}{T}\sum_{t=0}^{T-1}\big\|v^*(x^{(t)}) - \widetilde{v}^*(x^{(t)})\big\|^2.
\end{align}
By taking $w_i = p_i$ in \cref{eq:th2} for all $i$, we have $y^*(x^{(t)}) = \widetilde{y}^*(x^{(t)})$ and $v^*(x^{(t)}) = \widetilde{v}^*(x^{(t)})$, which results in 
\begin{align}
    \min_t \Big\|\nabla \Phi(x^{(t)})\Big\|^2 &\leq \frac{2}{T}\bigg[1+2\Big(\frac{\beta_{\max}'-\beta_{\min}}{\beta_{\min}}\Big)^2\bigg]\sum_{t=0}^{T-1}\Big\|\nabla \widetilde{\Phi}(x^{(t)})\Big\|^2 \nonumber \\
    & = \mathcal{O}\Big(\frac{M_1(n-P)}{(n-1)}\sqrt{\frac{\bar{\tau}}{PT}}\Big) + \mathcal{O}\Big(M_2\sqrt{\frac{1}{P\bar{\tau}T}}\Big) + \mathcal{O}\Big(\frac{M_4}{\bar{\tau}T}\Big).\nonumber
\end{align}
Since we have the convergence rate of SimFBO and ShroFBO, the complexity analysis is the same as the complexity analysis \Cref{th:theorem1}. Thus, we finish the proof. 
\endgroup
\end{proof}


\vspace*{1cm}

\end{document}